\documentclass[11pt]{article}

\usepackage[utf8]{inputenc}
\usepackage[T1]{fontenc}
\usepackage{microtype} %
\usepackage[margin=1in]{geometry}

\usepackage{natbib}

\usepackage{xparse}
\usepackage{xspace} 
\usepackage{booktabs}
\usepackage{sidecap} 

\usepackage{amsmath, amsthm, amssymb}
\usepackage{mathtools}
\usepackage{thmtools}
\usepackage{authblk}

\usepackage[shortlabels]{enumitem}

\usepackage{algorithm}
\usepackage[noend]{algpseudocode}
\usepackage[algo2e,noend,ruled,linesnumbered]{algorithm2e} %

\usepackage{dsfont}
\usepackage{multirow}

\usepackage[colorlinks=true]{hyperref}
\hypersetup{
citecolor = blue,
}
\usepackage[capitalise]{cleveref}

\usepackage{color-edits}

\usepackage{import}

\newcommand{\xhdr}[1]{\vspace{1mm} \noindent{\textbf{#1}}}
\newcommand{\squishlist}{
 \begin{list}{$\bullet$}
  { \setlength{\itemsep}{0pt}
     \setlength{\parsep}{3pt}
     \setlength{\topsep}{3pt}
     \setlength{\partopsep}{0pt}
     \setlength{\leftmargin}{1.5em}
     \setlength{\labelwidth}{1em}
     \setlength{\labelsep}{0.5em} } }

\newcommand{\squishend}{
  \end{list}  }

\theoremstyle{plain}
\newtheorem{lemma}{Lemma}[section]
\newtheorem{theorem}[lemma]{Theorem}

\newtheorem{remark}[lemma]{Remark}

\theoremstyle{definition}
\newtheorem{definition}[lemma]{Definition}
\newtheorem{assumption}[lemma]{Assumption}
\newtheorem{setting}[lemma]{Setting}

\newtheorem{fact}[lemma]{Fact}

\makeatletter
\renewcommand{\ALG@beginalgorithmic}{\small}
\makeatother

\definecolor{pink}{RGB}{255,20,147}
\definecolor{PINK}{RGB}{255,20,147}
\definecolor{darkgreen}{RGB}{1,150,32}
\addauthor{tl}{cyan}
\addauthor{cp}{red}
\addauthor{sn}{blue}
\addauthor{po}{pink}
\addauthor{pco}{pink}
\addauthor{jz}{green}

\renewcommand{\epsilon}{\varepsilon}

\DeclareMathOperator*{\E}{\mathbb{E}}

\DeclareMathOperator*{\argmax}{arg\,max}

\DeclareMathOperator{\Unif}{Unif}

\DeclareMathOperator{\supp}{supp}
\DeclareMathOperator{\sign}{sgn}

\DeclareMathOperator{\Ber}{Ber}
\DeclareMathOperator{\poly}{poly}
\DeclareMathOperator{\KL}{KL}

\newcommand{\dem}{\mathsf{dem}}
\newcommand{\rev}{\mathsf{rev}}
\newcommand{\br}{\mathsf{br}}
\newcommand{\gap}{\mathsf{gap}}

\newcommand{\dis}{\mathsf{dis}}

\newcommand{\dL}{d_{\mathrm{L}}}
\newcommand{\proj}{\mathsf{proj}}
\newcommand{\REG}{\mathrm{REG}}

\newcommand{\LS}{\mathrm{LS}}
\newcommand{\FG}{\mathrm{FG}}

\newcommand{\TV}{\mathrm{TV}}
\newcommand{\price}{\mathsf{price}}

\newcommand{\OPS}{\textup{\texttt{OPS}}\xspace}
\newcommand{\POPS}{\textup{\texttt{POPS}}\xspace}
\newcommand{\GOPS}{\textup{\texttt{GOPS}}\xspace}
\newcommand{\Zooming}{\textup{\texttt{ZoomV}}\xspace}
\newcommand{\ZoomDim}{\textup{\texttt{ZoomDim}}\xspace}
\newcommand{\ZoomDimV}{\textup{\texttt{ZoomDimV}}\xspace}

\newcommand{\ProjectedVolume}{\textup{\texttt{ProjectedVolume}}\xspace}

\newcommand{\R}{\mathbb{R}}

\newcommand{\eps}{\varepsilon}
\newcommand{\dd}{\mathrm{d}}
\newcommand{\PP}{\mathbb{P}}

\newcommand{\cA}{\mathcal{A}}
\newcommand{\cB}{\mathcal{B}}
\newcommand{\cC}{\mathcal{C}}
\newcommand{\cD}{\mathcal{D}}
\newcommand{\cE}{\mathcal{E}}
\newcommand{\cF}{\mathcal{F}}
\newcommand{\cG}{\mathcal{G}}

\newcommand{\cI}{\mathcal{I}}

\newcommand{\cP}{\mathcal{P}}

\newcommand{\cS}{\mathcal{S}}
\newcommand{\cT}{\mathcal{T}}
\newcommand{\cU}{\mathcal{U}}

\newcommand{\cX}{\mathcal{X}}

\newcommand{\cZ}{\mathcal{Z}}

\newcommand{\1}{\mathds{1}}
\newcommand{\unitball}{\mathbb{B}^d}
\newcommand{\unitsph}{\mathbb{S}^{d-1}}

\newcommand{\idx}{\mathsf{index}}
\newcommand{\UCB}{\mathsf{UCB}}

\newcommand{\defeq}{\coloneqq}

\allowdisplaybreaks

\newcommand{\ubrace}[2]{{\underbrace{\ifsqrt\expandafter\cramped\fi{#1}}_{#2}}}

\newif\ifsqrt
\newsavebox{\usqrtbox}

\makeatletter
\newcommand{\raisemath}[1]{\mathpalette{\raisem@th{#1}}}
\newcommand{\raisem@th}[3]{\raisebox{#1}{$#2#3$}}
\makeatother

\usepackage{mleftright}

\def\left#1{\mleft#1}
\def\right#1{\mright#1}

\usepackage{mparhack}
\usepackage{ifoddpage}
\usepackage{marginnote}
\setlength{\marginparwidth}{\dimexpr\oddsidemargin + 15mm}
\newcommand{\prf}[1]{%
\reversemarginpar
\marginnote{\mbox{\hyperref[prf:#1]{\color{blue!30!white}{\normalfont\texttt{[proof]}}}}\hfill}%
}

\newenvironment{altproof}[1]%
{%
\par\noindent{\bfseries\upshape#1\ }%
}
{\qed}

\begin{document}

\title{Contextual Dynamic Pricing with Heterogeneous Buyers}

\author[1]{Thodoris Lykouris}
\author[2]{Sloan Nietert}
\author[3]{Princewill Okoroafor}
\author[1]{\authorcr Chara Podimata} %
\author[4]{Julian Zimmert}
\affil[1]{MIT, \texttt{\{lykouris, podimata\}@mit.edu}}
\affil[2]{EPFL, \texttt{sloan.nietert@epfl.ch}}
\affil[3]{Harvard, \texttt{pco9@cornell.edu}}
\affil[4]{Google, \texttt{zimmert@google.com}}

\date{\today}

\maketitle

\begin{abstract}
We initiate the study of contextual dynamic pricing with a heterogeneous population of buyers, where a seller repeatedly posts prices (over $T$ rounds) that depend on the observable $d$-dimensional context and receives binary purchase feedback. Unlike prior work assuming homogeneous buyer types, in our setting the buyer's valuation type is drawn from an unknown distribution with finite support size $K_{\star}$. We develop a contextual pricing algorithm based on optimistic posterior sampling with regret $\widetilde{O}(K_{\star}\sqrt{dT})$, which we prove to be tight in $d$ and $T$ up to logarithmic terms. Finally, we refine our analysis for the non-contextual pricing case, proposing a variance-aware zooming algorithm that achieves the optimal dependence on $K_{\star}$.
\end{abstract}

\section{Introduction}
\label{sec:intro}

In online learning for contextual pricing, a learner ({aka} seller) repeatedly {sets} prices {for} different products {with the goal of maximizing revenue through interactions with agents (aka buyers or customers)}. {Concretely,} in {each round} $t=1,\ldots, T$, {nature selects a} product {with} a $d$-dimensional feature representation $u_t$ (context) and the seller selects a price~$p_t \geq 0$. In the simplest variant, the \emph{linear valuation model}, customers have a fixed intrinsic valuation model (type) that is unknown to the learner; this has a $d$-dimensional representation $\theta^{\star}$ whose coordinates reflect the valuation that each product feature adds, i.e., the customer's valuation is $v_t=\langle \theta^{\star},u_t\rangle+\epsilon_t$ where $\epsilon_t$ is a noise term. The customer makes a purchase only when their valuation is higher than the price, i.e., $v_t\geq p_t$. The learner's goal is to maximize revenue, i.e., the sum of the prices in rounds when purchases occur. An equivalent {objective} is to minimize \emph{regret}, which  {is measured} against a benchmark that always selects the customer's valuation as the price {for the given round}. 

Before outlining our contributions, we highlight the unique challenge of online learning in contextual pricing. 
One key difficulty is that the learner faces both an infinite action space (i.e., all possible prices) and a discontinuous revenue function; 
indeed, even small price increases can deter a buyer from purchasing, hence causing sharp revenue loss for the learner. However, the problem offers a richer feedback structure than classical multi-armed bandits: a non-purchase indicates that all higher prices would also be rejected by the buyer, while a purchase confirms that all lower prices would be accepted too. The two primary approaches from the literature to tackle this problem involve estimating the unknown parameter $\theta^{\star}$ through online regression or multi-dimensional binary search (see Section~\ref{sec:rel-work} for further discussion).

A crucial limitation for both approaches is that they require all customers to behave \emph{homogeneously} according to a single type $\theta^{\star}$; see the related work section for results robust to small deviations from this assumption. Moving beyond this homogeneity assumption, we pose the following question:
\begin{center}
      \emph{How can one design contextual pricing algorithms with a heterogeneous population of customers?} 
\end{center}

\subsection{Our Contribution}

\xhdr{Our setting (Section~\ref{sec:prelims}).} 
To study contextual pricing with a heterogeneous buyer population, we assume that the type $\theta_t$ in round $t$ is drawn from a fixed, unknown distribution $D_{\star}$. When $D_\star$ is supported on a single type $\theta_\star$, we recover the homogeneous setting. In our setup, the number of distinct buyer types $K_{\star} = |\supp{D_\star}|$ reflects the \emph{degree of heterogeneity}. We assume that $K_{\star} > 1$ throughout the paper.

There are several obstacles to applying existing algorithms from the literature.
First, canonical contextual pricing algorithms based on regression either compete against (simple) linear policies or assume context-independent and identically distributed (i.i.d.) valuation noise. In contrast, the optimal policy in our setting may best respond based on a \emph{context-dependent} type rather than a \emph{fixed} type, and the stochasticity due to heterogeneity is inherently context-dependent and thus non-i.i.d.
Second, given that \emph{the buyer types are not observable}, one cannot connect the observed feedback to shrinkage of type-dependent uncertainty sets; this rules out running canonical multi-dimensional binary search / contextual pricing algorithms for each buyer type in parallel. Third, since in our setting there is a \emph{continuum of actions}, any canonical contextual bandits algorithm whose regret scales with the discretized action count (e.g., EXP4) will suffer suboptimal performance.

\xhdr{Our contextual pricing algorithm (Section~\ref{sec:alg}).} To tackle the above challenges, we employ recent advances in the contextual bandit literature that attain a better scaling with the number of actions, thus evading the shortcomings of EXP4 with na\"ive discretization. In particular, we build on the \emph{optimistic posterior sampling} (OPS) approach \citep{zhang2022thompson} which, in our setting, maintains a posterior $\mu_t$ over all candidate type distributions. We call these candidate type distributions \emph{models} and refer to their (possibly infinite) family as $\cD$. At a high level, in every round, OPS best responds to a model sampled from $\mu_t$. As typical in online learning, the posterior update penalizes models that \emph{disagree} with the observed feedback (\emph{model mismatch}) aiming to converge to the model $D_{\star}$. To encourage exploration in the absence of full information, this penalty is reduced by an \emph{optimism bias} term that rewards models with the highest potential to positively contribute to the revenue. The OPS approach enables regret bounds of $\sqrt{T\cdot c \cdot \log|\cD|}$, scaling with a \emph{disagreement coefficient} $c$ that measures the per-context structural complexity of the reward functions and captures the tension between exploration and exploitation.
This coefficient is always bounded by the number of actions but can be much smaller in general.

Our main technical contributions in adapting OPS to heterogeneous contextual pricing are twofold. First, to bound the disagreement coefficient $c$, we observe that, for any fixed context, the aggregate demand function induced by~$D_{\star}$ has at most~$K_{\star}$ ``jumps'',\footnote{Each ``jump'' corresponds to a change of type from (say) type $i$ to type $i+1$.} thus creating $K_{\star}+1$ intervals. Over each interval, we bound the disagreement coefficient by a factor of $2$. Combining these arguments with a novel decomposition lemma for the disagreement coefficient of functions with $K_\star$ breakpoints, we show that $c\leq 2(K_{\star}+1)$. When $K_{\star}$ is known, we apply a variant of OPS over a finite covering of the class $\mathcal{D}$ containing \emph{all} possible distributions over $K_\star$ types, of log cardinality $dK_\star \log T$. 
Second, to extend our sublinear regret guarantee to the infinite model class $\mathcal{D}$, we modify OPS to conservatively perturb its recommended prices (which cannot overly impact regret due to one-sided Lipschitzness of the revenue function). We then construct a coupling between the actual trajectory of OPS and one where $D_\star$ belongs to the finite cover, allowing us to transfer regret bounds.
Finally, we adapt to unknown $K_\star$ by initializing OPS with a non-uniform prior over models.
These technical contributions enable us to show a regret guarantee of $\widetilde{O}(K_\star\sqrt{dT})$. Finally, we show that this guarantee is optimal (up to logarithmic terms) with respect to the dependence on both the contextual dimension $d$ and the time horizon $T$, establishing a lower bound of $\Omega(\sqrt{K_\star d T})$ for sufficiently large $T = \Omega(dK_\star^3)$.

\xhdr{Non-contextual improvements (Section~\ref{sec:zooming}).} The above upper and lower bounds raise a natural question on the optimal dependence of the regret on the number of buyer types $K_{\star}$; we resolve this question in the non-contextual version of the problem ($d=1$) by providing an algorithm with an upper bound of $\widetilde{O}(\sqrt{K_\star T})$. Our algorithm, \Zooming, combines zooming (i.e., adaptive discretization) methods from Lipschitz bandits \citep{slivkins2008} with variance-aware confidence intervals \citep{audibert2009exploration}. Our analysis shows that the regret of \Zooming scales with a novel variance-aware zooming dimension that can be significantly smaller than the standard measure of complexity for Lipschitz bandits. For pricing, this variance adaptation unlocks our $\widetilde{O}(\min\{\sqrt{K_\star T}, T^{2/3}\})$ bound (versus $O(T^{2/3})$, obtained via the standard zooming analysis).

The non-contextual version of pricing for heterogeneous buyers was previously studied by \cite{cesabianchi19pricing}, who establish a matching upper and lower bound if all types are ``well-separated'' from each other. In independent and concurrent work, \cite{bacchiocchi25piecewise} remove this assumption and also achieve a regret bound of $\widetilde{O}(\sqrt{K_\star T})$. Both of these algorithms employ binary search techniques that are specialized to the piecewise-linear revenue function. In contrast, \Zooming is a generic (one-sided) Lipschitz bandits algorithm, and the structure of the revenue function is only used within its analysis, to bound the variance-aware zooming dimension. One feature of this approach is that we also achieve $T^{2/3}$ regret when $K_\star \gg T^{1/3}$, without changing the~algorithm.

\xhdr{Stronger type observability (Section~\ref{sec:extras}).} Finally, we consider contextual pricing under the assumption that the learner can \emph{identify} each arriving type, i.e., where the learner observes ex-post information about the sampled type $\theta_t$. 
We analyze two observability models: one where the learner receives a discrete identifier $z_t \in [K_\star]$ --- under which a computationally efficient pricing algorithm matches the $\widetilde O(K_\star \sqrt{dT })$ regret bound of OPS --- and another where the full type vector $\theta_t \in \unitball$ is observed --- for which we reduce the dependence on $K_\star$ and $d$ to achieve regret $\widetilde{O}(\sqrt{\min\{K_\star, d\}T})$. 
These results demonstrate how richer feedback reduces complexity in dynamic pricing.

\subsection{Related Work}
\label{sec:rel-work}

Our work relates closely to three lines of work: (I) contextual pricing/search; (II) non-contextual pricing; and (III) Lipschitz bandits.

\xhdr{(I) Contextual pricing/search.} The closest line to our work is \emph{contextual pricing/search}. In contextual search, there is a repeated interaction between a learner and nature, where the learner is trying to learn a hidden vector $\theta^{\star} \in \mathbb{R}^d$ over time while receiving only single-bit feedback. Mathematically, at each round $t \in [T]$, the learner receives a (potentially adversarially chosen) context $u_t \in \mathbb{R}^d$ and decides to query $y_t \in \mathbb{R}$. The learner receives feedback $\sigma_t = \sign(\langle u_t, \theta^{\star} \rangle - y_t) \in \{-1, +1\}$ and incurs loss $\ell_t(y_t, \langle u_t, \theta^{\star}\rangle)$ \citep{cohen20pricing}. Notably, the learner does \emph{not} observe $\ell_t(y_t, \langle u_t, \theta^{\star}\rangle)$; only the binary feedback $\sigma_t$. 
When the loss function $\ell_t$ corresponds to the lost revenue as a result of posting price $y_t$ (i.e., the ``pricing loss''), this setting reduces to ours with a \emph{homogeneous} buyer population, i.e., $K_\star = 1$.
The contextual search literature has also considered two other loss functions: the symmetric/absolute loss $\ell_t(y_t, \langle u_t, \theta^{\star} \rangle) := |y_t - \langle u_t, \theta^{\star} \rangle|$ and the $\eps$-ball loss $\ell_t(y_t, \langle u_t, \theta^{\star} \rangle) = \1 \left\{ |y_t - \langle u_t, \theta^{\star} \rangle| > \eps\right\}$, which are motivated by settings other than pricing.

There have been two approaches in the literature for learning in contextual pricing and contextual search (for homogeneous agents/buyers). The first approach (e.g., \citealp{cohen20pricing,lobel18contextual,leme22contextual, liu21contextual}) employs a version of multidimensional binary search: specifically, the algorithms maintain a ``knowledge set'' with all the possible values of $\theta^{\star}$ which are ``consistent'' with the feedback that nature has given thus far. Similar to traditional binary search, the query point is chosen to be the point that (given the nature's feedback) will eliminate roughly half of the current knowledge set. As the knowledge set shrinks, the learner ends up with a small knowledge set for the possible values of $\theta^{\star}$; this is enough to guarantee sublinear regret. The series of works in~\citep{cohen20pricing,lobel18contextual,leme22contextual, liu21contextual} optimized regret bounds for the three different loss functions (i.e., symmetric, $\eps$-ball, and pricing). The specific algorithms were different at each paper, but they all maintained a ``binary search'' flavor. Most of the algorithms employing a multidimensional binary search approach can be ``robustified'' to very little noise in the agents' responses; since the learner will irrevocably shrink the knowledge set according to the feedback received from nature, they can only afford very few~mistakes. 

The second approach (e.g., \citealp{javanmard2019dynamic, javanmard2017perishability, fan2024policy, luo2024distribution}) focuses exclusively on pricing settings. This approach uses regression-based algorithms for learning the correct price and \emph{needs to assume} stochastic noise in the buyers' responses. There have also been works studying other aspects of contextual pricing (e.g., strategic agents \citep{amin2014repeated} and unknown noise distribution \citep{xu2022towards}). Apart from the methodological differences with our work, both streams of literature focus on a \emph{homogeneous} agent population and cannot be readily adapted for a \emph{heterogeneous} population setting.

Moving closer to the heterogeneous agents problem, \cite{krishnamurthy23contextual} studied ``corruption-robust'' contextual search, where the agent population is mostly homogeneous, except for $C = o(T)$ corrupted agent responses. Their regret bounds were subsequently strengthened by \cite{leme22corruptcontextual}, but the latter approach only works for contextual search with absolute and $\epsilon$-ball loss and does \emph{not} cover the pricing loss. This model has been also studied with a Lipschitz target function \citep{zuo2024corruption}. Learning with corruptions can be seen as a first step towards learning from heterogeneous agents, but the approaches above do not scale appropriately for truly heterogeneous agent populations. In contrast, we focus on \emph{fully heterogeneous} settings, where we do not constrain the number or the size of the different buyer types.

\xhdr{(II) Non-contextual pricing.} The special case of $d = 1$, where there is no context to inform decision-making, was introduced by~\cite{kleinberg03demand}, who studied non-contextual dynamic pricing for a homogeneous, stochastic, and adversarial buyer population. For the adversarial buyer population, the authors assumed that there can be $T$ different valuations and showed tight regret bounds of $\widetilde{O}(T^{2/3})$. In contrast, in our setting, we assume that users are ``clustered'' in $K_{\star}$ types), and so the lower bound of $\Omega(T^{2/3})$ of \citet{kleinberg03demand} does not apply. 

The closest to our work is the work of \cite{cesabianchi19pricing}, who consider pricing a heterogeneous agent population with an unknown number of types, but the types are still limited to be less than $o(T)$. Throughout the paper, we discuss how their bounds relate to ours for the special case of $d = 1$. None of the aforementioned techniques readily generalize to contextual pricing~settings.

\xhdr{(III) Lipschitz bandits.} Our work is also related to the literature on Lipschitz bandits. Although the pricing loss is \emph{not} fully Lipschitz, it has recently been observed that it satisfies a \emph{one-sided Lipschitzness}. This allows us to leverage techniques from adaptive discretization \citep{slivkins2008} to obtain improved bounds for $d = 1$.
Zooming had previously been applied to pricing (see, e.g., \citealp{podimata2021adaptive}), but these algorithms are insufficient for the $K_\star$-types setting. Indeed, their performance scales with a zooming dimension $\ZoomDim$ that is too large here. On the other hand, \Zooming uses variance-aware confidence intervals so that its performance scales with a smaller, variance-aware zooming dimension $\ZoomDimV$. In particular, while \ZoomDim can equal 1 for worst-case instances, we show that \ZoomDimV is 0 (with a lower-order scaling constant at most $K_\star$).

Finally, \cite{krishnamurthy2020contextual} consider contextual bandits with continuous action spaces, which encompasses the setting of this work. Their regret bounds cover the case where, for a fixed context, the expected reward is Lipschitz in the learner's action. Although their analysis can be adapted to the one-sided Lipschitz setting of pricing, their results either require stochastic contexts or incur large regret due to na\"ive discretization. Even in the stochastic case, their regret bound scales with a policy zooming coefficient that does not appear to admit a useful bound in terms~of~$K_\star$.

\section{Setup and Preliminaries}
\label{sec:prelims}

\xhdr{Notation.} 
Let $\|\cdot\|$ and $\langle \cdot,\cdot \rangle$ denote the Euclidean norm and inner product on $\R^d$. Let $\unitsph, \unitball \subseteq \R^d$ denote the unit sphere and ball, respectively. Let $\Delta(S)$ denote the set of all probability measures on a measurable set $S \subseteq \R^d$, and let $\supp(D)$ denote the support of $D \in \Delta(\R^d)$. We use $\Delta_k(S)$ for those $D \in \Delta(S)$ with $|\supp(D)| \leq k$. For a positive integer $m$, let $[m] \defeq \{1,2,\dots,m\}$. 

\xhdr{Problem setup.}
We consider $T$ rounds of repeated interaction between a seller, a population of buyers, and an adversary. At each round $t \in [T]$, the seller posts a price $p_t \in [0,1]$ for an item to be sold and a buyer, sampled from the population, decides whether or not to buy the item based on their valuation $v_t \in [0,1]$. We denote the indicator of their purchase by $y_t = \mathds{1}\{v_t \geq p_t\}$. The valuation of the buyer is determined by two factors: their \emph{type} $\theta_t$, which encodes their intrinsic preferences, and an external \emph{context} $u_t$, which describes the current item to be sold and any relevant environmental factors. {The learner does \emph{not} know $\theta_t$, but they do know $u_t$.} We employ a linear valuation model, supposing that $\theta_t$ and $u_t$ lie in $d$-dimensional spaces $\Theta \subseteq [0,1]^d$ and $\cU \subseteq \unitsph$, respectively, and take $v_t = \langle \theta_t,u_t \rangle$. We assume that $\langle \theta,u \rangle \in [0,1]$ for all $\theta \in \Theta$ and $u \in \cU$. We impose no further assumptions on the contexts, allowing them to be generated (potentially adaptively) by the adversary. On the other hand, we assume that each $\theta_t$ is sampled independently from a fixed distribution $D_\star \!\in\! \Delta(\Theta)$ that describes the buyer population, unknown to the seller. All together, the following occur at each round $t \in [T]$:
\squishlist
    \item[1.] the adversary selects a context $u_t \in \cU$;
    \item[2.] a buyer arrives with type $\theta_t \in \Theta$ sampled independently from $D_\star$, with valuation $v_t = \langle u_t, \theta_t \rangle$;
    \item[3.] the seller observes $u_t$ and posts price $p_t \in [0,1]$ for the item;
    \item[4.] the seller observes the purchase decision $y_t = \mathds{1}\{v_t \geq p_t\}$ and receives revenue $p_t y_t$.
\squishend

\xhdr{Benchmark.}
The seller's goal is to maximize their cumulative revenue compared to that which they could have achieved with knowledge of $D_\star$. To express this concisely, we introduce some additional notation. Each distribution $Q$ over valuations in $[0,1]$ induces the following:
\squishlist
    \item a demand function $\dem_Q(p) \defeq \mathbb{P}_{v \sim Q}[v \geq p]$,
    \item an expected revenue function $\rev_Q(p) \defeq p \cdot \dem_Q(p)$,
    \item a revenue-maximizing best response $\br_Q \defeq \argmax_{p \in [0,1]} \rev_Q(p)$ (breaking ties arbitrarily),
    \item and a gap function $\gap_Q(p) \defeq \rev_Q(\br_Q) - \rev_Q(p)$.
\squishend

Once we restrict to a fixed context $u \in \cU$, each type $\theta \in \Theta$ induces valuation $v = \langle u, \theta \rangle$. Thus, each type distribution $D \in \Delta(\Theta)$ induces a projected valuation distribution $Q = \proj(D,u) \in \Delta([0,1])$, defined as the law of $\langle u, \theta \rangle$ when $\theta \sim D$. We then set $\dem_D(p,u) \defeq \dem_{Q}(p)$, $\rev_D(p,u) \defeq \rev_{Q}(p)$, $\br_D(u) \defeq \br_{Q}$, and $\gap_D(p,u) \defeq \gap_{Q}(p)$, accordingly. We abbreviate a subscript of $D_\star$ by ``$\star$'' alone, writing, e.g., $\dem_\star(p,u)$ and $\br_\star(u)$. These details are summarized in \cref{tab:notation}.\smallskip

A seller policy $\cA$ is a (potentially randomized) map from a history $\{u_\tau,p_\tau,y_\tau\}_{\tau = 1}^{t-1}$ and the current context $u_t$ to a posted price $p_t$. An adversary policy $\cB$ is a (potentially randomized) map from a history $\{u_\tau,\theta_\tau,p_\tau,y_\tau\}_{\tau \in [t-1]}$ to the next context $u_t$. We then define the seller's \emph{pricing regret} by
\begin{center}$R_{\cA,\cB}(T) = \sum_{t \in [T]} \gap_\star(p_t,u_t) = \sum_{t \in [T]} \bigl(\rev_\star(\br_\star(u_t),u_t) - \rev_\star(p_t,u_t)\bigr),$ \end{center}
where $\{u_t,p_t\}_{t \in [T]}$ are selected according to $\cA$ and $\cB$.
We will omit the policies from the subscript when clear from context. We focus on controlling the pricing regret in expectation, and will say that $\cA$ satisfies a regret bound $f(T)$ if $\E[R_{\cA,\cB}(T)] \leq f(T)$ for all $\cB$.\smallskip

Our guarantees will scale with context dimension $d$ and the \emph{degree of heterogeneity}, which we quantify via the support size $K_\star \defeq |\supp(D_\star)|$ (that may be infinite). We do \emph{not} assume that $K_\star$ is known to the seller. Designing an effective seller policy is challenging because $D_\star$, $K_\star$, and the realized buyer types are unknown to the seller, who must carefully balance exploration and exploitation given only the current context and the history of purchase outcomes. 

\begin{table}[b]
    \begin{tabular}{ll@{\hskip 1.8cm}ll}
        \toprule
        \multicolumn{2}{c}{\textbf{Problem parameters (known)}} 
        & \multicolumn{2}{c}{\textbf{Instance parameters (unknown)}} \\
        \midrule
        context dimension & $d$
        & type distribution & $D_\star \in \Delta(\Theta)$ \hspace*{2.1cm}\\[2pt]
        context/feature space & $\cU \subseteq \unitsph$
        & \# types & $K_\star = |\supp(D_\star)|$ \\[2pt]
        type/preference space & $\Theta \subseteq [0,1]^d$
        &  &  \\
    \end{tabular}
    \vspace{2mm}

    \begin{tabular}{llll}
        \toprule 
        \multicolumn{2}{c}{\textbf{Non-contextual primitives}} 
        & \multicolumn{2}{c}{\textbf{Contextual primitives}} \\[2pt]
        \multicolumn{2}{c}{w.r.t.\ valuation dist.\ $Q \in \Delta([0,1])$} 
        & \multicolumn{2}{c}{w.r.t.\ type dist.\ $D \in \Delta(\Theta)$ and context $u \in \cU$} \\
        \midrule
        demand & $\dem_Q(p) = \PP_{v \sim Q}[v \ge p]$
        & projected value dist.\ & $\proj(D,u) \in \Delta([0,1])$ \\[2pt]
        revenue & $\rev_Q(p) = p \cdot \dem_Q(p)$
        & rev./dem./gap & $f_D(p,u) \!=\! f_{\proj(D,u)}(p)$ \\[2pt]
        gap & $\gap_Q(p) \!=\! \max\limits_x \rev_Q(x) \!-\! \rev_Q(p)$
        & rev./dem./gap for $D_\star$ & $f_\star(p,u) \!=\! f_{D_\star}(p,u)$ \\
    \end{tabular}
    \vspace{2mm}
    
    \caption{Summary of main notation.}
    \label{tab:notation}
\end{table}

\xhdr{Basic pricing facts.} Finally, we provide some basic properties of the pricing problem, with proofs in~\cref{app:prelims-proofs}. Essential for this work is the one-sided Lipschitzness of the expected revenue function. This is a consequence of the monotonicity of demand functions, and it has previously been used to apply techniques from Lipschitz bandits to non-contextual pricing \citep{podimata2021adaptive}.

\begin{lemma}[One-sided Lipschitzness]
\label{lem:one-sided-lipschitzness}
Fix any distribution $Q \in \Delta([0,1])$ and let $0 \leq p < p' \leq 1$. We then have $\rev_Q(p') - \rev_Q(p) \leq \dem_Q(p)(p'-p) \leq p'-p$.
\end{lemma}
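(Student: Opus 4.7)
The plan is to prove the two inequalities in sequence, leaning on the monotonicity of the demand function (demand is non-increasing in price) and the fact that demand is bounded above by $1$.

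For the first inequality, I would expand the difference using the definition $\rev_Q(p) = p \cdot \dem_Q(p)$ and add and subtract the cross term $p' \cdot \dem_Q(p)$:
\begin{align*}
\rev_Q(p') - \rev_Q(p) &= p' \dem_Q(p') - p \dem_Q(p) \\
&= p'\bigl(\dem_Q(p') - \dem_Q(p)\bigr) + \dem_Q(p)(p' - p).
\end{align*}
The key observation is that $\dem_Q$ is monotonically non-increasing, since $\{v \ge p'\} \subseteq \{v \ge p\}$ whenever $p \le p'$. Combined with $p' \ge 0$, the first term is non-positive, so the difference is bounded above by $\dem_Q(p)(p' - p)$, which is the desired inequality.

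For the second inequality, I would simply note that $\dem_Q(p) = \PP_{v \sim Q}[v \ge p] \le 1$ and $p' - p \ge 0$, which gives $\dem_Q(p)(p' - p) \le p' - p$ immediately.

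There is no real obstacle here; the lemma is essentially a restatement of the fact that raising the price by $\delta$ can only increase revenue on the buyers who still purchase (contributing at most $\dem_Q(p) \cdot \delta$) and strictly decrease it on the buyers who now refuse, so the two-line decomposition above suffices.
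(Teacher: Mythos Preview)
Your proof is correct and follows essentially the same approach as the paper's: expand $\rev_Q(p') - \rev_Q(p) = p'\dem_Q(p') - p\dem_Q(p)$, use monotonicity of $\dem_Q$ to bound this by $\dem_Q(p)(p'-p)$, and then use $\dem_Q(p) \le 1$. The paper compresses these steps into a single chain of inequalities, but the underlying argument is identical.
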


Throughout this work, we must handle distributional uncertainty over value distributions. To compare two distributions $P,Q \in \cP([0,1])$, we employ the \emph{Levy metric} defined by
\begin{equation}
\label{eq:levy-def}
    \dL(P,Q) \defeq \inf \{ \eps > 0 : \dem_P(x - \eps) - \eps \leq \dem_Q(x) \leq \dem_P(x + \eps) + \eps \: \forall x \in \R \}.
\end{equation}
This quantity is at most 1 and equals the side length of the largest square which can be inscribed between the graphs of $\dem_P$ and $\dem_Q$ (equivalently, the CDFs of $P$ and $Q$). For type distributions $D,D' \in \cP(\Theta)$, we use the Levy distance between their projected value distributions, taking $\dL(D,D') \defeq \sup_{u \in \unitsph} \dL(\proj(D,u),\proj(D',u))$. We use this metric because, if $D$ and $D'$ are close under $\dL$, then there exists a policy which performs well on both of them; this motivates the use of the L\'evy metric throughout the dynamic pricing literature (see, e.g., \citealp{leme2023pricing}).

\begin{lemma}[Pricing implication of L\'evy metric bound]
\label{lem:levy-metric-bd}
Suppose that $D,D' \in \Delta(\Theta)$ satisfy $\dL(D,D') < \eps$. Then the conservative best-response policy $\pi(u) = \max\{\br_D(u) - \eps,0\}$ satisfies $\rev_D(\pi(u),u) \geq \rev_D(\br_D(u),u) - \eps$ and $\rev_{D'}(\pi(u),u) \geq \rev_{D'}(\br_{D'}(u),u) - 3\eps$ for all $u \in \cU$.
\end{lemma}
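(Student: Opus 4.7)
My plan is to split the lemma into the two claims and handle each separately. For the first claim, I would invoke Lemma~\ref{lem:one-sided-lipschitzness} directly: when $\br_D(u) \geq \eps$, the conservative price $\pi(u) = \br_D(u) - \eps$ differs from the optimum by exactly $\eps$, so Lemma~\ref{lem:one-sided-lipschitzness} applied to $\proj(D,u)$ bounds the revenue loss by $\eps$; when $\br_D(u) < \eps$, we have $\pi(u) = 0$ and $\rev_D(\pi(u),u) = 0$, while $\rev_D(\br_D(u),u) \leq \br_D(u) < \eps$ since demand is at most $1$, so the inequality holds trivially.

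For the second claim, I would fix $u$ and work with the projected valuation distributions $P \defeq \proj(D,u)$ and $P' \defeq \proj(D',u)$ (which inherit $\dL(P,P') < \eps$), and write $p^* \defeq \br_P$, $q \defeq \br_{P'}$, $p \defeq \pi(u)$. The strategy is to derive two intermediate bounds --- (a) $\rev_{P'}(p) \geq \rev_P(p^*) - 2\eps$ and (b) $\rev_{P'}(q) \leq \rev_P(p^*) + \eps$ --- from which the desired $3\eps$ bound follows by subtraction.

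For (a), in the main case $p^* \geq \eps$, I would apply the Lévy lower bound at $x = p$ to obtain $\dem_{P'}(p) \geq \dem_P(p - \eps) - \eps$, then use monotonicity of $\dem_P$ to replace $\dem_P(p - \eps) = \dem_P(p^* - 2\eps)$ with the smaller quantity $\dem_P(p^*)$, and expand $(p^* - \eps)(\dem_P(p^*) - \eps)$ using $p^* + \dem_P(p^*) \leq 2$. For (b), I would apply the Lévy upper bound at $x = q$ to get $\dem_{P'}(q) \leq \dem_P(q + \eps) + \eps$, then observe that $q \cdot \dem_P(q + \eps) = \rev_P(q + \eps) - \eps \cdot \dem_P(q + \eps) \leq \rev_P(p^*)$ by optimality of $p^*$; combining with $q\eps \leq \eps$ closes the bound. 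The edge case $p^* < \eps$ is handled separately: $\pi(u) = 0$ so $\rev_{P'}(\pi(u)) = 0$, while (b) still yields $\rev_{P'}(q) \leq \rev_P(p^*) + \eps \leq p^* + \eps < 2\eps \leq 3\eps$.

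The main obstacle I anticipate is using the Lévy bounds in the \emph{correct direction} to extract the tight constant. Bounding $\rev_{P'}(q)$ from above requires the upper bound evaluated at $\dem_P(q + \eps)$ (the \emph{larger} argument), because this pairs cleanly with $\rev_P(q + \eps) \leq \rev_P(p^*)$ and shaves an $\eps$ via the identity $q \dem_P(q+\eps) = \rev_P(q+\eps) - \eps \dem_P(q+\eps)$; invoking the Lévy bound via $\dem_P(q - \eps)$ instead would yield only a $4\eps$ bound. Symmetrically, for (a) I must pair the lower Lévy bound with monotonicity of $\dem_P$ to land on $\dem_P(p^*)$ rather than the larger $\dem_P(p^* - 2\eps)$, which would give a useless overestimate when combined with the $p^* - \eps$ prefactor.
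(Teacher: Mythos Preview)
Your proposal is correct and follows essentially the same approach as the paper: both arguments derive (a) $\rev_{P'}(\pi(u)) \geq \rev_P(\br_P) - 2\eps$ via the L\'evy lower bound plus monotonicity, and (b) $\rev_{P'}(\br_{P'}) \leq \rev_P(\br_P) + \eps$ via a second L\'evy application combined with optimality of $\br_P$. The only cosmetic difference is that the paper presents the argument as a single chain (passing through $\rev_P(\br_{P'})$ and using the L\'evy \emph{lower} bound at $\br_{P'}$), whereas you split into (a) and (b) and obtain (b) via the L\'evy \emph{upper} bound at $\br_{P'}+\eps$; these are symmetric manipulations yielding the same constant.
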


\section{Contextual Algorithm with Optimal Dependence on  \texorpdfstring{$d$}{d} and  \texorpdfstring{$T$}{T}}
\label{sec:alg}

We now develop statistically efficient (albeit computationally inefficient) algorithms for contextual pricing. In \cref{ssec:alg-finite}, we treat the simpler setting where $D_\star$ belongs to a finite model class $\cD$ and $K_\star$ is known.
In \cref{ssec:alg-infinite}, we remove these two assumptions, achieving regret $\widetilde{O}(K_\star\sqrt{dT})$. We also provide a regret lower bound of $\Omega(\sqrt{K_\star dT})$, even if $K_\star$ is known, thus proving the optimality of our regret bound's dependence on $d$ and $T$. Omitted proofs appear in \cref{app:alg-proofs}.

\subsection{Warm-Up: Heterogeneous Contextual Pricing with a Finite Model Class}
\label{ssec:alg-finite}

As a warm-up, we consider pricing when $D_\star$ belongs to a known, finite model class. 

\begin{assumption}
\label{assump:finite-model-class}
Assume that $D_\star \in \cD$, where $\cD \subseteq \Delta(\Theta)$ is finite and known to the seller.
\end{assumption}

\noindent This realizability assumption simplifies our analysis, and the resulting algorithm extends naturally to infinite classes. We employ optimistic posterior sampling (\OPS), originally studied for contextual bandits by \cite{zhang2022thompson} under the name ``Feel-Good Thompson Sampling.'' 
In our instantiation for contextual pricing, \OPS (\Cref{alg:OPS}) maintains a posterior distribution over models, initialized at prior $\mu_1 \in \Delta(\cD)$. At round $t$ with context $u_t \in \cU$, we sample a model $D_t \sim \mu_t$, play the best-response price $p_t = \br_{D_t}(u_t)$, and observe purchase feedback $y_t$. Then, for each candidate model $D \in \cD$, we update its posterior weight $\mu_{t+1}(D)$ according to the loss $\ell_\lambda(\proj(D,u_t),p_t,y_t)$, defined by
\begin{equation*}
    \ell_\lambda(Q,p,y) \defeq \underbrace{(y- \dem_Q(p))^2}_{\text{model mismatch}} - \underbrace{\lambda \rev_Q(\br_Q)}_{\text{optimism bias}}.
\end{equation*}
The ``model mismatch'' penalty captures the extent to which the observed demand $y_t$ differs from that predicted by $D$ when $p_t$ is played. In particular, as a function of $D \in \cD$, the expected model mismatch $\E_{y_t}[(y_t - \dem_D(p_t,u_t))^2 \mid p_t]$ is minimized only by those models $D$ which make the same prediction as $D_\star$, i.e., those for which $\dem_D(p_t,u_t) = \dem_{\star}(p_t,u_t)$. On the other hand, the ``optimism bias'' reduces the loss for models which have the potential to provide large revenue, ensuring that we perform sufficient exploration.\medskip

\begin{algorithm2e}[H]
\SetAlgoNoEnd
\caption{\OPS: Contextual Pricing with a Finite Model Class}
\label{alg:OPS}
\textbf{Input}: finite model class $\cD \subseteq \Delta(\Theta)$, support size $K \geq 1$\;
initialize uniform prior $\mu_1 = \Unif(\cD)$ and optimism strength $\lambda = \sqrt{\log(|\cD|)/K T}$\label{step:OPS-init}\;
\For{each round $t \in [T]$}{
    observe context $u_t$\;
    sample model $D_t \sim \mu_t$\;
    play $p_t = \br_{D_t}(u_t)$ and observe $y_t$\;
    update $\mu_{t+1}(D) \propto \mu_t(D) \exp\bigl(-\ell_\lambda(\proj(D,u_t),p_t,y_t)\bigr)$ for each $D \in \cD$\;
}
\end{algorithm2e}

\begin{theorem}
\label{thm:OPS-regret-bd}
Under \cref{assump:finite-model-class}, \OPS with $K = K_\star$ achieves regret $\widetilde{O}(\sqrt{K_\star T \log|\cD|})$.
\end{theorem}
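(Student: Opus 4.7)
The plan is to apply the general regret analysis of optimistic posterior sampling from \citet{zhang2022thompson}, which reduces the problem to bounding a problem-specific decoupling (a.k.a.\ disagreement) coefficient. Combined with a structural argument for heterogeneous pricing, this yields the stated bound.

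\textbf{Step 1: Generic potential-function bound.} I would start from the standard log-partition analysis of exponential-weights on $\cD$. Tracking $\log \mu_t(D_\star)$ through the multiplicative update and using $\mu_1(D_\star) = 1/|\cD|$, one obtains the in-expectation inequality
\begin{align*}
\mathbb{E}\!\left[\sum_{t=1}^{T}\!\sum_{D \in \cD} \mu_t(D)\bigl[\ell_\lambda(\proj(D,u_t),p_t,y_t) - \ell_\lambda(\proj(D_\star,u_t),p_t,y_t)\bigr]\right] \leq \log|\cD|.
\end{align*}
Taking conditional expectations over $y_t \mid p_t, u_t \sim \mathrm{Ber}(\dem_\star(p_t,u_t))$, the quadratic part of the loss telescopes nicely: the model-mismatch term for $D_\star$ contributes only its conditional variance, while the model-mismatch term for $D$ contributes that variance plus $(\dem_D(p_t,u_t) - \dem_\star(p_t,u_t))^2$. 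Rearranging,
\begin{align*}
\mathbb{E}\!\left[\sum_{t=1}^{T}\mathbb{E}_{D \sim \mu_t}\!\Bigl[(\dem_D(p_t,u_t)\!-\!\dem_\star(p_t,u_t))^2 + \lambda\bigl(\rev_\star(\br_\star(u_t),u_t) \!-\! \rev_D(\br_D(u_t),u_t)\bigr)\Bigr]\right] \leq \log|\cD|,
\end{align*}
where the optimism-bias terms collapse because, by realizability, $\rev_{D_\star}(\br_\star(u_t),u_t)$ is the oracle benchmark.

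\textbf{Step 2: Coupling to pricing regret via a decoupling coefficient.} The regret to control is $\sum_t \mathbb{E}_{D_t \sim \mu_t}[\rev_\star(\br_\star(u_t),u_t) - \rev_\star(\br_{D_t}(u_t),u_t)]$ (since $p_t = \br_{D_t}(u_t)$), which differs from the inner ``optimistic regret'' above. Following the Feel-Good paradigm, I would use a decoupling inequality of the form
\begin{align*}
\mathbb{E}_{D \sim \mu_t}\!\bigl[\rev_\star(\br_\star(u),u) - \rev_\star(\br_D(u),u)\bigr] \leq \mathbb{E}_{D \sim \mu_t}\!\bigl[\rev_\star(\br_\star(u),u) - \rev_D(\br_D(u),u)\bigr] + \frac{c}{4\lambda} + \lambda\,\mathbb{E}_{D \sim \mu_t}\!\bigl[(\dem_D(\br_D(u),u) - \dem_\star(\br_D(u),u))^2\bigr],
\end{align*}
valid for a problem-specific constant $c$ (AM-GM turns a decoupling-coefficient-type inequality into this weighted form). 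Summing over $t$ and combining with Step~1 gives
\begin{align*}
\mathbb{E}[R(T)] \leq \frac{\log|\cD|}{\lambda} + c\,\lambda\,T,
\end{align*}
which, optimized by $\lambda = \sqrt{\log|\cD|/(cT)}$, yields $\mathbb{E}[R(T)] \lesssim \sqrt{c\,T\log|\cD|}$.

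\textbf{Step 3: Bounding $c$ by $O(K_\star)$.} This is the pricing-specific step and the main obstacle. The structural observation is that for every fixed $u \in \cU$, the true demand $\dem_\star(\cdot,u)$ is a step function of the price with at most $K_\star$ breakpoints, namely the values $\langle \theta, u \rangle$ for $\theta \in \supp(D_\star)$. This partitions $[0,1]$ into at most $K_\star+1$ intervals on which $\dem_\star(\cdot,u)$ is constant, and thus $\rev_\star(\cdot,u)$ is linear. On each such interval the revenue discrepancy $\rev_\star(\br_\star(u),u) - \rev_\star(\br_D(u),u)$ can be charged to the squared demand discrepancy $(\dem_D(\br_D(u),u)-\dem_\star(\br_D(u),u))^2$ with a constant coefficient (at most $2$). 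The subtlety is that $\br_D(u)$ and $\br_\star(u)$ may land in different intervals, so I would invoke the paper's decomposition lemma for the disagreement coefficient of functions with $K_\star$ breakpoints to aggregate the per-interval bounds, yielding $c \leq 2(K_\star+1)$.

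\textbf{Step 4: Conclude.} Plugging $c = O(K_\star)$ and the algorithm's choice $\lambda = \sqrt{\log|\cD|/(K_\star T)}$ into Step~2 produces $\mathbb{E}[R(T)] = \widetilde{O}(\sqrt{K_\star T \log|\cD|})$, as claimed. The hard part is Step~3: precisely formulating the decoupling inequality in a form to which the ``$K_\star$ breakpoint'' decomposition applies, and verifying that cross-interval contributions do not blow the per-interval factor of $2$ into something larger than $O(K_\star)$.
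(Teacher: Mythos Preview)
Your high-level plan matches the paper's: the OPS potential argument from \citet{zhang2022thompson}, a decoupling step governed by a disagreement coefficient, and a bound of $O(K_\star)$ on that coefficient via the breakpoint structure of $\dem_\star(\cdot,u)$.

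The gap is in Step~2. The squared term you write, $\E_{D\sim\mu_t}\bigl[(\dem_D(\br_D(u),u)-\dem_\star(\br_D(u),u))^2\bigr]$, is \emph{coupled}: the same $D$ determines both the price $\br_D(u)$ and the demand function being evaluated. But what your Step~1 actually controls is the \emph{decoupled} quantity $\E_{D,\tilde D\sim\mu_t}\bigl[(\dem_{\tilde D}(\br_D(u),u)-\dem_\star(\br_D(u),u))^2\bigr]$, with an independent draw $\tilde D$ (since $p_t=\br_{D_t}(u_t)$ is fixed before you average over $D$). Plain AM--GM only gives the coupled-to-coupled inequality with $c=1$, which is useless here. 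The entire role of the disagreement coefficient is precisely to bound the coupled term $\E_{D}[|\dem_D(\br_D)-\dem_\star(\br_D)|]$ by the decoupled squared term; the paper invokes Lemma~E.2 of \citet{foster21statistical} for this. With that correction, Steps~1 and~2 do combine to give $\E[R(T)]\lesssim \lambda\,\dis(\cD,D_\star)\,T+\log|\cD|/\lambda$.

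Step~3 is right in spirit but the mechanism you describe is off. The paper does not ``charge revenue discrepancy to squared demand discrepancy'' interval by interval, nor does it track where $\br_D$ versus $\br_\star$ land; the disagreement coefficient does not involve best responses at all. Instead, one observes that on each of the $K_\star+1$ pieces between breakpoints, $\dem_D(\cdot,u)-\dem_\star(\cdot,u)$ is nonincreasing (nonincreasing minus constant), and proves directly that $\dis\leq 2$ for any class of monotone functions via a quantile argument: if $|f(p)|>\delta$ but $\E_{q\sim\nu}[f(q)^2]\leq\eps^2$, monotonicity forces $p$ into a tail of $\nu$ of mass at most $\eps^2/\delta^2$. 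A composition lemma for $(K_\star+1)$-piece function classes then yields $\dis(\cD,D_\star)\leq 2(K_\star+1)$, which is the paper's decomposition lemma you allude to.
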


The requirement of known $K_\star$ is imposed for simplicity and will be removed in \cref{ssec:alg-infinite}. To prove \cref{thm:OPS-regret-bd}, we employ a disagreement coefficient that controls the per-context complexity of balancing exploration and exploitation. In general, for an arbitrary measurable space $\cX$ and function class $\cF:\cX \to \R$, we define the \emph{disagreement coefficient} of $\cF$ by
\begin{equation}
    \dis(\cF) \defeq \sup_{\substack{\eps,\delta > 0}} \sup_{\nu \in \Delta(\cX)} \frac{\delta^2}{\eps^2} \,\PP_{p \sim \nu} \Bigl(\exists f\in\cF: \E_{q\sim\nu}[f(q)^2]\leq \eps^2 \land |f(p)| > \delta\Bigr).
\end{equation}
In our setting, each $f$ will measure the discrepancy between the demand function predicted by some model $D$ and that of the true model $D_\star$, for a fixed context (full details will appear shortly). While not the most primitive complexity measure, variants of this quantity have been successfully used to analyze a wide variety of structured bandits and active learning problems (see Remark~\ref{rem:existing-results}). The $\delta^2/\eps^2$ scaling was historically chosen so that $\dis(\cF)$ can be directly bounded by the domain size $|\cX|$. For our application, $\cX = [0,1]$ is the (infinite) price set and each function $f \in \cF$, induced by a model $D \in \cD$ and context $u \in \cU$, measures the discrepancy between the demand functions of $D$ and $D_\star$ after projection onto $u$, i.e., $f(p) = \dem_D(p,u) - \dem_\star(p,u)$. In particular, we set $$\dis(\cD, D_\star) \defeq \sup_{u \in \cU} \dis\bigl(\{\dem_D(\cdot,u) - \dem_\star(\cdot,u): D \in \cD\}\bigr).$$ By this definition, if $\dis(\cD, D_\star)$ is small and the seller plays price $p \sim \nu$ when faced with context $u$, it is unlikely for a model $D$ to disagree with $D_\star$ at $p$ if it is close to $D_\star$ under the $L^2(\nu)$ norm, i.e., if $\E_{q \sim \nu}[(\dem_D(q,u) - \dem_\star(q,u))^2]$ is small. In particular, playing $p \sim \nu$ guarantees that
\begin{equation*}
    \forall D \in \cD, \qquad \underbrace{|(\rev_D - \rev_\star)(p,u)| > \delta}_{\text{$D$ poorly models revenue at $p$}} \implies \underbrace{\E_{q \sim \nu}[|(\dem_D - \dem_\star)(q,u)|^2] > \eps^2}_{\text{$D$ incurs substantial least squares loss on average}}
\end{equation*}
with probability at least $1 - \frac{\eps^2}{\delta^2}\dis(\cD,D_\star)$. Since \OPS penalizes models with substantial least squares loss, while incentivizing exploration via its optimism bonus, we are able to show the following.

\begin{lemma}
\label{lem:OPS-regret-dis}
\sloppy Under \cref{assump:finite-model-class} with optimism strength $\lambda > 0$, \OPS (\cref{alg:OPS}) achieves regret $\widetilde{O}\bigl(\lambda\, \dis(\cD,D_\star) T + \log(|\cD|)/\lambda\bigr)$.
\end{lemma}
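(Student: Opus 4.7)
The plan is to adapt the Feel-Good Thompson Sampling analysis of \citet{zhang2022thompson}, combining a potential-function bound on the Gibbs posterior update with an application of the disagreement coefficient. I would decompose $\gap_\star(p_t, u_t) = A_t + B_t$ where $A_t \defeq \rev_\star(\br_\star(u_t), u_t) - \rev_{D_t}(\br_{D_t}(u_t), u_t)$ is the \emph{optimism gap} and $B_t \defeq \rev_{D_t}(p_t, u_t) - \rev_\star(p_t, u_t)$ is the \emph{model mismatch at the played price}.

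For the potential step, I would telescope $-\log \mu_t(D_\star)$ through the Gibbs update, using realizability $D_\star \in \cD$ together with the Bernoulli exponential-moment inequality
\[
\E_{y \sim \mathrm{Bern}(\dem_\star(p, u))}\!\bigl[\exp\!\bigl(-(y - \dem_D(p,u))^2 + (y - \dem_\star(p,u))^2\bigr)\bigr] \leq \exp\!\bigl(-\tfrac12(\dem_D - \dem_\star)^2(p, u)\bigr),
\]
to obtain (in expectation)
\[
\E\!\sum_t V_t + \lambda\,\E\!\sum_t A_t \lesssim \log|\cD|, \qquad V_t \defeq \E_{D, D' \sim \mu_t}\!\bigl[(\dem_D(\br_{D'}(u_t), u_t) - \dem_\star(\br_{D'}(u_t), u_t))^2\bigr].
\]
Here $V_t$ is the \emph{decoupled} squared demand error: the exponential-weights average over $D \sim \mu_t$ is independent of the played price $p_t = \br_{D_t}(u_t)$, which is determined by an independent posterior draw $D_t \sim \mu_t$.

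For the disagreement step, let $\nu_t$ be the conditional law of $p_t$, i.e.\ the pushforward of $\mu_t$ through $D \mapsto \br_D(u_t)$, and set $f_D(p) \defeq \dem_D(p, u_t) - \dem_\star(p, u_t)$. Since $|p_t| \leq 1$, $|B_t| \leq \E_{D_t}|f_{D_t}(p_t)|$. Applying the disagreement inequality from the text with thresholds $\eps, \delta > 0$ along the joint law of $(D_t, p_t)$ (whose marginal in $p_t$ is $\nu_t$), and bounding $\Pr_{D_t}[\E_{q \sim \nu_t}[f_{D_t}(q)^2] > \eps^2] \leq V_t/\eps^2$ by Markov, yields $\Pr_{D_t}[|f_{D_t}(p_t)| > \delta] \lesssim \dis(\cD, D_\star)\,\eps^2/\delta^2 + V_t/\eps^2$. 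Optimizing $\eps$ per threshold and peeling over dyadic levels $\delta = 2^{-k}$ for $k = 0, 1, \ldots, O(\log T)$ gives the per-round bound $\E[B_t] \lesssim (\log T)\sqrt{\dis(\cD, D_\star)\,\E[V_t]}$, so by Cauchy-Schwarz $\E\!\sum_t B_t \lesssim (\log T)\sqrt{\dis(\cD, D_\star)\,T\,\E\!\sum_t V_t}$. Applying Young's inequality $\sqrt{XY} \leq \alpha X/2 + Y/(2\alpha)$ with $\alpha \asymp \lambda (\log T)^2$ chosen so that the resulting coefficient on $\sum V_t$ matches $1/\lambda$, and then substituting the potential bound to cancel both the $\sum V_t$ and $\sum A_t$ contributions, produces $\E\!\sum_t (A_t + B_t) \lesssim \lambda\,\dis(\cD, D_\star)\,T\,\mathrm{polylog}(T) + \log|\cD|/\lambda$, as claimed.

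The main obstacle is reconciling the \emph{coupled} pair $(D_t, p_t = \br_{D_t}(u_t))$ produced by the algorithm with the \emph{decoupled} formulation of $\dis(\cD, D_\star)$ (which tests all $D$ against a fresh $p \sim \nu$). The key observation is that the coupled event $\{|f_{D_t}(p_t)| > \delta,\ E_{D_t} \leq \eps\}$ is majorized, under the joint law of $(D_t, p_t)$, by the existential event $\{\exists D \in \cD: |f_D(p_t)| > \delta,\ E_D \leq \eps\}$ whose probability under the marginal $p_t \sim \nu_t$ is exactly the quantity that $\dis(\cD, D_\star)$ controls; a parallel coupling argument transfers the Markov step from coupled to the decoupled second moment $V_t$ that the potential bound delivers. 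A secondary technical point is the variance correction in the potential step: because $\ell_\lambda$ takes values in $[-\lambda, 1]$, a generic Hoeffding-type bound on the Gibbs potential is insufficient, and one must exploit the Bernoulli structure of $y_t$ through the exponential-moment inequality above to obtain the clean $\log|\cD|$ bound.
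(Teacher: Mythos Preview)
Your proposal is correct and follows essentially the same architecture as the paper: the same regret decomposition $\gap_\star(p_t,u_t) = A_t + B_t$ (the paper writes $B_t - \FG_t(g_t)$ with $\FG_t = -A_t$), the same Gibbs/potential argument using the Bernoulli exponential-moment inequality, and the same reduction of $B_t$ to the disagreement coefficient. The one difference is in the decoupling step: the paper invokes a ready-made lemma of \citet{foster21statistical} (their Lemma~E.2), which directly gives $\E_{g\sim\nu}[|g(\br_g)-g^\star(\br_g)|]\le 6\dis\cdot\log^2(\gamma)/\gamma + \gamma V_t$ and is applied per round with $\gamma=1/(4\lambda)$, whereas you re-derive the same trade-off from scratch via the existential-majorization observation, Markov on $E_{D_t}$, dyadic peeling, and then Cauchy--Schwarz plus Young across rounds. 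Both routes land at the identical $\widetilde{O}(\lambda\,\dis\cdot T + \log|\cD|/\lambda)$ bound; yours is more self-contained, the paper's is shorter because it outsources the peeling to \citet{foster21statistical}.
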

The proof in \cref{prf:OPS-regret-dis} combines the \OPS analysis of \cite{zhang2022thompson} with a decoupling lemma due to \cite{foster21statistical}. To control $\dis(\cD,D_\star)$, we show that each function of the form $\dem_D(\cdot,u) - \dem_\star(\cdot,u)$ can be decomposed into $K_\star+1$ non-increasing pieces. In \cref{prf:dis-monotonic}, we prove the following disagreement coefficient bound for non-increasing functions.

\begin{lemma}
\label{lem:dis-monotonic}
Let $\cF:[0,1]\rightarrow\R$ be the set of nonincreasing functions. Then
$\dis(\cF)\leq 2$.
\end{lemma}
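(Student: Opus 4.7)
The plan is to unpack the definition of $\dis(\cF)$ directly, exploit monotonicity to turn the existence of a bad $f$ into a statement about how much mass $\nu$ places near the endpoints of $[0,1]$, and then observe that the resulting ``bad sets'' are themselves initial/final segments to which $\nu$ assigns small mass.

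Fix arbitrary $\eps,\delta>0$ and $\nu\in\Delta([0,1])$, and let $B\subseteq[0,1]$ be the set of $p$ for which there exists $f\in\cF$ with $\E_{q\sim\nu}[f(q)^2]\leq\eps^2$ and $|f(p)|>\delta$. Split $B = B^+\cup B^-$ according to the sign of $f(p)$. If $p\in B^+$ is witnessed by $f$, i.e.\ $f(p)>\delta$, then monotonicity gives $f(q)\geq f(p)>\delta$ for every $q\leq p$, so
\begin{equation*}
    \eps^2 \;\geq\; \E_{q\sim\nu}[f(q)^2] \;\geq\; \delta^2\,\nu([0,p]),
\end{equation*}
which forces $\nu([0,p])\leq\eps^2/\delta^2$. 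Hence $B^+\subseteq A_1 \defeq \{p:\nu([0,p])\leq \eps^2/\delta^2\}$. The symmetric argument, applied to $q\geq p$ when $f(p)<-\delta$, yields $B^-\subseteq A_2\defeq\{p:\nu([p,1])\leq\eps^2/\delta^2\}$.

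Now I would note that $p\mapsto\nu([0,p])$ is nondecreasing, so $A_1$ is a down-set: it is an interval of the form $[0,p^\ast]$ or $[0,p^\ast)$, where $p^\ast\defeq\sup A_1$. If the sup is attained, $\nu(A_1)\leq\nu([0,p^\ast])\leq\eps^2/\delta^2$; otherwise, taking limits from the left along $A_1$ gives $\nu(A_1)\leq\nu([0,p^\ast))\leq\eps^2/\delta^2$ by monotone convergence. The identical reasoning (with $\nu([p,1])$ nonincreasing in $p$) gives $\nu(A_2)\leq\eps^2/\delta^2$. Combining,
\begin{equation*}
    \PP_{p\sim\nu}(B) \;\leq\; \nu(A_1)+\nu(A_2) \;\leq\; 2\eps^2/\delta^2,
\end{equation*}
so $(\delta^2/\eps^2)\,\PP_{p\sim\nu}(B)\leq 2$. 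Since $\eps,\delta,\nu$ were arbitrary, $\dis(\cF)\leq 2$.

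There is no real obstacle: the entire proof rests on the two-line observation that monotonicity of $f$ propagates the pointwise lower bound $|f(p)|>\delta$ to one side of $p$, and that the $L^2(\nu)$-budget $\eps^2$ then caps the $\nu$-mass on that side. The only mildly delicate point is handling a potentially non-attained supremum defining $p^\ast$, which is dispatched by the left-limit argument above; no measurability issue arises because $A_1,A_2$ are manifestly measurable as sublevel sets of monotone functions and we only ever bound $\PP(B)$ from above by $\nu(A_1\cup A_2)$.
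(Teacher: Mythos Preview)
Your proof is correct and follows essentially the same approach as the paper's: both use monotonicity to show that any $p$ witnessing the bad event must satisfy $\nu([0,p])\leq\eps^2/\delta^2$ or $\nu([p,1])\leq\eps^2/\delta^2$, and then bound the $\nu$-mass of these two sublevel sets by $\eps^2/\delta^2$ each. Your treatment of the last step via the supremum $p^\ast$ is slightly more explicit than the paper's, but the argument is otherwise identical.
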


Next, we examine the useful notion of \emph{composite} function classes. A function class $\cG:\cZ \to \R$ is called an $N$-composite of $\cF:\cX \to \R$ if there exists a disjoint partition $\cZ = \cZ_1\cup\dots\cup\cZ_N$ and mappings $\{h_i : \cZ_i \to \cX\}_{i \in [N]}$ such that each $g\in\cG$ can be decomposed as $g(x) = f_i(h_i(x))$ for all $x \in \cZ_i$ and $i \in [N]$, for some choice of $\{f_i : \cX \to \R\}_{i \in [N]}$. We show the following in \cref{prf:dis-composition}.

\begin{lemma}
\label{lem:dis-composition}
If $\cG$ is an $N$-composite of $\cF$, then $\dis(\cG) \leq N\dis(\cF)$.
\end{lemma}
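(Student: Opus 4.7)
The plan is to reduce the disagreement coefficient of $\cG$ on $\cZ$ to $N$ separate applications of the disagreement coefficient of $\cF$ on $\cX$, one per piece of the partition. Fix a distribution $\nu \in \Delta(\cZ)$ and scalars $\eps, \delta > 0$. Let $w_i \defeq \nu(\cZ_i)$, and on each piece with $w_i > 0$ define the pushforward $\nu_i \defeq (h_i)_\ast(\nu(\cdot \mid \cZ_i)) \in \Delta(\cX)$. For any $g \in \cG$ with decomposition $g|_{\cZ_i}(\cdot) = f_i^g(h_i(\cdot))$ and $f_i^g \in \cF$, the pushforward identity yields
\begin{equation*}
    \E_{q \sim \nu}[g(q)^2] \;=\; \sum_{i=1}^N w_i \, \E_{q' \sim \nu_i}[f_i^g(q')^2].
\end{equation*}

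Next I would decompose the disagreement event according to the piece containing the test point. Condition on $p \in \cZ_i$ (which happens with probability $w_i$); then $h_i(p) \sim \nu_i$. If there exists $g \in \cG$ with $\E_\nu[g(q)^2] \le \eps^2$ and $|g(p)| > \delta$, the identity above forces $\E_{q' \sim \nu_i}[f_i^g(q')^2] \le \eps^2/w_i$, while $|f_i^g(h_i(p))| = |g(p)| > \delta$. Hence the witness $f \defeq f_i^g \in \cF$ certifies the disagreement event for $\cF$ on $\cX$ with distribution $\nu_i$, tolerance $\eps_i \defeq \eps/\sqrt{w_i}$, and threshold $\delta$. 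By definition of $\dis(\cF)$ applied with this $(\nu_i, \eps_i, \delta)$,
\begin{equation*}
    \PP_{h_i(p) \sim \nu_i}\!\bigl(\exists f \in \cF : \E_{\nu_i}[f^2] \le \eps_i^2 \,\wedge\, |f(h_i(p))| > \delta\bigr) \;\le\; \dis(\cF) \cdot \frac{\eps_i^2}{\delta^2} \;=\; \dis(\cF) \cdot \frac{\eps^2}{w_i \delta^2}.
\end{equation*}

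Summing over the pieces (and skipping any $i$ with $w_i = 0$, since those contribute nothing to the unconditional probability),
\begin{equation*}
    \PP_{p \sim \nu}\!\bigl(\exists g \in \cG : \E_\nu[g^2] \le \eps^2 \,\wedge\, |g(p)| > \delta\bigr) \;\le\; \sum_{i : w_i > 0} w_i \cdot \dis(\cF) \cdot \frac{\eps^2}{w_i \delta^2} \;\le\; N \dis(\cF) \cdot \frac{\eps^2}{\delta^2}.
\end{equation*}
Multiplying by $\delta^2/\eps^2$ and taking the supremum over $\nu, \eps, \delta$ gives $\dis(\cG) \le N \dis(\cF)$, as required.

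The only subtlety I anticipate is bookkeeping around the pieces with $w_i = 0$ (and making sure $f_i^g$ is interpreted consistently on subsets where $h_i$ is not surjective); both are resolved by restricting the sum to $i$ with $w_i > 0$ and noting that $\{p \in \cZ_i\}$ has zero measure otherwise. The rest is a straightforward pushforward calculation plus an application of the definition of the disagreement coefficient on each piece.
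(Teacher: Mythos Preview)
Your proposal is correct and follows essentially the same approach as the paper: both decompose the disagreement probability over $\nu$ according to which piece $\cZ_i$ the point $p$ lands in, push forward via $h_i$ to obtain $\nu_i\in\Delta(\cX)$, observe that $\E_{q\sim\nu}[g(q)^2]\le\eps^2$ forces $\E_{q'\sim\nu_i}[f_i^g(q')^2]\le\eps^2/w_i$, apply the definition of $\dis(\cF)$ on each piece with rescaled tolerance $\eps/\sqrt{w_i}$, and then sum so that the $w_i$ factors cancel to yield the factor $N$. Your handling of the $w_i=0$ pieces is slightly more explicit than the paper's, but otherwise the arguments coincide.
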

With these results in hand, we bound $\dis(\cD,D_\star) = O(K_\star)$ and prove the theorem. Even though the action space is infinite, the disagreement coefficient matches that which would arise with $K_\star$ actions.

\begin{proof}[Proof of \cref{thm:OPS-regret-bd}]
\label{prf:OPS-regret-bd}
For each $u \in \cU$, the function  $\dem_\star(\cdot,u)$ is piecewise constant with $K_\star+1$ sections, since jumps can only occur at the projections of the $K_\star$ types. For any $D \in \cD$, the demand $\dem_D(\cdot, u)$ is monotonic, since increasing price always reduces demand. Thus, $\dem_D(\cdot,u)-\dem_\star(\cdot,u)$ is non-increasing on each of the $K_\star+1$ sections, and so the function classes defining $\dis(\cD,D_\star)$ are $(K_\star+1)$-composites of the non-increasing function class. Applying Lemmas~\ref{lem:dis-monotonic} and ~\ref{lem:dis-composition} then implies that $\dis(\cD,D_\star) \leq 2(K_\star+1)$. The theorem then follows by the regret bound of Lemma~\ref{lem:OPS-regret-dis} and our choice of $\lambda$.
\end{proof}

\begin{remark}[Comparison to Thompson sampling]
Standard Thompson sampling corresponds to the alternative choice of log losses: $\ell(Q,p,y) = \log \PP_{z \sim \Ber(\dem_Q(p))}[z = y] = y \log \dem_Q(p) + (1-y)\log (1 - \dem_Q(p)).$ In comparison, \OPS uses the squared loss (this is not essential but simplifies analysis) and an optimism bias towards models under which the seller can attain large revenue. This is crucial for obtaining frequentist (rather than Bayesian) regret bounds, as outlined in \cite{zhang2022thompson}. One appealing aspect of the log loss is that it eliminates models which predict that the observed feedback would never occur, mirroring the elimination-based methods for contextual pricing when $K_\star=1$. Thus, one natural question (beyond the current scope) is whether $\OPS$ with log loss achieves regret scaling logarithmically in $T$ when $K_\star = 1$.\smallskip
\end{remark}

\begin{remark}[Relation with existing results]
\label{rem:existing-results}
Variants of the disagreement coefficient and the related Alexander capacity are well-studied in the active learning and empirical process theory literature \citep{hanneke2014theory}. The version above was first considered by \cite{foster21instance}. \cite{foster21statistical} proved a regret bound which translates to $\widetilde{O}(\sqrt{\dis(\cD)T\log|\cD|})$ in our setting, matching \cref{thm:OPS-regret-bd}. However, the estimation-to-decisions (E2D) meta-algorithm which they employ is non-constructive, hence we apply OPS instead. In \cite{zhang2022thompson}, the regret of OPS is controlled by a distinct ``decoupling coefficient.'' Our proof of Lemma~\ref{lem:OPS-regret-dis} shows that a (slightly modified) decoupling coefficient is bounded by the disagreement coefficient.
\end{remark}

\subsection{The General Case}
\label{ssec:alg-infinite}

We now seek to eliminate the assumptions that $D_\star$ belongs to a finite class $\cD$ and that the support size $K_\star$ is known to the seller. For the first point, we loosen the requirement that $D_\star$ belongs to $\cD$ and take $\cD$ to be a large, finite cover of the full distribution space $\Delta(\Theta)$. Then, we replace the uniform prior $\mu_1$ with a non-uniform prior that places less weight on models with large support sizes. Ultimately, this will enable a choice of optimism strength $\lambda$ that is independent of $K_\star$, achieving our second goal. Unfortunately, if $D_\star$ is close but not equal to a model in $\cD$, our analysis of \OPS fails.\smallskip

To remedy this, we employ perturbed OPS (\POPS, \cref{alg:POPS}), an OPS variant with conservatively perturbed and discretized prices. This modified algorithm and its analysis require some new notation. Given a value distribution $Q \in \Delta([0,1])$, define the $\eps$-smoothed demand function $\dem^\eps_Q$ by
\begin{align*}
    \dem^\eps_Q(p) \defeq \E_{\delta \sim \Unif([0,\eps])}[\dem_Q(p - \delta)].
\end{align*}
Similarly, we let $\rev^\eps_Q(p) \defeq p \,\dem^\eps_Q(p)$. Define contextual extensions $\dem^\eps_D(p,u)$ and $\dem^\eps_D(p,u)$ as in the non-smoothed case. For discretization, write $\cP_\eps \defeq \eps \mathbb{N} \cap [0,1]$ for prices which are multiples of $\eps$ and let $\br_Q^\eps \defeq \argmax_{p \in \cP_\eps} \rev^\eps(p)$ (lifting to $\br_D^\eps(u)$ as in the non-smoothed case).\smallskip

Now, at each round $t \in [T]$, \POPS samples a model $D_t \sim \mu_t$ from the current posterior $\mu_t \in \Delta(\cD)$ and computes its (discretized) best response $\hat{p}_t = \br^\eps_{D_t}(u_t)$. Instead of posting price $\hat{p}_t$ directly, \POPS posts $p_t = \max\{\hat{p}_t - \delta_t, 0\}$, where $\delta_t \sim \Unif([0,\eps])$ is a small random perturbation. Due to this perturbation and discretization, we employ the modified loss $\ell^{\,\eps}_\lambda(\proj(D_t,u_t), \hat{p}_t, y_t)$, where
\begin{equation*}
    \ell^{\,\eps}_\lambda(Q, \hat{p}, y) \defeq \left(\dem^\eps_Q(\hat{p}) - y\right)^2 - \lambda \rev_Q^\eps(\br^\eps_Q).
\end{equation*}
The perturbations allow us, in the analysis of \POPS, to couple its trajectory when run with $D_\star \not\in \cD$ to a trajectory where $D_\star \in \cD$. The discretization is needed to bound a modified disagreement coefficient which appears in the analysis.
All together, we obtain the following.\medskip

\begin{algorithm2e}[H]
\SetAlgoNoEnd
\caption{Perturbed OPS (\POPS) for Contextual Pricing with Infinite Model Class}
    \label{alg:POPS}
    \textbf{Input}: discretization error $\eps \in [0,1)$, finite model cover $\cD \subseteq \Delta(\Theta)$, \hspace{4cm} \phantom{a}\hspace{9.75mm}model prior $\mu_1 \in \Delta(\cD)$, optimism strength $\lambda > 0$\;
    \For{each round $t \in [T]$}{
        observe context $u_t$\;
        sample model $D_t \sim \mu_t$ and perturbation strength $\delta_t \sim \Unif([0,\eps\,])$\;
        play $p_t = \max\{\hat{p}_t - \delta_t,0\}$, where $\hat{p}_t = \br^\eps_{D_t}(u_t)$ and observe $y_t$\;\label{step:perturbed-price}
        update $\mu_{t+1}(D) \propto \mu_t(D) \exp\bigl(-\ell^{\,\eps}_\lambda(\proj(D,u_t),\hat{p}_t,y_t)\bigr)$ for each $D \in \cD$\;
    }
\end{algorithm2e}\medskip

\begin{theorem}
\label{thm:pops}
With appropriately tuned parameters, \POPS (\cref{alg:POPS}) achieves regret $\widetilde{O}(K_\star\sqrt{dT})$ without prior knowledge of $K_\star$. Moreover, even for known $K_\star > 1$ and stochastic contexts, no contextual pricing policy can achieve expected regret $o(\sqrt{K_\star dT})$ for all instances if $T \geq 8d K_\star^3 \log(2d)$.
\end{theorem}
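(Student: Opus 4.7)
The proof has two parts: an upper bound extending \OPS to the infinite model class $\Delta(\Theta)$ with unknown $K_\star$, and a matching lower bound via an information-theoretic reduction to $d$ independent non-contextual pricing problems.

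For the upper bound, I plan to instantiate \POPS with a stratified cover $\mathcal{D}=\bigcup_{K\ge 1}\mathcal{D}_\eps^K$, where $\mathcal{D}_\eps^K$ contains all distributions whose support lies on an $\eps$-grid of $\Theta\subseteq[0,1]^d$ (of size $\eps^{-d}$) with $\eps$-quantized weights, so that $\log|\mathcal{D}_\eps^K|=O(dK\log(1/\eps))$ and every $D_\star$ is $O(\eps)$-close in L\'evy metric to some $\widetilde{D}_\star\in\mathcal{D}_\eps^{K_\star}$. To avoid knowing $K_\star$, I would use the non-uniform prior $\mu_1(D)\propto K_D^{-2}/|\mathcal{D}_\eps^{K_D}|$ with $K_D=|\supp(D)|$, so that $\log(1/\mu_1(\widetilde{D}_\star))=O(dK_\star\log(1/\eps))$, a $K_\star$-adaptive replacement for the $\log|\mathcal{D}|$ factor appearing in \cref{lem:OPS-regret-dis}.

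Since $D_\star$ need not lie in $\mathcal{D}$, I would leverage the \POPS perturbation to couple to $\widetilde{D}_\star$: because $\delta_t\sim\Unif([0,\eps])$ is independent of $\hat{p}_t$, the conditional mean $\E[y_t\mid u_t,\hat{p}_t]$ equals $\dem^\eps_{D_\star}(\hat{p}_t,u_t)$, which is $O(\eps)$-close to $\dem^\eps_{\widetilde{D}_\star}(\hat{p}_t,u_t)$ uniformly by L\'evy closeness, yielding a coupling of the \POPS trajectories under $D_\star$ and $\widetilde{D}_\star$ with total cost $O(\eps T)$. On the coupled trajectory I invoke the argument behind \cref{lem:OPS-regret-dis} for the smoothed loss $\ell^{\,\eps}_\lambda$ and the class $\{\dem^\eps_D(\cdot,u)-\dem^\eps_{\widetilde{D}_\star}(\cdot,u)\}_{D\in\mathcal{D}}$. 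Since smoothed demands are monotonic and $\dem^\eps_{\widetilde{D}_\star}(\cdot,u)$ is piecewise linear with $K_\star+1$ constant and $K_\star$ length-$\eps$ linear transition sections, extending \cref{prf:OPS-regret-bd} to a $(2K_\star+1)$-piece composite decomposition in Lemmas~\ref{lem:dis-monotonic}--\ref{lem:dis-composition} --- the monotonic bound on each constant piece and a trivial $O(1)$ bound on each transition interval, which contains $O(1)$ points of $\cP_\eps$ --- bounds the disagreement coefficient by $O(K_\star)$. Combined with the $O(\eps)$ per-round best-response bias from \cref{lem:levy-metric-bd}, the total regret is $\widetilde{O}(\lambda K_\star T+dK_\star\log(1/\eps)/\lambda)+O(\eps T)$; tuning $\lambda\asymp\sqrt{d\log T/T}$ and $\eps=1/T$ yields $\widetilde{O}(K_\star\sqrt{dT})$.

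For the lower bound, I plan a stochastic instance in which each context $u_t$ is sampled uniformly from an orthonormal set $\{e_1,\dots,e_d\}\subseteq\unitsph$ and, in direction $i$, I embed an independent random non-contextual pricing instance with $K_\star$ types whose valuations lie in a window of width $\Theta(\sqrt{d/(K_\star T)})$. Only rounds with $u_t=e_i$ are informative about direction $i$, so a standard Le Cam / KL-divergence argument yields $\Omega(\sqrt{K_\star\cdot T/d})$ regret per direction, and summing over the $d$ independent directions gives $\Omega(\sqrt{K_\star dT})$. The hypothesis $T\ge 8dK_\star^3\log(2d)$ ensures each direction receives enough samples for the per-direction lower bound to activate and that the embedded valuations lie in $[0,1]$. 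The main obstacle is the disagreement-coefficient bound for the smoothed class --- since $\dem^\eps_D-\dem^\eps_{\widetilde{D}_\star}$ is no longer monotonic on the $K_\star$ length-$\eps$ transition intervals of $\dem^\eps_{\widetilde{D}_\star}$ --- together with the bookkeeping needed to transfer the OPS analysis from the coupled $\widetilde{D}_\star$-trajectory back to the true $D_\star$-trajectory.
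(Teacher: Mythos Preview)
Your overall architecture matches the paper's, but there are two genuine gaps, one in each half.

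\textbf{Upper bound.} The step ``$\dem^\eps_{D_\star}(\hat p_t,u_t)$ is $O(\eps)$-close to $\dem^\eps_{\widetilde D_\star}(\hat p_t,u_t)$ uniformly by L\'evy closeness'' is false at the scale you use. Smoothing only converts a L\'evy bound $\tau$ into a sup-norm bound of order $\tau/\eps$, not $\tau$: if $Q=\delta_v$ and $Q'=\delta_{v+\tau}$ with $\tau<\eps$, then $|\dem^\eps_Q(v+\eps)-\dem^\eps_{Q'}(v+\eps)|=\tau/\eps$. With your L\'evy-$O(\eps)$ cover you get only $\|\dem^\eps_{D_\star}-\dem^\eps_{\widetilde D_\star}\|_\infty=O(1)$, so the coupling gives nothing. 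The paper's \cref{lem:metric-entropy} shows that one needs a L\'evy-$(\eps^2/2)$ cover to get sup-norm $\eps$. Relatedly, even once the smoothed demands are $\eps$-close, the coupling of the two \POPS trajectories fails with probability $O(\eps T)$, and on failure the regret can be $T$, so the cost is $O(\eps T^2)$, not $O(\eps T)$. Your choice $\eps=1/T$ therefore gives a linear overhead; the paper takes $\eps=T^{-2}$. Both fixes are painless (the log-cardinality of the cover only changes by constants), but as written the argument does not close.

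\textbf{Lower bound.} The sentence ``in direction $i$, I embed an independent random non-contextual pricing instance'' hides the main difficulty. There is a single type distribution $D_\star$ with $|\supp(D_\star)|=K_\star$, so the marginal of $D_\star$ along every basis direction $e_i$ must share the \emph{same multiset of $K_\star$ probability weights}; only the locations can differ. You therefore cannot plant independent instances from an arbitrary 1D hard family: the standard perturb-one-mass construction (as in \citealp{cesabianchi19pricing}) changes the weight multiset and would leak information across directions. The paper resolves this by designing the 1D hard family $\{Q_j\}$ so that every $Q_j$ has the identical weight multiset $\{w_1,\dots,w_{K_\star}\}$ and differs only in which valuation carries the $+\eps$ and $-\eps$ perturbations; then one fixes the weights of $D_\star$ once and realizes $Q_{j_i}$ along direction $i$ by permuting the $i$th coordinates of the $K_\star$ atoms. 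Your sketch also places valuations in a window of width $\Theta(\sqrt{d/(K_\star T)})$, which caps the per-round gap at that width and yields at most $\sqrt{dT/K_\star}$ total---the wrong $K_\star$ dependence. The paper instead spreads the $K_\star$ valuations over $[1/2,1]$ so that all have equal baseline revenue, and perturbs the \emph{weights} by $\Theta(\sqrt{K_\star d/T})$.
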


Our analysis views the perturbation at Step~\ref{step:perturbed-price} as being performed by nature, rather than the seller.
Treating the seller's action as $\hat{p}_t$, they then observe a purchase ($y_t=1$) with probability 
\begin{equation*}
    \E_{\delta_t}[\dem_\star(\max\{\hat{p}_t - \delta_t,0\},u_t) \mid \hat{p}_t, u_t] = \E_{\delta_t}[\dem_\star(\hat{p}_t - \delta_t,u_t) \mid \hat{p}_t, u_t] = \dem_\star^\eps(\hat{p}_t,u_t),
\end{equation*}
justifying the definitions above. Through this lens, \POPS can viewed as OPS for an alternative, smoothed demand model. To bound regret, we apply a three-step argument. 
\smallskip

First, we show that \POPS maintains our OPS regret bound when $D_\star \in \cD$. This requires bounding a modified decoupling coefficient and is the only step where discretization is used. A direct application of the previous OPS analysis provides a regret bound with respect to a smoothed and discretized benchmark. Fortunately, one-sided Lipschitzness of revenue (Lemma~\ref{lem:one-sided-lipschitzness}) ensures that this modified regret benchmark is within $O(\eps T)$ of the original benchmark, as we prove in \cref{prf:POPS-regret-finite}. 

\begin{lemma}
\label{lem:POPS-regret-finite}
Under \cref{assump:finite-model-class}, using prior $\mu_1 \in \Delta(\cD)$, discretization error $\eps \in [0,1)$, and optimism strength $\lambda > 0$, \POPS (\cref{alg:POPS}) achieves regret $\widetilde{O}\bigl(\lambda\, K_\star T + \log(1/\mu_1(D_\star))/\lambda + \eps T\bigr)$.
\end{lemma}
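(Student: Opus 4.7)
The plan is to reduce \POPS's analysis to a variant of the \OPS regret bound from \cref{lem:OPS-regret-dis}, applied to a smoothed and discretized contextual bandit, and then pay an $O(\eps T)$ cost to translate the resulting guarantee back to the original pricing regret.

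I would first reinterpret \POPS by absorbing the perturbation $\delta_t\sim\Unif([0,\eps])$ into nature: from this viewpoint the seller's action is the discretized price $\hat p_t\in\cP_\eps$ and, marginalizing over $\delta_t$, the purchase indicator $y_t$ is Bernoulli with mean $\dem^\eps_\star(\hat p_t,u_t)$. The modified loss $\ell^{\,\eps}_\lambda$ is then exactly the \OPS squared-loss-plus-optimism loss for this smoothed bandit, so I can rerun the analysis of \cref{lem:OPS-regret-dis} almost verbatim---tracking $\log(1/\mu_1(D_\star))$ in place of $\log|\cD|$ (the standard modification for a non-uniform prior in the underlying potential/KL argument)---to obtain a regret bound of $\widetilde{O}\bigl(\lambda\,\dis^{\eps}(\cD,D_\star)\,T+\log(1/\mu_1(D_\star))/\lambda\bigr)$ against the smoothed, discretized benchmark $\sum_t \rev^\eps_\star(\br^\eps_\star(u_t),u_t)$, where $\dis^{\eps}(\cD,D_\star)$ denotes the disagreement coefficient of $\{\dem^\eps_D(\cdot,u)-\dem^\eps_\star(\cdot,u):D\in\cD\}$ restricted to $\cP_\eps$.

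The step I expect to be the main obstacle is showing $\dis^{\eps}(\cD,D_\star)=O(K_\star)$, because smoothing destroys the exact piecewise-constant structure of $\dem_\star$ that the proof of \cref{thm:OPS-regret-bd} leaned on. Since $D_\star$ has $K_\star$ atoms, $\dem_\star(\cdot,u)$ is piecewise constant with at most $K_\star+1$ pieces, so convolving with $\Unif([0,\eps])$ leaves $\dem^\eps_\star(\cdot,u)$ piecewise linear, consisting of at most $K_\star+1$ constant segments and at most $K_\star$ ``transition'' segments of width $\eps$. On each constant segment, $\dem^\eps_D(\cdot,u)-\dem^\eps_\star(\cdot,u)$ is non-increasing (since $\dem^\eps_D(\cdot,u)$ is), so \cref{lem:dis-monotonic} bounds its disagreement coefficient by $2$; on each transition segment, the grid $\cP_\eps$ contains at most two points, and the disagreement coefficient on any two-point domain is trivially at most $2$. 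Combining these pieces via \cref{lem:dis-composition} yields $\dis^{\eps}(\cD,D_\star)=O(K_\star)$, and this is the one place in the proof where the discretization is genuinely needed.

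Finally, I would bridge the smoothed, discretized benchmark back to the original pricing benchmark using one-sided Lipschitzness (\cref{lem:one-sided-lipschitzness}) in two steps. Letting $\tilde p\in\cP_\eps$ be $\br_\star(u_t)$ rounded down to the grid, one-sided Lipschitzness gives $\rev_\star(\tilde p,u_t)\geq \rev_\star(\br_\star(u_t),u_t)-\eps$, and monotonicity of $\dem_\star$ gives $\rev^\eps_\star(\tilde p,u_t)\geq \rev_\star(\tilde p,u_t)$; combining, $\rev^\eps_\star(\br^\eps_\star(u_t),u_t)\geq \rev_\star(\br_\star(u_t),u_t)-\eps$. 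For the played price, expanding $\E_{\delta_t}[\rev_\star(\hat p_t-\delta_t,u_t)]=\rev^\eps_\star(\hat p_t,u_t)-\E_{\delta_t}[\delta_t\,\dem_\star(\hat p_t-\delta_t,u_t)]\geq \rev^\eps_\star(\hat p_t,u_t)-\eps/2$ (with a trivial boundary correction when $\hat p_t<\eps$) shows the realized revenue is within $\eps/2$ of the smoothed one. Summing over $t$ converts the smoothed regret to the pricing regret at total additive cost $O(\eps T)$, completing the proof.
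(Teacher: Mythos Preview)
Your proposal is correct and follows essentially the same route as the paper: absorb the perturbation into nature to view \POPS as \OPS on the smoothed, discretized problem, invoke the \OPS regret bound (with the non-uniform prior yielding $\log(1/\mu_1(D_\star))$), bound the smoothed disagreement coefficient by $O(K_\star)$, and pay $O(\eps T)$ to reconcile benchmarks via one-sided Lipschitzness. Your disagreement-coefficient argument (constant segments via \cref{lem:dis-monotonic}, transition segments via the two-point domain bound) is a slight repackaging of the paper's observation that $\dem^\eps_\star(\cdot,u)$ restricted to $\cP_\eps$ is piecewise constant with $O(K_\star)$ sections---the transition grid points simply become singleton sections---and your benchmark-bridging step is in fact a bit more careful than the paper's, which asserts $\E[\sum_t \rev^\eps_\star(\hat p_t,u_t)]=\E[\sum_t \rev_\star(p_t,u_t)]$ where only an $O(\eps)$-per-round approximation holds.
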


Next, we show that, if there exists $D \in \cD$ whose smoothed demand function uniformly approximates that of $D_\star$, then the trajectory of \POPS under $D_\star$ can be coupled with that under $D$, such that the trajectories coincide with high probability. See \cref{prf:POPS-coupling} for the proof.

\begin{lemma}
\label{lem:POPS-coupling}
If there exists $D \in \cD$ for which $\|\dem_D^\eps - \dem_{D_\star}^\eps\|_\infty \leq \eps$, then the trajectory $\{u_t,\hat{p}_t,y_t\}_{t=1}^T$ of \POPS with type distribution $D^\star$ can be coupled with that $\{u_t',\hat{p}_t',y_t'\}_{t=1}^T$ of \POPS with type distribution $D$, such that the two trajectories are identical with probability $1-\eps T$.
\end{lemma}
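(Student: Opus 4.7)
The plan is to build an explicit coupling step-by-step on a common probability space, using the same adversary, prior, model-sampling, and perturbation randomness across the two worlds, and coupling only the binary purchase outcomes via a per-round maximal coupling whose failure probability is controlled by the $\eps$-smoothed demand gap.

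\textbf{Setup of the coupling.} Run both copies of \POPS on a common probability space, letting $\omega_\cB$ be the adversary's internal randomness, $\{D_t\}_{t \in [T]}$ be sampled from the same model-distribution seeds, and $\{\delta_t\}_{t \in [T]}$ be shared perturbation variables. The key observation is that the seller-side state at time $t$ (namely $\mu_t$, $u_t$, $D_t$, $\hat p_t$, $\delta_t$, $p_t$) depends only on the common randomness $(\omega_\cB,\{D_s,\delta_s\}_{s \leq t})$ and on the past feedback $\{y_s\}_{s < t}$; in particular it does not depend directly on the underlying type distribution. Consequently, as long as the feedback sequences $\{y_s\}_{s < t}$ and $\{y_s'\}_{s < t}$ in the two worlds agree, the posteriors $\mu_t = \mu_t'$ agree, the contexts $u_t = u_t'$ agree (the adversary is driven by identical histories), and therefore the sampled prices $\hat p_t = \hat p_t'$ and $p_t = p_t'$ agree. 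This reduces the problem to coupling the feedback at each round.

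\textbf{Per-round coupling of feedback.} Condition on the event that the trajectories agree through round $t-1$, so $(u_t,\hat p_t,\delta_t,p_t) = (u_t',\hat p_t',\delta_t',p_t')$. Viewing the perturbation as part of nature (following the remark after \cref{thm:pops}), the conditional purchase probability in the two worlds is $\dem_\star^\eps(\hat p_t,u_t)$ and $\dem_D^\eps(\hat p_t,u_t)$, respectively (with the boundary $\hat p_t < \eps$ handled by extending $\dem_\star$ and $\dem_D$ by $1$ on negative arguments, which preserves these identities). Use the maximal coupling of Bernoulli variables: $y_t = y_t'$ with probability
\[
1 - |\dem_\star^\eps(\hat p_t,u_t) - \dem_D^\eps(\hat p_t,u_t)| \;\geq\; 1 - \|\dem_\star^\eps - \dem_D^\eps\|_\infty \;\geq\; 1 - \eps,
\]
using the hypothesis on $D$. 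If $y_t = y_t'$, the loss fed into the posterior update $\mu_{t+1}(\cdot) \propto \mu_t(\cdot) \exp(-\ell^{\,\eps}_\lambda(\proj(\cdot,u_t),\hat p_t,y_t))$ is identical in both worlds, so $\mu_{t+1} = \mu_{t+1}'$ and the coupling can be extended to round $t+1$.

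\textbf{Union bound.} Let $\tau$ be the first round at which the coupling fails, i.e.\ $y_\tau \neq y_\tau'$. By the per-round bound, $\PP(\tau = t \mid \tau \geq t) \leq \eps$ for every $t$, so $\PP(\tau \leq T) \leq \eps T$. On the complementary event $\{\tau > T\}$, the two trajectories $\{u_t,\hat p_t,y_t\}_{t \in [T]}$ and $\{u_t',\hat p_t',y_t'\}_{t \in [T]}$ coincide, which yields the claim.

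\textbf{Main obstacle.} The only nontrivial bookkeeping is verifying that all the seller-side quantities really are measurable with respect to the shared randomness and past feedback, so that the inductive step ``agreement through $t-1$ implies identical $(u_t, \hat p_t, p_t, \mu_t)$'' truly holds; this is where the clean separation into adversary randomness, model-sampling randomness, perturbation randomness, and Bernoulli randomness is essential. Once that is set up, the maximal-coupling argument and the union bound are routine.
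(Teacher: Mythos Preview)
Your approach matches the paper's: maximal coupling of the Bernoulli feedback at each round, with the smoothed-demand gap controlling the per-round failure probability, followed by a union bound over $T$ rounds. One point to clean up: you declare $\{\delta_t\}$ to be shared across the two worlds, but then invoke the smoothed demand $\dem_\star^\eps(\hat p_t,u_t)$ as the conditional purchase probability. These are inconsistent---once $\delta_t$ is fixed, the purchase probability is the unsmoothed $\dem_\star(p_t,u_t)$, and the hypothesis gives no control on $|\dem_\star(p_t,u_t)-\dem_D(p_t,u_t)|$ pointwise (only on the $\eps$-averaged version). The fix is simply not to share $\delta_t$: the lemma only asks that $\{u_t,\hat p_t,y_t\}$ agree, and (as the paper explicitly notes) the posterior update depends only on $(u_t,\hat p_t,y_t)$, not on the realized $p_t$. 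Coupling $y_t \mid (u_t,\hat p_t) \sim \Ber(\dem_\star^\eps(\hat p_t,u_t))$ with $y_t' \mid (u_t,\hat p_t) \sim \Ber(\dem_D^\eps(\hat p_t,u_t))$ directly then gives the per-round $\eps$ bound you state.
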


Finally, we show that, to obtain a uniform $\eps$-cover of the smoothed demand functions, it suffices to find a $O(\eps^2)$-cover of the type distributions under the L\'evy metric $\dL$, as defined in \eqref{eq:levy-def}. Moreover, we show that the family of all type distributions with support size at most $K$, $\Delta_K(\Theta)$, admits an appropriately small L\'evy cover. For notation, we write $N(X,\mathrm{d},\tau)$ for the size of the smallest subset $X' \subseteq X$ which covers set $X$ under metric $\mathrm{d}$ up to accuracy $\tau$ (i.e., for each $x \in X$, there exists $x' \in X'$ such that $\mathrm{d}(x,x') \leq \tau$). A proof of the following appears in \cref{prf:metric-entropy}.

\begin{lemma}
\label{lem:metric-entropy}
If $D,D' \!\in\! \Delta(\Theta)$ satisfy $\dL(D,D') \!\leq \!\eps^2/2$, then $\|\dem_D^\eps - \dem_{D'}^\eps\|_\infty \leq \eps$. Moreover, we have $\log N\bigl(\Delta_K(\Theta),\dL,\eps\bigr) = \widetilde{O}\left(dK\log 1/\eps\right)$.
\end{lemma}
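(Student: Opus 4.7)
The plan is to split the lemma into two essentially independent parts: a deterministic approximation bound (first claim) and a covering-number bound (second claim).

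For the first claim, I would fix any context $u \in \unitsph$ and pass to the projected valuation distributions $P = \proj(D,u)$ and $P' = \proj(D',u)$, which satisfy $\dL(P,P') \leq \eps^2/2$ by definition of $\dL$ on type distributions. Unrolling the Levy inequality yields, for all $x$, a horizontal/vertical envelope of the form $\dem_P(x + \eps^2/2) - \eps^2/2 \leq \dem_{P'}(x) \leq \dem_P(x - \eps^2/2) + \eps^2/2$. Writing the smoothed demand as $\dem_Q^\eps(p) = \frac{1}{\eps}\int_{p-\eps}^{p}\dem_Q(x)\,dx$, I would apply the upper envelope pointwise inside the integral and change variables to express $\dem_{P'}^\eps(p)$ as an integral of $\dem_P$ over a window of length $\eps$ shifted by $\eps^2/2$, plus an additive term $\eps^2/2$. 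Since $\dem_P \in [0,1]$, the symmetric difference between this shifted window and $[p-\eps,p]$ contributes at most $(\eps^2/2)/\eps = \eps/2$ to the average, so $\dem_{P'}^\eps(p) - \dem_P^\eps(p) \leq \eps/2 + \eps^2/2 \leq \eps$ when $\eps \leq 1$ (the case $\eps > 1$ is trivial since $\dem \in [0,1]$). The reverse inequality follows symmetrically, and taking $\sup_u$ yields the uniform bound.

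For the covering bound, I would construct $\widehat\cD$ explicitly. First, take a Euclidean $(\eps/2)$-net $\Theta_0 \subseteq [0,1]^d$; a volumetric argument bounds $|\Theta_0| \leq (O(1/\eps))^d$. Second, discretize the $(K-1)$-simplex of mixing weights to multiples of $\eps/(2K)$, giving at most $(O(K/\eps))^K$ weight vectors. Let $\widehat\cD$ consist of all distributions with support of size at most $K$ drawn from $\Theta_0$ equipped with such discretized weights (renormalized to sum to one). Then $\log|\widehat\cD| \leq K \log|\Theta_0| + K \log(O(K/\eps)) = O(dK\log(K/\eps)) = \widetilde O(dK\log(1/\eps))$. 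To verify that $\widehat\cD$ is an $\eps$-cover under $\dL$, I would take any $D \in \Delta_K(\Theta)$ with atoms $\{\theta_i\}$ and weights $\{w_i\}$, round each $\theta_i$ to its nearest $\widehat\theta_i \in \Theta_0$ and each $w_i$ to its nearest discretized value $\widehat w_i$, and interpolate: (i) moving atoms shifts $\langle u,\theta_i\rangle$ by at most $\|\theta_i-\widehat\theta_i\| \leq \eps/2$ uniformly in $u$, translating the projected CDF horizontally by at most $\eps/2$ (hence Levy $\leq \eps/2$); (ii) rounding the $K$ weights perturbs the projected CDF vertically by at most $K \cdot \eps/(2K) = \eps/2$ (a pointwise-sup-norm bound that upper-bounds $\dL$). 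The triangle inequality then gives $\dL(D,\widehat D) \leq \eps$.

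The main subtlety is in the first claim: because the Levy metric couples horizontal and vertical slack in the CDFs, one cannot hope to control $\|\dem_P - \dem_{P'}\|_\infty$ uniformly from $\dL$ alone (e.g., two nearby point masses already exhibit sup-norm demand gap near $1$). The crucial observation is that box-smoothing by a window of width $\eps$ absorbs horizontal slack of order $\eps^2$ into vertical error of order $\eps$, which is precisely why the hypothesis demands the quadratic scaling $\dL \leq \eps^2/2$ rather than $\dL \leq \eps$. Once this conversion is pinned down, the covering bound is a routine volumetric computation.
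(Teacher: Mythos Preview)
Your proposal is correct and follows essentially the same strategy as the paper. For the first claim, both arguments exploit that box-smoothing over a window of width $\eps$ converts the L\'evy horizontal slack $\eps^2/2$ into vertical error $O(\eps)$ (your shifted-window bookkeeping is in fact a bit more direct than the paper's integral comparison); for the covering bound, both discretize atom locations on a Euclidean net of $[0,1]^d$ and weights on a grid of mesh $\Theta(\eps/K)$, with the same $\widetilde O(dK\log(1/\eps))$ log-cardinality count.
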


In \cref{prf:pops-ub}, we combine these lemmas to prove the upper bound of \cref{thm:pops}. For the lower bound in \cref{prf:pops-lb}, we modify a construction from \cite{cesabianchi19pricing} for the non-contextual case, so that it can be tensored into $d$-dimensions without leaking information between orthogonal contexts.

\begin{remark}[Bayesian analysis]
Consider the Bayesian setting where $D_\star$ is sampled from a known prior $\mu \in \Delta(\cD)$ (keeping $\cD$ finite for simplicity). Then, Lemma~\ref{lem:POPS-regret-finite} with $\mu_1 = \mu$ and $\eps = 0$ implies that $\OPS$, with $\mu_1$ set to $\mu$ at Step~\ref{step:OPS-init}, achieves Bayesian regret $\widetilde{O}\bigl(\lambda K_\star T + \E_{D_\star \sim \mu}[-\log \mu(D_\star)]/\lambda\bigr) = \widetilde{O}(\lambda K_\star T + H(\mu)/\lambda)$, where $H$ is Shannon entropy. For known $K_\star$, $\lambda$ can be tuned to achieve regret $\widetilde{O}\bigl(\sqrt{K_\star T H(\mu)}\bigr)$, matching the (non-Bayesian) bound of Theorem~\ref{thm:OPS-regret-bd} with $H(\mu)$ instead of $\log|\cD|$.
\end{remark}

\begin{remark}[Misspecified/noisy setting]
We note that \POPS is inherently robust to small misspecifications and noise. Indeed, if $D_\star$ does not have support size $K_\star$ but is within L\'evy distance $\delta$ of the family $\Delta_{K_\star}(\Theta)$, then the proof of \cref{thm:pops} still goes through if we choose $\eps \gets T^{-2} + \sqrt{\delta}$, incurring an additive regret overhead of $\sqrt{\delta} T^2$. The same overhead applies (up to logarithmic factors) in the noisy model where valuations are subject to mean-zero $\delta^2$-sub-Gaussian noise, since the associated convolution can only perturb demands by $\widetilde{O}(\delta)$ under the L\'evy metric. We do not attempt to optimize this overhead but note that one is better off using EXP4 (as below) when $\delta \gg 1/\poly(T)$.
\end{remark}

\begin{remark}[Large $K_\star$]
Our lower bound above can be restated as $\widetilde{\Omega}(\min\{\sqrt{K_\star d T}, d^{1/3}T^{2/3}\})$. When $K_\star = \widetilde{\Omega}((T/d)^{1/3})$ and the second term is active, \POPS no longer has an advantage over EXP4 with na\"ively discretized actions. In particular, one can run EXP4 with a policy cover of log cardinality $\widetilde{O}(d \eps^{-2})$ and price set $\{\eps,2\eps, \dots, 1\}$, incurring regret overhead $\eps T$ due to discretization. This gives a regret bound of $\widetilde{O}(\sqrt{Td^2\eps^{-3}} + \eps T)$, which balances out to $d^{2/5}T^{4/5}$ after tuning. Characterizing the optimal regret in this large $K_\star$ regime is an interesting question beyond the current scope.
\end{remark}

\section{Non-Contextual Refinements via Zooming}
\label{sec:zooming}

Our results from \cref{sec:alg} leave a key open question: what is the optimal regret dependence on~$K_\star$? We %
resolve this question for the non-contextual setting, where $d=1$ and, without loss of generality, $u_t \equiv 1$ for all $t$. To do so, we employ adaptive discretization (aka \emph{zooming}) methods from Lipschitz bandits \citep{slivkins2008} with novel variance-aware confidence intervals and achieve a regret bound of $\widetilde{O}(\sqrt{K_{\star}T})$.
Throughout, we label the $K_\star$ types $\supp(D_\star) = \{\theta^{(1)} < \theta^{(2)} < \dots < \theta^{(K_\star)}\}$ and set $\theta^{(0)} = 0$.

Our algorithm, \Zooming, %
mirrors standard zooming \citep{slivkins2008} with two key adjustments. First, since revenue is only one-sided Lipschitz, we use a dyadic price selection rule inspired by \cite{podimata2021adaptive}. Second, our confidence intervals incorporate empirical variance, a method previously used for variance-aware $K$-armed bandits \citep{audibert2009exploration}.
In more detail, \Zooming maintains a set $S$ of active prices in $[0,1]$ and a variance-aware confidence interval for the expected revenue at each $p \in S$. Each active price ``covers'' 
an interval of neighboring, larger prices, with the width of this covering interval scaling proportionally to that of the confidence interval. The intuition is that a small increase in price can only marginally increase expected revenue, so it is not worth exploring such covered prices.
Initially, every price in $[0,1]$ is covered by some point in $S$. At each round, \Zooming optimistically chooses a price in $S$ and updates its confidence and covering intervals. If after the update there exists an uncovered price, then we add a new point to $S$ which covers it, maintaining the invariant that every price is covered.

\begin{theorem}
\label{thm:zooming}
\Zooming (\Cref{alg:zooming}) achieves regret $\widetilde{O}\bigl(\min\bigl\{\sqrt{K_\star T}, T^{2/3}\bigr\}\bigr)$ for non-contextual pricing%
, without knowledge of $K_\star$. This is minimax optimal up to logarithmic factors when $K_\star > 1$.
\end{theorem}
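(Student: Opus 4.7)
The plan is to prove the upper bound by combining a generic variance-aware zooming regret guarantee with a structural bound on the variance-aware zooming dimension for $K_\star$-type pricing, and then to derive the matching lower bound from prior constructions. The central insight is that, although pricing revenue has a standard zooming dimension of $1$ (yielding the $T^{2/3}$ barrier of \cite{podimata2021adaptive}), its variance-aware counterpart $\ZoomDimV$ is effectively $0$ with scaling constant at most $O(K_\star)$.

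First, I would establish a generic variance-aware zooming regret guarantee, adapting the classical analysis of \cite{slivkins2008}. Using Bernstein-type concentration, each active price $p$ admits a confidence radius on its empirical revenue of the form $r_t(p) \asymp \sqrt{V(p) \log T / n_t(p)} + \log T / n_t(p)$, where $V(p) = p^2 \dem_\star(p)(1-\dem_\star(p))$ and $n_t(p)$ is its pull count. Under the clean event, each active price with gap $\Delta = \gap_\star(p)$ is played at most $\widetilde{O}(V(p)/\Delta^2 + 1/\Delta)$ times before its upper confidence bound drops below that of an optimal arm, contributing regret $\widetilde{O}(V(p)/\Delta + 1)$. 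The one-sided Lipschitzness of revenue (\cref{lem:one-sided-lipschitzness}) justifies the dyadic selection rule and ensures that each active price only needs to cover prices above it, via intervals of the form $[p, p + r_t(p)/\dem_\star(p)]$.

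Next, I would bound $\ZoomDimV$ for $K_\star$-type pricing. Since $\dem_\star$ is piecewise constant on at most $K_\star + 1$ pieces, $\rev_\star$ is piecewise linear and strictly increasing within each piece. On a single piece with demand level $q$, the interval of prices with gap $\leq \Delta$ has length $\Delta/q$, while the covering rule spaces active $\Delta$-suboptimal arms at distance at least $\Delta/q$ apart, since such an arm has confidence radius of order at least $\Delta$. Hence at most $O(1)$ active arms persist on each linear piece at each dyadic scale of $\Delta$, yielding at most $O(K_\star)$ active arms with gap $\leq \Delta$ independently of $\Delta$. Summing regret contributions over active arms and dyadic gap scales, with Cauchy--Schwarz to bundle the $\sum_p V(p)/\Delta_p$ terms, gives $\widetilde{O}(\sqrt{K_\star T})$. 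For large $K_\star$, the variance-agnostic zooming analysis of \cite{podimata2021adaptive} already gives $\widetilde{O}(T^{2/3})$; since \Zooming is oblivious to $K_\star$, it inherits the minimum of the two bounds.

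For the matching lower bound, I would use two constructions. For $K_\star \leq T^{1/3}$, a $K_\star$-type instance in the spirit of \cite{cesabianchi19pricing} (which in turn builds on standard $K$-armed bandit lower bounds) forces $\Omega(\sqrt{K_\star T})$ regret via a KL-based indistinguishability argument. For $K_\star > T^{1/3}$, a $K_\star$-type discretization of the construction of \cite{kleinberg03demand} produces the $\Omega(T^{2/3})$ bound. The hard part will be the variance-aware zooming dimension bound: rigorously showing that only $O(1)$ active arms coexist per linear piece at each scale requires careful handling of the variance--gap tradeoff on each piece and of the covering geometry near the $K_\star$ breakpoints, where demand drops discontinuously and new active arms may be introduced. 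The remaining steps are routine adaptations of standard zooming/Bernstein machinery.
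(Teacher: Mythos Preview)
Your overall architecture matches the paper's: a generic variance-aware regret bound in terms of $\ZoomDimV$, combined with the structural fact that $\ZoomDimV(O(K_\star)) = 0$ for $K_\star$-type pricing, plus the standard $\ZoomDim \le 1$ bound for the $T^{2/3}$ fallback. The lower-bound plan is also essentially what the paper does (invoking the $d=1$ case of \cref{thm:pops}).

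However, the core step---bounding $\ZoomDimV$---has a genuine gap. You claim that on a piece with demand level $q$, active $\Delta$-suboptimal arms are spaced at least $\Delta/q$ apart, because the covering intervals have the form $[p,\,p + r_t(p)/\dem_\star(p)]$. But \Zooming does not (and cannot) use this covering rule: $\dem_\star$ is unknown to the seller, so the algorithm covers via $[p,\,p + r_t(p)]$, using only the cruder inequality $\rev(p') - \rev(p) \le p' - p$ from \cref{lem:one-sided-lipschitzness}. Consequently, active arms at gap scale $\Delta$ are separated by order $\Delta$, not $\Delta/q$, and a piece with small demand $q$ can carry $\Theta(1/q)$ active arms, not $O(1)$. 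Your ``$O(K_\star)$ active arms per scale'' conclusion therefore fails, and the Cauchy--Schwarz step you sketch does not recover the bound.

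The paper's fix is precisely where the variance-aware confidence radii earn their keep. While piece $i$ with demand $d_i$ may hold $O(1/d_i)$ active arms at a given scale, each such arm satisfies $\sigma^2(p) = p^2 \dem_\star(p)(1-\dem_\star(p)) \le d_i$. Hence the \emph{variance-weighted} count on that piece is $O(1/d_i)\cdot d_i = O(1)$, and summing over pieces gives $N_{\mathrm v}(X_\delta,\delta/10) = O(K_\star)$ uniformly in $\delta$---i.e., $\ZoomDimV(O(K_\star)) = 0$. In the regret sum, the Bernstein radii produce per-arm contributions of the form $\sigma^2(p)/\Delta_p$ (plus a lower-order $1/\Delta_p$ term), and it is $\sum_p \sigma^2(p)$ at each dyadic scale, not the raw arm count, that you must bound by $O(K_\star)$. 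Your proposal conflates cardinality of active arms with their variance-weighted cardinality; only the latter is $O(K_\star)$.
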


To bound regret, we employ a variance-aware zooming dimension which controls its performance. For comparison, we first recall the definition of the standard zooming dimension, which characterizes a certain complexity of the expected reward function $\rev_\star(p)$. For each $\delta > 0$, write $X_\delta \defeq \{ p \in [0,1] : \gap_\star(p) \leq \delta \}$ for the set of $\delta$-approximate revenue maximizers. Write $N(X,\delta) \defeq N(X,|\cdot|,\delta)$ for the smallest $\delta$-covering of a set $X \subseteq \R$. Then, for each $c > 0$, define the \emph{zooming dimension}
\begin{equation*}
    \ZoomDim(c) \defeq \inf\{z \geq 0 : N(X_\delta,\delta/10) \leq c \delta^{-z}\: \forall \delta > 0 \}.
\end{equation*}

Standard zooming techniques imply that \Zooming achieves regret $c^{1/(2+z)} T^{1-1/(2+z)}$ when $\ZoomDim(c) \allowbreak \leq z$, even with confidence intervals that do not incorporate empirical variance. 
Since the price interval $[0,1]$ is one-dimensional, one can bound $\ZoomDim(c) \leq 1$ for $c = O(1)$, giving regret $\smash{\widetilde{O}(T^{2/3})}$. Moreover, the set $X_\delta$ of approximate revenue maximizers is contained in the union of $K_\star$ intervals preceding the unknown types, where the interval corresponding to type $\theta^{(i)}$ has width $\delta/\dem_\star(\theta^{(i)}) \leq \delta/\dem_\star(\theta^{(K_\star)})$. This implies $\ZoomDim(c) = 0$ and gives regret $\widetilde{O}(\sqrt{cT})$, but only for $c = O(K_\star/\dem_\star(\theta^{(K_\star)}))$, which may be arbitrarily large for worst-case instances.

To remedy this, we incorporate variance, writing $\sigma^2(p) = p^2\dem_\star(p)(1-\dem_\star(p))$ for the revenue variance when $p$ is played. For the problematic types above with low demand, variance is also low, and the confidence intervals of \Zooming are designed to adapt to this. Specifically, our proof in \cref{prf:regret-zooming} shows that the regret of \Zooming scales according to a \emph{variance-aware zooming dimension}, defined as follows. First define the variance-weighted covering number $N_\mathrm{v}(X,\delta) \defeq \inf \bigl\{ \sum_{x \in X'} \sigma^2(x) : \text{$X'$ is a $\delta$-cover of } X \bigr\}$. Then, for each $c > 0$, we set
\begin{equation*}
    \ZoomDimV(c) \defeq \inf\{z \geq 0 : N_\mathrm{v}(X_\delta,\delta/10) \leq c \delta^{-z}\: \forall \delta > 0 \}.
\end{equation*}
Note that $\texttt{ZoomDimV}(c) \leq \texttt{ZoomDim}(c)$, since $\sigma^2(p) \leq 1$. Moreover, we show $\ZoomDim(10K_\star) = 0$, implying the desired $\widetilde{O}(\sqrt{K_\star T})$ regret. The lower bound follows from \cref{thm:pops} with $d=1$.

\begin{remark}[Comparison to \cite{cesabianchi19pricing}]
\label{rem:zooming-comparison}
The non-contextual setting was previously studied by
\cite{cesabianchi19pricing}, whose Algorithm 1 achieves regret $\widetilde{O}(\sqrt{K_\star T}) + V(V+1)$, where $V = \max_{i \in [K_\star]}\bigl(\theta^{(K_\star)}\bigr)^4\bigl(\theta^{(i)} - \theta^{(i-1)}\bigr)^{-5}$. They maintain a set of intervals which contain all types with substantial probability mass, gradually refining these intervals until they are all of width $O(T^{-1/2})$, at which point they employ UCB over the intervals' left endpoints. Unfortunately, the instance-dependent term $V(V+1)$ can blow up to infinity for worst-case realizations of $D_\star \in \Delta_{K_\star}([0,1])$, in contrast to our guarantee.
\end{remark}

\section{Improved Performance with Ex-Post Type Observability}
\label{sec:extras}

We now study dynamic pricing with heterogeneous buyer types under the additional assumption that the learner can \textit{identify} each arriving type. That is, after setting price $p_t$, the learner observes the purchase feedback $\mathds{1}\{\langle u_t, \theta_t \rangle \geq p_t\}$ \emph{and} some information about the sampled type $\theta_t$. We consider the two models of observability. In the first, the learner only observes an identifier $z_t \in [K_\star]$ that specifies which of the $K_\star$ candidate types was drawn. In practice, the learner need not know $K_\star$ \emph{a priori}. Here, we design an algorithm that matches the $\widetilde O(K_\star \sqrt{dT})$ regret bound of \POPS and can be implemented efficiently, using a contextual search algorithm for $K_\star = 1$ as a subroutine.
In the second model, the learner observes the full type embedding $\theta_t \in \Theta$. Here, we show that best-responding to a simple plug-in estimate for $D_\star$ achieves an improved regret bound of~$\widetilde{O}(\sqrt{\min\{K_\star,d\}T})$.

\xhdr{Observed type identifiers.}
Our algorithm for the model where the learner only observes the identifier uses a $K_\star =1$ contextual search algorithm, \ProjectedVolume \citep{lobel18contextual}, as a subroutine. We maintain a separate instance of this \ProjectedVolume algorithm for each observed type and keep track of the empirical type frequencies along with the number of times we've explored each type. It then adaptively chooses which types to explore (or exploit) based on confidence estimates for the type distribution. We present the full algorithm and prove the following regret bound in Appendix~\ref{prf:unknloc_unknmas}.

\begin{theorem}\label{thm:unknloc_unknmas}
\sloppy Consider contextual dynamic pricing with ex-post type observability where the learner observes which type $z_t \in [K_\star]$ arrived. Then, Algorithm~\ref{alg:ident} achieves regret $\widetilde{O}(K_\star \sqrt{dT})$ and takes no more than time $\mathrm{poly}(K_\star, d, T)$ per round.
\end{theorem}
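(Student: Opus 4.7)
The plan is to analyze Algorithm~\ref{alg:ident} as a combination of per-type contextual search (via \ProjectedVolume instances, one per observed identifier) with empirical frequency estimation over the observed $z_t$'s. The crucial feature of ex-post observability is that each binary feedback bit $y_t$ can be correctly attributed to the specific type $\theta^{(z_t)}$, so every round supplies an informative single-type observation to some \ProjectedVolume instance, regardless of which price was posted.

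I will start from the single-round regret decomposition
\[
    \gap_\star(p_t, u_t) = \underbrace{\rev_\star(\br_\star(u_t),u_t) - \rev_{\hat D_t}(\br_{\hat D_t}(u_t),u_t)}_{\text{(A)}} + \underbrace{\rev_{\hat D_t}(\br_{\hat D_t}(u_t),u_t) - \rev_\star(p_t,u_t)}_{\text{(B)}},
\]
where $\hat D_t$ is the plug-in estimate combining the \ProjectedVolume point estimates $\hat\theta^{(k)}_t$ for each observed type $k$ with the empirical type frequencies $\hat f^{(t)}_k = \tfrac{1}{t-1}\sum_{s<t}\mathds{1}\{z_s = k\}$. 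Term~(A) captures mis-estimation of $D_\star$, while term~(B) measures the algorithmic suboptimality relative to the plug-in best response.

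For (A), I invoke \cref{lem:levy-metric-bd} to bound the revenue gap by $O(\dL(D_\star,\hat D_t))$, and then further by $\widetilde O\bigl(\max_k \|\hat\theta^{(k)}_t - \theta^{(k)}\| + K_\star \max_k |\hat f^{(t)}_k - f_k|\bigr)$; the first summand is controlled by the \ProjectedVolume knowledge-set diameters, and the second by Hoeffding concentration of the identifier counts. For (B), I use a UCB-style certification ensuring that the posted $p_t$ lies within the current confidence radius of $\br_{\hat D_t}(u_t)$. Tuning the exploration schedule so that each type's knowledge-set diameter is driven to $\widetilde O(\sqrt{d/T})$ within $\widetilde O(\sqrt{T})$ informative updates, the two contributions together yield the claimed $\widetilde O(K_\star \sqrt{dT})$ regret bound.

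The main obstacle will be rigorously showing that the confidence-driven schedule can allocate the required number of informative updates to each type within the $T$-round budget, and that the ``mis-targeted'' rounds---in which the posted price was chosen to explore type $k$ but $z_t \neq k$---do not accumulate cost exceeding $\widetilde O(K_\star \sqrt{dT})$. Here ex-post observability is central: even when the posted price is poorly matched to the arriving type, the resulting bit is correctly routed via $z_t$ to the appropriate \ProjectedVolume instance, which prevents exploration waste from blowing up. Computational efficiency is immediate because each round executes one polynomial-time \ProjectedVolume update, a constant-time counter update, and a best-response computation over at most $K_\star$ candidate threshold prices, totalling $\poly(K_\star,d,T)$ per round.
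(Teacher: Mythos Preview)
There are two substantive gaps. First, your plan hinges on bounding $\|\hat\theta^{(k)}_t-\theta^{(k)}\|$ in order to control $\dL(D_\star,\hat D_t)$ in term~(A), but \ProjectedVolume delivers no such Euclidean-norm guarantee under adversarial contexts. Its strong $\eps$-ball regret bound (Lemma~\ref{lem:eps-ball-regret}) only says that the \emph{projected} width $w_t$ along the \emph{current} context $u_t$ exceeds $\eps$ on at most $O(d\log(d/\eps))$ rounds; the knowledge set may remain arbitrarily wide in directions the adversary has not yet presented, so no point estimate $\hat\theta^{(k)}_t$ need converge and the step ``further bound by $\widetilde O(\max_k\|\hat\theta^{(k)}_t-\theta^{(k)}\|+\cdots)$'' is vacuous. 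Second, the claim that ``every round supplies an informative single-type observation to some \ProjectedVolume instance'' misreads both the algorithm and the information structure. In Algorithm~\ref{alg:ident}, the instance $\cA_i$ is updated only when exploration targets type~$i$ \emph{and} $z_t=i$; feedback is never routed to $\cA_{z_t}$ in general. And even if you did route $(p_t,y_t)$ to $\cA_{z_t}$, the $\eps$-ball guarantee for \ProjectedVolume requires each query to be the algorithm's own bisection point---an arbitrary halfspace cut need not shrink its knowledge set meaningfully---so mis-targeted rounds really are wasted for learning~$\theta^{(z_t)}$.

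The paper's proof avoids both issues by never forming a global estimate of $\theta^{(k)}$ or of $\dL$. Instead it works round by round with the width along $u_t$ only: types with $\mathsf{width}(\cA_i,u_t)\le\eps$ are used in the exploit step, while any type with larger width is explored, but for at most $\eps T$ rounds apiece. The non-obvious step is handling types $i$ that \emph{saturate} this cap and therefore may be excluded from the active set $\cS$ at exploitation time. The argument is that over the $\eps T$ exploration rounds devoted to such an $i$, the event $z_t=i$ could have occurred at most $O(d\log(d/\eps))$ times (otherwise $\cA_i$ would already have width $\le\eps$), whence Bernstein's inequality forces $D_\star(i)=\widetilde O(d/(\eps T))$. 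These saturated types therefore carry total mass $\widetilde O(K_\star\sqrt{d/T})$ and can be dropped from the demand comparison in exploitation rounds without spoiling the bound. Your decomposition does not capture this ``rare types are small'' mechanism, which is what actually replaces the unavailable uniform parameter estimate.
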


\xhdr{Observed type vectors.}
If the full type vector $\theta_t$ is revealed at the end of each round, we can achieve improved regret with a simpler algorithm. Indeed, writing $\hat{D}_t = \frac{1}{t}\sum_{\tau=1}^t \delta_{\theta_t}$ for the empirical type distribution after round $t$, we take each $p_t$ as the best response to $\hat{D}_{t-1}$ along the current context.

\begin{theorem}\label{thm:knloc_unknmas}
Consider contextual dynamic pricing with ex-post type observability where the learner observes  $\theta_t \in \Theta$ at the end of each round. Then the algorithm which plays $p_1 = 1/2$ and best response $p_t = \br_{\hat{D}_{t-1}}(u_t)$ for remaining rounds achieves regret $\smash{\widetilde{O}(\sqrt{\min\{K_\star, d\}T})}$. Each price can be computed in time $\poly(K_\star,d)$.
\end{theorem}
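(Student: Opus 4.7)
My plan is to bound the per-round regret by the uniform revenue-approximation error between $\hat{D}_{t-1}$ and $D_\star$, then estimate this error in two complementary ways to obtain both the $\sqrt{K_\star}$ and $\sqrt{d}$ rates.

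The first step is a one-line sandwich bound: for any round $t \geq 2$, since $p_t$ maximizes $\rev_{\hat{D}_{t-1}}(\cdot,u_t)$, we have
\begin{equation*}
\gap_\star(p_t,u_t) \leq 2\sup_{p \in [0,1]} |\rev_\star(p,u_t) - \rev_{\hat{D}_{t-1}}(p,u_t)| \leq 2\sup_{p \in [0,1]} |\dem_\star(p,u_t) - \dem_{\hat{D}_{t-1}}(p,u_t)|,
\end{equation*}
using that $|\rev - \rev'|(p) = p\,|\dem - \dem'|(p) \leq |\dem - \dem'|(p)$ since $p \leq 1$.

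Next, I would bound the right-hand side in two ways. For the $\sqrt{K_\star}$ rate, I use that $\supp(\hat{D}_t) \subseteq \supp(D_\star)$, so writing $q, \hat{q}_t \in \Delta([K_\star])$ for the true and empirical type weights, the discrepancy $\dem_{\hat{D}_t}(p,u) - \dem_\star(p,u) = \sum_i (\hat{q}_t^{(i)} - q^{(i)})\,\mathds{1}\{\langle u,\theta^{(i)}\rangle \geq p\}$ is at most $\|\hat{q}_t - q\|_1$ in absolute value, uniformly over $(p,u)$; a variance computation plus Cauchy--Schwarz yield $\E\|\hat{q}_t - q\|_1 \leq \sum_i \sqrt{q^{(i)}/t} \leq \sqrt{K_\star/t}$. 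For the $\sqrt{d}$ rate, I use that the class $\{\theta \mapsto \mathds{1}\{\langle u,\theta\rangle \geq p\} : u \in \unitsph, p \in \R\}$ of affine half-space indicators on $\Theta$ has VC dimension $O(d)$, so uniform Glivenko--Cantelli gives $\E\bigl[\sup_{p,u}|\dem_{\hat{D}_t}(p,u) - \dem_\star(p,u)|\bigr] = \widetilde{O}(\sqrt{d/t})$.

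Combining these with the sandwich bound, $\E[\gap_\star(p_t,u_t)] = \widetilde{O}\bigl(\sqrt{\min\{K_\star, d\}/(t-1)}\bigr)$ for $t \geq 2$, while $\gap_\star(p_1,u_1) \leq 1$; summing yields $\E[R(T)] = \widetilde{O}\bigl(\sqrt{\min\{K_\star, d\}\,T}\bigr)$. For the computational claim, $\rev_{\hat{D}_{t-1}}(\cdot,u_t)$ is piecewise-constant with at most $\min\{t-1, K_\star\}$ breakpoints (one per distinct projected type $\langle u_t,\theta\rangle$), so the best response can be found by exhaustive search over these, in $\poly(K_\star,d)$ time.

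The main obstacle I anticipate is carefully justifying the VC step: verifying that the two-parameter class indexed by $(u,p)$ still has VC dimension $O(d)$ (which follows by embedding $(u,p)$ into a $(d+1)$-dimensional half-space family acting on $(\theta,1)$) and that the expected supremum concentration rate is $\widetilde{O}(\sqrt{d/t})$ with only logarithmic dependence on $t$. The multinomial $L^1$ bound and the sandwich argument itself are essentially standard.
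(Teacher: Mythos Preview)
Your proposal is correct and follows essentially the same approach as the paper: the sandwich bound reducing per-round regret to uniform demand-approximation error, followed by VC-type concentration and summation over rounds. The only minor difference is that the paper handles both rates via a single VC argument (the threshold class restricted to $\supp(D_\star)$ has VC dimension at most $\min\{K_\star, d+1\}$), whereas you obtain the $\sqrt{K_\star}$ rate through a direct multinomial $L^1$ bound; both routes are standard and yield the same result.
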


The proof in \cref{sec:knloc_unknmas} uses VC dimension bounds to show that the empirical revenue function $\smash{\rev_{\hat{D}_t}}$ converges uniformly in both arguments (price and context) to the true revenue function $\rev_\star$.

\section{Discussion}
\label{sec:discussion}

In this work, we have introduced contextual dynamic pricing with heterogeneous buyers. Our main algorithm achieves a regret bound of $\tilde{O}(K^\star \sqrt{dT})$, optimal up to a $O(\sqrt{K_\star})$ factor and logarithmic terms. Our analysis bounds the disagreement coefficient by leveraging a novel decomposition lemma for aggregate demand functions with $K_\star$ breakpoints, thereby ensuring an efficient exploration-exploitation tradeoff. Additionally, we propose a variance-aware zooming algorithm for the non-contextual pricing case, improving regret dependence on $K_\star$ by incorporating adaptive discretization methods from the Lipschitz bandits literature. Finally, under stronger observability assumptions on the buyers' types, we develop efficient algorithms that significantly reduce regret to $\tilde{O}(\sqrt{\min\{K_\star, d\}T})$, demonstrating the potential benefits of richer feedback in dynamic pricing settings.

There are several natural open questions, the first revolving around \emph{computation}. The run time of \POPS scales with the size of the discretized model class, which is exponential in $K_{\star}$ and $d$. It would be interesting to see if there is a way to alleviate this exponential dependence, while achieving similar regret bounds. The second question is around the optimal dependence on $K_{\star}$ for the general, contextual case. While in Section~\ref{sec:zooming} we showed how to optimize the dependence of our bounds on $K_{\star}$ it is unclear how to scale this approach for the contextual version of the problem. One starting point could be the results on Zooming techniques for contextual bandits (see e.g., \citealp{slivkins2011contextual}). Finally, it would be interesting to see if our results can be applied to more broad families of settings where a learner tries to learn from heterogeneous agents while obtaining only single-bit feedback: for example, it is unknown if the approach presented in this work generalizes to general contextual search settings (i.e., with $\eps$-ball or symmetric loss) or if it generalizes for settings that share some core properties with pricing, but differ in the fundamental techniques used to address them (see e.g., \citealp{ho2014adaptive}).\smallskip

\xhdr{Acknowledgements.} 
The authors thank Jason Gaitonde and Oliver Richardson for helpful discussions on high-dimensional probability and algorithm design. We are also grateful to the Simons Institute for the Theory of Computing, as this work started during the Fall’22 semester-long program on Data-Driven Decision~Processes.

\bibliographystyle{apalike}
\bibliography{references}

\begin{thebibliography}{}

\bibitem[Amin et~al., 2014]{amin2014repeated}
Amin, K., Rostamizadeh, A., and Syed, U. (2014).
\newblock Repeated contextual auctions with strategic buyers.
\newblock In {\em Advances in Neural Information Processing Systems}.

\bibitem[Audibert et~al., 2009]{audibert2009exploration}
Audibert, J.-Y., Munos, R., and Szepesv{\'a}ri, C. (2009).
\newblock Exploration--exploitation tradeoff using variance estimates in multi-armed bandits.
\newblock {\em Theoretical Computer Science}, 410(19):1876--1902.

\bibitem[Bacchiocchi et~al., 2025]{bacchiocchi25piecewise}
Bacchiocchi, F., Castiglioni, M., Marchesi, A., and Gatti, N. (2025).
\newblock Regret minimization for piecewise linear rewards: Contracts, auctions, and beyond.
\newblock In {\em ACM Conference on Economics and Computation (EC)}.

\bibitem[Boucheron et~al., 2013]{boucheron2013concentration}
Boucheron, S., Lugosi, G., and Massart, P. (2013).
\newblock {\em Concentration Inequalities: A Nonasymptotic Theory of Independence}.
\newblock Oxford University Press.

\bibitem[Cesa{-}Bianchi et~al., 2019]{cesabianchi19pricing}
Cesa{-}Bianchi, N., Cesari, T., and Perchet, V. (2019).
\newblock Dynamic pricing with finitely many unknown valuations.
\newblock In {\em Algorithmic Learning Theory}.

\bibitem[Cohen et~al., 2020]{cohen20pricing}
Cohen, M.~C., Lobel, I., and Paes~Leme, R. (2020).
\newblock Feature-based dynamic pricing.
\newblock {\em Management Science}, 66(11):4921--4943.

\bibitem[Fan et~al., 2024]{fan2024policy}
Fan, J., Guo, Y., and Yu, M. (2024).
\newblock Policy optimization using semiparametric models for dynamic pricing.
\newblock {\em Journal of the American Statistical Association}, 119(545):552--564.

\bibitem[Foster et~al., 2021a]{foster21instance}
Foster, D., Rakhlin, A., Simchi-Levi, D., and Xu, Y. (2021a).
\newblock Instance-dependent complexity of contextual bandits and reinforcement learning: A disagreement-based perspective.
\newblock In {\em Conference on Learning Theory (COLT)}.

\bibitem[Foster et~al., 2021b]{foster21statistical}
Foster, D.~J., Kakade, S.~M., Qian, J., and Rakhlin, A. (2021b).
\newblock The statistical complexity of interactive decision making.
\newblock {\em arXiv preprint arXiv:2112.13487}.

\bibitem[Hanneke, 2014]{hanneke2014theory}
Hanneke, S. (2014).
\newblock Theory of disagreement-based active learning.
\newblock {\em Foundations and Trends{\textregistered} in Machine Learning}, 7(2-3):131--309.

\bibitem[Ho et~al., 2014]{ho2014adaptive}
Ho, C.-J., Slivkins, A., and Vaughan, J.~W. (2014).
\newblock Adaptive contract design for crowdsourcing markets: Bandit algorithms for repeated principal-agent problems.
\newblock In {\em ACM Conference on Economics and Computation (EC)}.

\bibitem[Javanmard, 2017]{javanmard2017perishability}
Javanmard, A. (2017).
\newblock Perishability of data: dynamic pricing under varying-coefficient models.
\newblock {\em Journal of Machine Learning Research}, 18(53):1--31.

\bibitem[Javanmard and Nazerzadeh, 2019]{javanmard2019dynamic}
Javanmard, A. and Nazerzadeh, H. (2019).
\newblock Dynamic pricing in high-dimensions.
\newblock {\em Journal of Machine Learning Research}, 20(9):1--49.

\bibitem[Kleinberg et~al., 2008]{slivkins2008}
Kleinberg, R., Slivkins, A., and Upfal, E. (2008).
\newblock Multi-armed bandits in metric spaces.
\newblock In {\em ACM Symposium on Theory of Computing (STOC)}.

\bibitem[Kleinberg and Leighton, 2003]{kleinberg03demand}
Kleinberg, R.~D. and Leighton, F.~T. (2003).
\newblock The value of knowing a demand curve: Bounds on regret for online posted-price auctions.
\newblock In {\em Symposium on Foundations of Computer Science (FOCS)}.

\bibitem[Krishnamurthy et~al., 2020]{krishnamurthy2020contextual}
Krishnamurthy, A., Langford, J., Slivkins, A., and Zhang, C. (2020).
\newblock Contextual bandits with continuous actions: Smoothing, zooming, and adapting.
\newblock {\em Journal of Machine Learning Research}, 21(137):1--45.

\bibitem[Krishnamurthy et~al., 2023]{krishnamurthy23contextual}
Krishnamurthy, A., Lykouris, T., Podimata, C., and Schapire, R. (2023).
\newblock Contextual search in the presence of adversarial corruptions.
\newblock {\em Operations Research}, 71(4):1120--1135.

\bibitem[Liu et~al., 2021]{liu21contextual}
Liu, A., Leme, R.~P., and Schneider, J. (2021).
\newblock Optimal contextual pricing and extensions.
\newblock In {\em Symposium on Discrete Algorithms (SODA)}.

\bibitem[Lobel et~al., 2018]{lobel18contextual}
Lobel, I., {Paes Leme}, R., and Vladu, A. (2018).
\newblock Multidimensional binary search for contextual decision-making.
\newblock {\em Operations Research}, 66(5):1346--1361.

\bibitem[Luo et~al., 2024]{luo2024distribution}
Luo, Y., Sun, W.~W., and Liu, Y. (2024).
\newblock Distribution-free contextual dynamic pricing.
\newblock {\em Mathematics of Operations Research}, 49(1):599--618.

\bibitem[Maurer and Pontil, 2009]{maurer2009empirical}
Maurer, A. and Pontil, M. (2009).
\newblock Empirical {B}ernstein bounds and sample variance penalization.
\newblock In {\em Conference on Learning Theory (COLT)}.

\bibitem[{Paes Leme} et~al., 2022]{leme22corruptcontextual}
{Paes Leme}, R., Podimata, C., and Schneider, J. (2022).
\newblock Corruption-robust contextual search through density updates.
\newblock In {\em Conference on Learning Theory (COLT)}.

\bibitem[{Paes Leme} and Schneider, 2022]{leme22contextual}
{Paes Leme}, R. and Schneider, J. (2022).
\newblock Contextual search via intrinsic volumes.
\newblock {\em SIAM Journal on Computing}, 51(4):1096--1125.

\bibitem[Paes~Leme et~al., 2023]{leme2023pricing}
Paes~Leme, R., Sivan, B., Teng, Y., and Worah, P. (2023).
\newblock Pricing query complexity of revenue maximization.
\newblock In {\em ACM-SIAM Symposium on Discrete Algorithms (SODA)}, pages 399--415.

\bibitem[Podimata and Slivkins, 2021]{podimata2021adaptive}
Podimata, C. and Slivkins, A. (2021).
\newblock Adaptive discretization for adversarial {L}ipschitz bandits.
\newblock In {\em Conference on Learning Theory (COLT)}.

\bibitem[Shalev-Shwartz and Ben-David, 2014]{shalev2014understanding}
Shalev-Shwartz, S. and Ben-David, S. (2014).
\newblock {\em Understanding machine learning: From theory to algorithms}.
\newblock Cambridge University Press.

\bibitem[Slivkins, 2011]{slivkins2011contextual}
Slivkins, A. (2011).
\newblock Contextual bandits with similarity information.
\newblock In {\em Conference On Learning Theory (COLT)}.

\bibitem[Slivkins et~al., 2019]{slivkins2019introduction}
Slivkins, A. et~al. (2019).
\newblock Introduction to multi-armed bandits.
\newblock {\em Foundations and Trends{\textregistered} in Machine Learning}, 12(1-2):1--286.

\bibitem[Xu and Wang, 2022]{xu2022towards}
Xu, J. and Wang, Y.-X. (2022).
\newblock Towards agnostic feature-based dynamic pricing: Linear policies vs linear valuation with unknown noise.
\newblock In {\em International Conference on Artificial Intelligence and Statistics (AISTATS)}.

\bibitem[Zhang, 2022]{zhang2022thompson}
Zhang, T. (2022).
\newblock Feel-good {T}hompson sampling for contextual bandits and reinforcement learning.
\newblock {\em SIAM Journal on Mathematics of Data Science}, 4(2):834--857.

\bibitem[Zuo, 2024]{zuo2024corruption}
Zuo, S. (2024).
\newblock Corruption-robust {L}ipschitz contextual search.
\newblock In {\em International Conference on Algorithmic Learning Theory (ALT)}.

\end{thebibliography}

\newpage
\begin{center}
\begin{Large}\textbf{Appendix}
\end{Large}
\end{center}

\appendix
\section{Proofs for \texorpdfstring{\cref{sec:prelims}}{Section~\ref{sec:prelims}}}
\label{app:prelims-proofs}

\subsection{One-Sided Lipschitzness of Revenue (Proof of Lemma~\ref{lem:one-sided-lipschitzness})}
\label{prf:one-sided-lipschitzness}
We simply bound $\rev_Q(p') - \rev_Q(p) = p'\dem_Q(p') - p\dem_Q(p) \leq \dem_Q(p)(p'-p) \leq p'-p$, using monotonicity of demand functions.\qed

\subsection{Pricing Implication of L\'evy Metric Bound (Proof of Lemma~\ref{lem:levy-metric-bd})}
\label{prf:levy-metric-bd}
By the definition of $\dL$, it suffices to prove the lemma when $d=1$. The first revenue lower bound holds by Lemma~\ref{lem:one-sided-lipschitzness}. For the second, omitting the context $u$ and writing $[x]_+ = \max\{x,0\}$, we use the L\'evy metric guarantee to bound
\begin{align*}
    \rev_{D'}(\pi) &= [\br_D - \eps]_+\, \dem_{D'}([\br_D - \eps]_+)\\
    &\geq [\br_D - \eps]_+\, \dem_{D'}(\br_D - \eps) \tag{demand equals 1 for $p \leq 0$}\\
    &\geq [\br_D - \eps]_+\, (\dem_{D}(\br_D - 2\eps) - \eps) \tag{$\dL$ bound}\\
    &\geq [\br_D - \eps]_+\, (\dem_{D}(\br_D) - \eps) \tag{monotonicity of $\dem_D$}\\
    &\geq \rev_D(\br_D) - 2\eps\\
    &\geq \rev_D(\br_{D'}) - 2\eps \tag{$\br_D$ maximizes $\rev_D$}\\
    &= \br_{D'} \, \dem_D(\br_{D'}) - 2\eps\\
    &\geq \br_{D'} \, (\dem_{D'}(\br_{D'} - \eps) - \eps) - 2\eps \tag{$\dL$ bound}\\
    &\geq \br_{D'} \, \dem_{D'}(\br_{D'}) - 3\eps\\
    &= \rev_{D'}(\br_{D'}) - 3\eps,
\end{align*}
as desired. \qed

\section{Proofs for \texorpdfstring{\cref{sec:alg}}{Section 3}}
\label{app:alg-proofs}

To unify analysis of \cref{ssec:alg-finite,ssec:alg-infinite}, we introduce a more general problem setup and algorithm.

\subsection{Generalized Problem Setup}
\label{prf:generalized-setup}

To start, we replace the price set $[0,1]$ with a subset $\cP \subseteq [0,1]$, which will remain $[0,1]$ in \cref{ssec:alg-finite} but will be restricted to a finite set for \cref{ssec:alg-infinite}. Then, instead of selecting a type distribution $D_\star$, we have the adversary choose a demand function $f^\star$ which maps price $p \in \cP$ and context $u \in \cU$ to a purchase probability $f^\star(p,u) \in [0,1]$. Then, at round $t$, if the adversary selects context $u_t \in \cU$ and the learner posts price $p_t \in \cP$, purchase decision $y_t \in \{0,1\}$ is sampled independently from $\Ber(f^\star(p_t,u_t))$. This abstracts away our previous notions of buyer types and values and will also model the smoothed environment of \cref{ssec:alg-infinite}. We impose the corresponding notion of realizability.

\begin{setting}[realizability, general]
\label{setting:realizability-general}
Under the setup described above, the demand function $f^\star$ belongs to a known, finite class $\cF$ of measurable functions from $\cP \times \cU$ to $[0,1]$.
\end{setting}

Often, we shall fix a context and consider univariate (non-contextual) demand functions. Given a univariate demand function $g:[0,1] \to [0,1]$, we define the corresponding revenue function $\rev_g(p) = p \cdot g(p)$, best-response $\br_g = \argmax_{p \in \cP} \rev_g$ (breaking ties arbitrarily), and gap $\gap_g(p) = \rev_g(\br_g) - \rev_g(p)$. For a contextual demand function $f:[0,1] \times \cU \to [0,1]$ and a context $u \in \cU$, write $\proj(f,u)$ for the induced univariate demand function $p \mapsto f(p,u)$. We define $\rev_f(p,u) = \rev_{\proj(f,u)}(p)$, $\br_f(u) = \br_{\proj(f,u)}$, $\gap_f(p,u) = \gap_{\proj(f,u)}(p)$, and $\proj(\cF,u) \defeq \{\proj(f,u) : f \in \cF\}$, along with $\rev_\star \defeq \rev_{f^\star}$, $\br_\star \defeq \br_{f^\star}$, and $\gap_\star \defeq \gap_{f^\star}$. Finally, we define
\begin{equation*}
    \dis(\cF, f_\star) \defeq \sup_{u \in \cU} \dis\bigl(\{f(\cdot,u) - f_\star(\cdot,u): f \in \cF\}\bigr),
\end{equation*}
generalizing the definition in \cref{ssec:alg-finite}. Regret is defined as the sum of gaps $\sum_{t=1}^T\gap_\star(p_t,u_t)$.

\subsection{Generalized OPS and its Regret Guarantee}
\label{prf:generalized-OPS}

We now present a extension of \OPS and \POPS to the generalized setup of \cref{prf:generalized-setup}. Both can be recovered for appropriate choices of $\cF$ and $\cP$, which we will discuss later. First, for a univariate demand function $g:\cP \to [0,1]$, price $p$, and purchase decision $y$, we define loss 
\begin{equation}
\label{eq:generalized-loss}
    \ell_\lambda(g,p,y) \defeq (g(p) - y)^2 - \lambda \rev_g(\br_g).
\end{equation}
We now adapt OPS to this setting, introducing \GOPS (\cref{alg:generalized-OPS}).
\smallskip

\begin{algorithm2e}[H]
\caption{\GOPS: Generalized OPS for Contextual Pricing with Finite Model Class}
    \label{alg:generalized-OPS}
    \textbf{Input}: finite demand function class $\cF$,  model prior $\mu_1 \in \Delta(\cF)$, optimism strength $\lambda > 0$\;
    \For{each round $t \in [T]$}{
        observe context $u_t$\;
        sample demand function $f_t \sim \mu_t$\;
        play $p_t = \br_{f_t}(u_t)$ and observe $y_t$\;
        update $\mu_{t+1}(f) \propto \mu_t(f) \exp\bigl(-\ell_\lambda(\proj(f,u_t),p_t,y_t)\bigr)$ for each $f \in \cF$\;
    }
\end{algorithm2e}\smallskip

We prove the following regret bound.
\begin{lemma}
\label{lem:generalized-OPS-regret}
Under \cref{setting:realizability-general}, with model prior $\mu_1 \in \Delta(\cF)$ and optimism strength $\lambda \geq 4/T$, \GOPS (\cref{alg:generalized-OPS}) achieves regret $ 25\lambda (\dis(\cF,f_\star) \lor 1) T\log^2(T) + \log(1/\mu_1(f_\star))/\lambda$.
\end{lemma}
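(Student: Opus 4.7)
I follow the template of \cite{zhang2022thompson} for Feel-Good Thompson Sampling, combined with the decoupling-via-disagreement reduction of \cite{foster21statistical}. Abbreviate $\ell_t(f) \defeq \ell_\lambda(\proj(f,u_t), p_t, y_t)$, $\Delta_t(f) \defeq f(p_t,u_t) - f_\star(p_t,u_t)$, and $R_t(f) \defeq \rev_f(\br_f(u_t),u_t) - \rev_\star(\br_\star(u_t),u_t)$. Telescoping the exponential-weights update $\mu_{t+1}(f) \propto \mu_t(f)\exp(-\ell_t(f))$ in the standard way (using $\mu_{T+1}(f_\star) \le 1$) yields the potential inequality
\[
\sum_{t=1}^T -\log \E_{f \sim \mu_t}\bigl[\exp\bigl(-(\ell_t(f) - \ell_t(f_\star))\bigr)\bigr] \le \log(1/\mu_1(f_\star)),
\]
and since $\ell_t \in [-\lambda, 1]$, the elementary inequality $-\log \E[e^{-X}] \ge \E[X] - \E[X^2]$ (valid up to a constant rescaling when $|X| \lesssim 1$) turns this into a first-moment inequality with a bounded second-order correction.

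\textbf{Squared-loss identity and regret decomposition.} Given $p_t, u_t$, we have $y_t \sim \Ber(f_\star(p_t,u_t))$, so the identity $\E_{y_t}[(f(p_t,u_t)-y_t)^2 - (f_\star(p_t,u_t)-y_t)^2] = \Delta_t(f)^2$ together with the conditional expectation of the optimism term collapses the previous inequality into
\[
\sum_t \E\bigl[\E_{f \sim \mu_t}[\Delta_t(f)^2]\bigr] \le \lambda \sum_t \E\bigl[\E_{f \sim \mu_t}[R_t(f)]\bigr] + \log(1/\mu_1(f_\star)) + \widetilde{O}(1).
\]
Because $p_t = \br_{f_t}(u_t)$, the pointwise identity $\gap_\star(p_t,u_t) = [\rev_{f_t}(p_t,u_t) - \rev_\star(p_t,u_t)] - R_t(f_t)$ holds, and the bracketed term is bounded in absolute value by $|\Delta_t(f_t)|$ via $|\rev_f(p,u) - \rev_{f'}(p,u)| = p|f(p,u) - f'(p,u)| \le |f(p,u) - f'(p,u)|$. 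Combining the two displays then yields
\[
\E[R_{\cA,\cB}(T)] \le \E\sum_t |\Delta_t(f_t)| - \frac{1}{\lambda}\E\sum_t \E_{f \sim \mu_t}[\Delta_t(f)^2] + \frac{\log(1/\mu_1(f_\star))}{\lambda} + \widetilde{O}(1/\lambda).
\]

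\textbf{Disagreement-coefficient bound (main obstacle).} The crux is controlling the \emph{coupled} first-moment term $\E[|\Delta_t(f_t)|]$ -- in which $f_t$ and $p_t = \br_{f_t}(u_t)$ are linked -- by the \emph{averaged} squared-moment term. Let $\nu_t$ denote the marginal law of $p_t$ given the history. Applied to the singleton subclass $\{f - f_\star\}$ for any $f \in \cF$, the disagreement-coefficient definition gives the tail bound $\PP_{p \sim \nu_t}[|f(p,u_t) - f_\star(p,u_t)| > \delta] \le \dis(\cF, f_\star) \cdot \E_{\nu_t}[(f - f_\star)^2]/\delta^2$. A dyadic integration of $\E[|\cdot|] = \int_0^1 \PP[|\cdot| > \delta]\,\dd\delta$, truncated at scale $1/T$ and summed over $O(\log T)$ dyadic levels $\delta \in \{2^{-k} : k \le \log T\}$, combined with Jensen in $f \sim \mu_t$, yields
\[
\E_{f \sim \mu_t, p \sim \nu_t}[|f(p,u_t) - f_\star(p,u_t)|] \le C \log T \cdot \sqrt{(\dis(\cF, f_\star) \lor 1) \cdot \E_{f,p}[(f(p,u_t) - f_\star(p,u_t))^2]}.
\]
A decoupling-coefficient step \cite{foster21statistical} then replaces the independent $p \sim \nu_t$ with the coupled $p_t = \br_{f_t}(u_t)$ at the cost of a constant factor (Remark~\ref{rem:existing-results} alludes to this reduction), and Cauchy-Schwarz across $t$ gives $\sum_t \E[|\Delta_t(f_t)|] \le C' \log T \sqrt{(\dis \lor 1) T S}$, where $S \defeq \E\sum_t \E_{f \sim \mu_t}[\Delta_t(f)^2]$. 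The main technical difficulties are (i) decoupling $f_t$ from $p_t$ while retaining the correct marginals and (ii) tuning the dyadic truncation so that the contribution of small $\delta$ (where the tail bound is vacuous) remains bounded.

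\textbf{AM-GM and constants.} Plugging into the regret inequality, $\E[R_{\cA,\cB}(T)] \le C' \log T \sqrt{(\dis \lor 1) T S} - S/\lambda + \log(1/\mu_1(f_\star))/\lambda + \widetilde{O}(1/\lambda)$. Maximizing the right-hand side over $S \ge 0$ (via $\sqrt{xy} \le \lambda x / 2 + y/(2\lambda)$ with an extra square on the coefficient) gives $\E[R_{\cA,\cB}(T)] \le ((C')^2/4) \cdot \lambda (\dis(\cF,f_\star) \lor 1) T \log^2 T + \log(1/\mu_1(f_\star))/\lambda$, matching the stated bound once constants are tracked (the $25$ absorbs the dyadic constants from the disagreement argument and the factor $1/4$ from AM-GM). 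The hypothesis $\lambda \ge 4/T$ is needed only to ensure that the second-order correction in the log-MGF inequality from Step~1 remains dominated by the principal regret term.
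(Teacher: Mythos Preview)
Your architecture matches the paper: the exponential-weights potential, the identity $\gap_\star(p_t,u_t) = [\rev_{f_t}(p_t,u_t) - \rev_\star(p_t,u_t)] - R_t(f_t)$, and the plan to control the bracket via a decoupled least-squares quantity. The gap is in your ``main obstacle'' paragraph. Applying the disagreement coefficient to the singleton $\{f - f_\star\}$ recovers nothing beyond Markov's inequality (the constant there is $1$, not $\dis$), and your displayed bound between \emph{uncoupled} first and second moments is then just Cauchy--Schwarz. What you actually need is the \emph{coupled} first moment $\E_{f \sim \mu_t}[|f(\br_f(u_t),u_t) - f_\star(\br_f(u_t),u_t)|]$ controlled by the decoupled second moment, and you defer this to a one-line ``decoupling step at the cost of a constant factor.'' That step is false in general (take $h(f,p) = \1\{p = \br_f\}$: the coupled expectation is $1$, the decoupled one can be $1/|\cF|$) and is precisely where the disagreement coefficient actually does its work---you have the order reversed. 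The paper instead invokes Foster et al.'s Lemma~E.2 (restated as Lemma~\ref{lem: dc bound}) as a black box,
\[
\E_{g \sim \nu}\bigl[|g(\br_g) - g^\star(\br_g)|\bigr] \le \frac{6\,\dis(\cG)\log^2(\gamma \lor e)}{\gamma} + \gamma\,\E_{\tilde g, g \sim \nu}\bigl[(\tilde g(\br_g) - g^\star(\br_g))^2\bigr],
\]
which already packages the coupled-to-decoupled comparison together with the AM--GM split. Setting $\gamma = 1/(4\lambda)$ and combining per-round with the potential bound (Lemma~\ref{lem:potential-diff-comparison}) yields the result directly, with no Cauchy--Schwarz across rounds; the $\log^2 T$ in the statement is the $\log^2(\gamma \lor e)$ here, using $\lambda \ge 4/T$.

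Two smaller points. Your residual $\widetilde O(1/\lambda)$ should be $O(\lambda T)$: it is the per-round $\tfrac{3}{2}\lambda^2$ MGF error, summed over $T$ rounds and divided by $\lambda$. And the crude inequality $-\log\E[e^{-X}] \ge \E[X] - \E[X^2]$ applied to $X = \ell_t(f) - \ell_t(f_\star)$ gives the wrong sign on $\Delta_t(f)^2$, since the squared-loss difference already has magnitude up to $1$; the paper instead splits via H\"older and handles the squared-loss part through sub-Gaussianity of $y_t$ and the optimism part through Hoeffding's lemma, recovering the needed $-\tfrac{1}{4}\E_{g \sim \nu_t}[\LS_t(g)]$ with only an $O(\lambda^2)$ remainder.
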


Our proof employs the following decoupling lemma.

\begin{lemma}[\citealp{foster21statistical}]
\label{lem: dc bound}
Let $\cG$ be a finite family of univariate demand functions and fix $g^\star \in \cG$. Then, for any $\nu \in \Delta(\cG)$ and $\gamma > 0$, we have
\begin{align*}
    \E_{g \sim \nu}\bigl[|g(\br_g) - g^\star(\br_g)|\bigr]\leq 6\frac{\dis(\cG)\log^2(\gamma\lor e)}{\gamma}+\gamma\E_{\tilde g, g\sim\nu}\bigl[(\tilde g(\br_g) - g^\star(\br_g))^2\bigr]\,.
\end{align*}
\end{lemma}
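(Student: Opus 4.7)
Define $h_g(p) := g(p) - g^\star(p)$ so that $h_g$ takes values in $[-1,1]$. Let $G \sim \nu$, $P := \br_G$, and write $\mu \in \Delta(\cP)$ for the pushforward law of $P$. Set $X := |h_G(P)|$, which is the LHS random variable. The first useful observation is that the decoupled RHS quantity rewrites as
\[
V := \E_{\tilde g, g \sim \nu}\bigl[h_{\tilde g}(\br_g)^2\bigr] = \E_{G \sim \nu}\bigl[\E_{P' \sim \mu}[h_G(P')^2]\bigr],
\]
since integrating over an independent $\tilde g \sim \nu$ and evaluating at $\br_g$ is precisely integration under the marginal $\mu$.

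\textbf{Tail bound via Markov plus disagreement coefficient.} The technical heart of the proof is the pointwise inequality
\[
\PP[X > \delta] \;\leq\; \frac{V}{\eps^2} + \frac{\eps^2\,\dis(\cG)}{\delta^2} \qquad \text{for every } \eps, \delta > 0.
\]
To derive it, set $E_G := \E_{P' \sim \mu}[h_G(P')^2]$ and decompose $\{X > \delta\} \subseteq \{E_G > \eps^2\} \cup \{E_G \leq \eps^2,\, X > \delta\}$. Markov's inequality bounds the first event by $V/\eps^2$. For the second, define $\cG_\eps := \{g \in \cG : \E_{P' \sim \mu}[h_g(P')^2] \leq \eps^2\}$: on the event $\{E_G \leq \eps^2\}$ we have $G \in \cG_\eps$, so the second event is contained in $\{\exists g \in \cG_\eps : |h_g(P)| > \delta\}$. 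This latter event depends only on $P$, which has marginal law $\mu$; applying the definition of $\dis(\cG)$ with test distribution $\nu' = \mu$ (and the class $\{h_g : g \in \cG\}$) bounds its probability by $(\eps/\delta)^2 \dis(\cG)$.

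\textbf{Layer-cake integration.} Since $X \in [0,1]$, I would write $\E[X] = \int_0^1 \PP[X > \delta]\,d\delta$ and split it into a trivial region $\delta \leq \delta_0$ (using $\PP \leq 1$, contributing $\delta_0$) and a tail region $\delta > \delta_0$ on which an adaptive, $\delta$-dependent choice of $\eps$ together with a dyadic peeling over $O(\log\gamma)$ scales of $\eps$ converts the pointwise tail bound into the target $\widetilde{O}\bigl(\dis(\cG)\log^2(\gamma \lor e)/\gamma + \gamma V\bigr)$. The two logarithmic factors originate from (a) the layer-cake integral $\int_{\delta_0}^{1} d\delta/\delta$, and (b) the $O(\log\gamma)$ dyadic $\eps$-scales used in the peeling. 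Tracking constants through this two-scale argument produces the factor $6$.

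\textbf{Main obstacle.} A naive pointwise optimization of $\eps$ in the tail bound (choosing $\eps^2 = \delta\sqrt{V/\dis(\cG)}$) yields only $\PP[X > \delta] \leq 2\sqrt{V\dis(\cG)}/\delta$, whose integration gives $\widetilde{O}\bigl(\sqrt{V\dis(\cG)}\bigr)$; this bound is at best $\widetilde{O}\bigl(\sqrt{\dis(\cG)/\gamma} + V\gamma\bigr)$ after AM-GM, strictly weaker than the claim in the low-$\dis(\cG)$ regime. Upgrading to the strictly linear $\dis(\cG)/\gamma$ dependence of the lemma requires the dyadic $\eps$-peeling in the tail region, and this is precisely the step that forces the second logarithm and yields $\log^2(\gamma \lor e)$ rather than $\log(\gamma \lor e)$. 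Pinning down exactly this two-scale choice---so that the $V\gamma$ term remains purely linear and the entire cost of the square-root-gap is absorbed into the $\dis(\cG)/\gamma$ factor's logarithmic exponent---is the main technical obstacle.
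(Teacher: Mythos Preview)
The paper does not prove this lemma at all: immediately after the statement it writes ``This is simply Lemma E.2 of \cite{foster21statistical} with function class $\{g - g^\star : g \in \cG\}$ and $\Delta \to 0$,'' and uses it as a black box. So there is no in-paper proof to compare your attempt against.

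That said, your setup and the tail bound
\[
\PP[X > \delta] \leq \frac{V}{\eps^2} + \frac{\eps^2\,\dis(\cG)}{\delta^2}
\]
are correct and are precisely the starting point of the argument in \cite{foster21statistical}. The decoupling observation---that integrating $h_{\tilde g}(\br_g)^2$ over independent $\tilde g \sim \nu$ is the same as integrating $h_G$ against the marginal $\mu$ of $\br_G$---is the right way to connect the disagreement coefficient (which is stated for a single test distribution on $\cP$) to the two-model expectation on the RHS, and you have also correctly identified why the event $\{E_G \le \eps^2,\ |h_G(P)|>\delta\}$ can be bounded via the marginal of $P$ alone.

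Where your write-up falls short of a proof is exactly where you flag it: the layer-cake step is only described, not executed. You correctly diagnose that the naive per-$\delta$ optimization of $\eps$ yields $\widetilde{O}(\sqrt{V\,\dis(\cG)})$, which after AM--GM puts a logarithmic factor on \emph{both} terms, whereas the lemma requires the $\gamma V$ term to be free of logs (this cleanliness is essential downstream: in the proof of Lemma~\ref{lem:generalized-OPS-regret} the $\gamma V$ term must combine exactly with the $\tfrac{1}{4\lambda}\E[\LS_t]$ coming from Lemma~\ref{lem:potential-diff-comparison}). Getting the log-free $\gamma V$ requires the dyadic peeling you allude to---partitioning $\cG$ into shells $\{g : 2^{-k-1} < E_g \le 2^{-k}\}$, applying the tail bound with $\eps^2 = 2^{-k}$ on each shell, and summing---but you have not carried this out. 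As written, your proposal is a correct outline that stops at the genuinely technical step; to be a proof it needs either the peeling computation or an explicit invocation of Lemma~E.2 of \cite{foster21statistical}, which is what the paper does.
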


This is simply Lemma E.2 of \cite{foster21statistical} with function class $\{g \!-\! g^\star \!:\! g \in \cG\}$ and $\Delta \!\to\! 0$. In our proof, $\cG$ and $g^\star$ will be the projections of $\cF$ and $f^\star$ onto a fixed context $u \in \cU$.\smallskip

The remainder of our analysis is a slight modification to that of \cite{zhang2022thompson}, which we provide for completeness. For each round $t \in [T]$ of OPS, we adopt the following notation:
\begin{itemize}
    \item history up to round $t$: $S_t \defeq \{u_\tau,f_\tau,p_\tau,y_\tau\}_{\tau=1}^t$,
    \item true univariate demand function: $g^\star_t \defeq \proj(f^\star, u_t)$,
    \item univariate demand function posterior: $\nu_t \defeq \proj(\mu_t,u_t) \defeq \operatorname{Law}_{f \sim \mu_t}(\proj(f,u_t))$,
    \item sampled univariate demand function: $g_t \defeq \proj(f_t, u_t)$, so that $p_t = \br_{g_t}$,
    \item independently sampled univariate demand function (for analysis): $\tilde{g}_t \sim \nu_t$,
    \item regret: $\REG_t \defeq \rev_\star(\br_\star(u_t),u_t) - \rev_\star(p_t,u_t) = \rev_{g^\star_t}(p_t) - \rev_{g^\star_t}(p_t)$,
    \item least-squares errors: $\LS_t(g) \defeq (g(p_t) - g^\star(p_t))^2$,
    \item ``feel-good'' (optimism) bonuses: $\FG_t(g) \defeq \rev_{g}(\br_{g}) - \rev_{g^\star_t}(\br_{g^\star_t})$,
    \item loss discrepancies: $\Delta L_t(g) \defeq \ell_\lambda(g,p_t,y_t) - \ell_\lambda(g^\star_t,p_t,y_t),$
    \item potential function: $Z_t \defeq \E_{S_t} \log \E_{f \sim \Unif(\cF)}\exp\bigl(-\sum_{\tau=1}^t \Delta L_t(\proj(f,u_t))\bigr)$.
\end{itemize}

Our proof requires several supporting lemmas. The first is a basic concentration result.

\begin{lemma}
\label{lem:moment-generating-fn-bd}
For $c \geq 0$ and a random variable $X$ supported on $[0,1]$, we have $\log\E \exp(-cX) \leq (\frac{1}{2}c^2-c)\E X$. For $X$ supported on $[a,b]$, we have $\log \E \exp(cX) \leq c\E X + \frac{1}{8}(b-a)^2c^2$.
\end{lemma}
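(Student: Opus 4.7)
The plan is to prove the two inequalities separately, as each is a standard concentration estimate but with slightly different arguments. Throughout, write $\mu = \E X$.

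\textbf{First inequality.} For the bound $\log \E \exp(-cX) \leq (\frac{1}{2}c^2 - c)\mu$, I will exploit the fact that $x \mapsto \exp(-cx)$ is convex on $[0,1]$ and therefore lies below the chord connecting $(0,1)$ and $(1,\exp(-c))$. Concretely, for $x \in [0,1]$ we have $\exp(-cx) \leq 1 + x(\exp(-c) - 1)$. Taking expectations gives $\E \exp(-cX) \leq 1 + \mu(\exp(-c) - 1)$, and applying $\log(1+y) \leq y$ yields $\log \E \exp(-cX) \leq \mu(\exp(-c) - 1)$. It then suffices to show $\exp(-c) - 1 \leq \frac{1}{2}c^2 - c$ for all $c \geq 0$, i.e.\ $\exp(-c) \leq 1 - c + \frac{1}{2}c^2$. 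This is immediate from the Taylor expansion of $\exp(-c)$: the difference $1 - c + \frac{1}{2}c^2 - \exp(-c)$ vanishes at $c = 0$ and has nonnegative derivative $1 - c - (-\exp(-c)) \cdot (-1)$... let me redo — the derivative is $-1 + c + \exp(-c)$, which is $\geq 0$ because $\exp(-c) \geq 1 - c$. Chaining inequalities finishes the first bound.

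\textbf{Second inequality (Hoeffding's lemma).} For $X$ supported on $[a,b]$, center by defining $Y = X - \mu$ so that $\E Y = 0$ and $Y \in [a - \mu, b - \mu] =: [\alpha, \beta]$, where $\alpha \leq 0 \leq \beta$ and $\beta - \alpha = b - a$. Then
\begin{equation*}
    \log \E \exp(cX) = c\mu + \log \E \exp(cY),
\end{equation*}
so it remains to show $\log \E \exp(cY) \leq \frac{1}{8}(b-a)^2 c^2$. I would use convexity of $y \mapsto \exp(cy)$ on $[\alpha,\beta]$ to write
\begin{equation*}
    \exp(cy) \leq \tfrac{\beta - y}{\beta - \alpha}\exp(c\alpha) + \tfrac{y - \alpha}{\beta - \alpha}\exp(c\beta),
\end{equation*}
then take expectations (using $\E Y = 0$) to obtain $\E\exp(cY) \leq (1-p)\exp(c\alpha) + p\exp(c\beta)$ with $p = -\alpha/(\beta - \alpha) \in [0,1]$. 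Setting $u = c(\beta - \alpha)$ so that $c\alpha = -pu$, this becomes $\log \E \exp(cY) \leq \phi(u)$ where $\phi(u) \defeq -pu + \log\bigl(1 - p + p\exp(u)\bigr)$. A direct computation gives $\phi(0) = \phi'(0) = 0$ and $\phi''(u) = \frac{p(1-p)\exp(u)}{(1-p+p\exp(u))^2} \leq \frac{1}{4}$ (by AM-GM on the denominator), so Taylor's theorem yields $\phi(u) \leq u^2/8 = \frac{1}{8}(b-a)^2 c^2$.

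\textbf{Anticipated difficulty.} Nothing here is deep; both arguments are textbook-level applications of convexity and elementary calculus. The only place that requires a small care is verifying $\phi'' \leq 1/4$ via AM-GM (the denominator is bounded below by $2\sqrt{p(1-p)\exp(u)}$, which squared gives $4p(1-p)\exp(u)$, cancelling against the numerator). I expect the write-up to be short in both cases.
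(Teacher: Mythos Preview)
Your proof is correct. The paper's argument for the first inequality is a minor variant---it applies $\log y \leq y-1$ first, then the pointwise Taylor bound $e^{-cx}-1 \leq -cx+\tfrac12 c^2x^2$ followed by $x^2\leq x$ on $[0,1]$, rather than your chord bound plus $e^{-c}\leq 1-c+\tfrac12 c^2$---and for the second inequality it simply cites Hoeffding's lemma without reproving it.
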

\begin{proof}
For the first inequality, we bound
\begin{equation*}
    \log\E \exp(-cX) \leq \E[\exp(-cX) - 1]\leq \E[-cX + \tfrac{1}{2}c^2X^2] \leq \E[-cX + \tfrac{1}{2}c^2 X] = (\tfrac{1}{2}c^2-c)\E X .
\end{equation*}
The second inequality is exactly Hoeffding's lemma.
\end{proof}

The next lemma mirrors Lemma 4 of \cite{zhang2022thompson}. This is a consequence of the definition of $\Delta L_t$ and the sub-Gaussianity of its components.
\begin{lemma}
For round $t$ of \GOPS (\cref{alg:generalized-OPS}), we have
\begin{align*}
    \log \E_{g \sim \nu_t}\E_{y_t|u_t,p_t}\exp\bigl(-\Delta L_t(g)\bigr) \leq -\frac{1}{4}\E_{g \sim \nu_t} \LS_t(g) + \lambda \E_{g \sim \nu_t} \FG_t(g) + \frac{3}{2}\lambda^2.
\end{align*}
\end{lemma}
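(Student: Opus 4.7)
My plan is a two-stage argument: first integrate out $y_t$ via a Hoeffding-type bound, then integrate out $g \sim \nu_t$ using Cauchy--Schwarz combined with the two inequalities of Lemma~\ref{lem:moment-generating-fn-bd}.

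For the first stage, I would write $\Delta L_t(g) = A_t(g) - \lambda \FG_t(g)$, where $A_t(g) \defeq (g(p_t)-y_t)^2 - (g^\star_t(p_t)-y_t)^2$. A direct algebraic identity gives $A_t(g) = \LS_t(g) + W_t(g)$, where $W_t(g) \defeq 2(g(p_t)-g^\star_t(p_t))(g^\star_t(p_t)-y_t)$; conditionally on $u_t, p_t, g$, the variable $W_t$ has mean zero (since $\E[y_t] = g^\star_t(p_t)$) and a two-point support of diameter $2|g(p_t)-g^\star_t(p_t)|$. Pulling the $y_t$-independent $\exp(\lambda \FG_t(g))$ outside $\E_{y_t}$ and applying the second (Hoeffding) inequality of Lemma~\ref{lem:moment-generating-fn-bd} to $-W_t$ yields $\E_{y_t}\exp(-W_t(g)) \leq \exp(\tfrac{1}{2}\LS_t(g))$, hence
\[
\E_{y_t}\exp(-\Delta L_t(g)) \leq \exp\!\bigl(\lambda \FG_t(g) - \tfrac{1}{2}\LS_t(g)\bigr).
\]

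For the second stage, the crucial move is a Cauchy--Schwarz split,
\[
\E_g\!\bigl[\exp(\lambda \FG_t)\exp(-\tfrac{1}{2}\LS_t)\bigr] \leq \sqrt{\E_g \exp(2\lambda \FG_t)} \cdot \sqrt{\E_g \exp(-\LS_t)},
\]
which decouples the dependence on $\FG_t$ and $\LS_t$. Taking logs and handling each factor via Lemma~\ref{lem:moment-generating-fn-bd} separately---Hoeffding with $\FG_t \in [-1,1]$ and $c = 2\lambda$ giving $\log \E_g \exp(2\lambda \FG_t) \leq 2\lambda\E\FG_t + 2\lambda^2$, and the first (sub-Gaussian) inequality with $\LS_t \in [0,1]$ and $c=1$ giving $\log \E_g \exp(-\LS_t) \leq -\tfrac{1}{2}\E\LS_t$---I would obtain
\[
\log \E_g \exp(\lambda \FG_t - \tfrac{1}{2}\LS_t) \leq \lambda \E\FG_t - \tfrac{1}{4}\E\LS_t + \lambda^2,
\]
which is at most the stated right-hand side since $\lambda^2 \leq \tfrac{3}{2}\lambda^2$.

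The main conceptual hurdle is identifying the right move in Stage 2: a more direct Taylor expansion $e^u \leq 1 + u + u^2$ on $u = \lambda\FG_t - \tfrac{1}{2}\LS_t$ produces a cross-term proportional to $\lambda \FG_t \LS_t$ that no Young's-type inequality seems able to absorb into both a coefficient of $-\tfrac{1}{4}$ on $\E\LS_t$ and a coefficient of $\tfrac{3}{2}$ on $\lambda^2$ simultaneously. Cauchy--Schwarz sidesteps this entirely by separating $\FG_t$ and $\LS_t$ before any MGF bound is applied, so each can be handled by whichever inequality of Lemma~\ref{lem:moment-generating-fn-bd} is sharpest for its support.
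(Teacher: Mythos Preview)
Your proof is correct and follows essentially the same approach as the paper: integrate out $y_t$ via Hoeffding to get $\E_{y_t}\exp(-\Delta L_t(g)) \leq \exp(\lambda\FG_t(g) - \tfrac{1}{2}\LS_t(g))$, then split the $g$-expectation via H\"older and apply the two parts of Lemma~\ref{lem:moment-generating-fn-bd} separately. The only difference is that the paper uses H\"older with exponents $(3/2,3)$ instead of your Cauchy--Schwarz $(2,2)$, yielding constants $-\tfrac{5}{16}$ and $\tfrac{3}{2}\lambda^2$ rather than your $-\tfrac{1}{4}$ and $\lambda^2$; both suffice for the stated bound.
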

We note that this lemma does not rely on how $p_t$ is selected.

\begin{proof}
Let $g \sim \nu_t$ and $y \sim \mathrm{Law}(y_t|u_t,p_t) = \Ber(g_t(p_t))$ be independent. Let $\eps = y - g^\star_t(p_t)$ denote the discrepancy between the observed and expected demand. Since demands lie in $[0,1]$, Lemma~\ref{lem:moment-generating-fn-bd} with $X = \eps$ gives
\begin{align}
\label{eq:sub-G}
    \E_{y} \exp\bigl(-2\eps\bigl(g^\star_t(p_t) - g(p_t)\bigl)\bigr) &\leq \exp\bigl(\tfrac{1}{2}\bigl(g^\star_t(p_t) - g(p_t)\bigl)^2\bigr) = \exp\bigl(\tfrac{1}{2} \LS_t(g)\bigr).
\end{align}
Moreover, we have
\begin{align*}
    -\Delta L_t(g) &= -\bigl(\eps + g^\star_t(p_t) - g(p_t)\bigr)^2 + \eps^2 + \lambda \FG_t(g)\\
    &= -2\eps\bigl(g^\star_t(p_t) - g(p_t)\bigr) - \LS_t(g) + \lambda \FG_t(g).
\end{align*}
Combining with \eqref{eq:sub-G} gives
\begin{equation*}
    \E_{y} \exp\bigl(-\Delta L_t(g)\bigr) \leq \exp\bigl(-\tfrac{1}{2} \LS_t(g) + \lambda \FG_t(g)\bigr).
\end{equation*}
Therefore, we have
\begin{align*}
    \log \E_{g,y} \exp\bigl(-\Delta L_t(g)\bigr) &\leq \log \E_{g} \exp\bigl(-\tfrac{1}{2} \LS_t(g) + \lambda \FG_t(g)\bigr)\\
    &\leq \frac{2}{3}\log \E_{g} \exp\bigl(-\tfrac{3}{4} \LS_t(g)\bigr) + \frac{1}{3}\log \E_{g} \exp\bigl(3\lambda\FG_t(g)\bigr),
\end{align*}
where the last inequality follows by Hölder's inequality. For the first term, we use Lemma~\ref{lem:moment-generating-fn-bd} with $X = \LS_t(g)$ and $c = \frac{3}{4}$ to bound
\begin{align*}
    \frac{2}{3}\log\left(\E_{g} \exp\bigl(-\tfrac{3}{4} \LS_t(g)\bigr)\right) \leq \frac{2}{3}\left(\frac{1}{2}c^2 -c\right)\E_g \LS_t(g) = -\frac{5}{16} \E_g \LS_t(g).
\end{align*}
For the second term, we apply the lemma with $X = \FG_t(g)$ and $c = 3\lambda$ to obtain
\begin{equation*}
    \frac{1}{3}\log \E_{g} \exp\bigl(3\lambda\FG_t(g)\bigr) \leq \lambda \E_{g} \FG_t(g) + \frac{3}{2}\lambda^2.
\end{equation*}
Combining, we have
\begin{equation*}
    \log \E_{g,y} \exp\bigl(-\Delta L_t(g)\bigr) \leq -\frac{5}{16} \E_{g} \LS_t(g) + \lambda \E_{g} \FG_t(g) + \frac{3}{2}\lambda^2,
\end{equation*}
implying the lemma.
\end{proof}

Our last helper lemma mirrors Lemma 5 of \cite{zhang2022thompson}.
\begin{lemma}
\label{lem:potential-diff-comparison}
For round $t$ of \GOPS (\cref{alg:generalized-OPS}), we have
\begin{equation*}
    \frac{1}{4\lambda} \E \LS_t(\tilde{g}_t) - \E \FG_t(g_t) \leq \frac{3}{2}\lambda + \frac{1}{\lambda}(Z_{t-1} - Z_t).
\end{equation*}
\end{lemma}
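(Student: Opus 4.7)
The plan is to rewrite $Z_t - Z_{t-1}$ as a one-step log-partition quantity against the current posterior $\mu_t$, apply Jensen's inequality to push $\log$ inside the expectation over $y_t$, and then invoke the preceding sub-Gaussianity lemma to bound the result in terms of $\LS_t$ and $\FG_t$.

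For the first step, iterating the posterior update gives $\mu_t(f) \propto \mu_1(f)\exp\bigl(-\sum_{\tau<t}\Delta L_\tau(\proj(f,u_\tau))\bigr)$, because the common $\ell_\lambda(g^\star_\tau,p_\tau,y_\tau)$ term within each $\ell_\lambda$ cancels upon normalization. Rewriting $Z_{t-1} = \E_{S_t}\bigl[\log \E_{f \sim \mu_1}\exp\bigl(-\sum_{\tau<t}\Delta L_\tau\bigr)\bigr]$ by the tower property (conditioning on additional information after round $t-1$ does not alter an $S_{t-1}$-measurable quantity), the ratio of partition functions inside the two potentials telescopes, yielding
\[ Z_t - Z_{t-1} = \E_{S_t}\bigl[\log \E_{f \sim \mu_t}\exp\bigl(-\Delta L_t(\proj(f,u_t))\bigr)\bigr]. \]

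For the second step, conditional on $S_{t-1}, u_t, f_t$ (which determines $p_t = \br_{f_t}(u_t)$), the purchase indicator $y_t \sim \Ber(g^\star_t(p_t))$ is independent of the inner sample $f \sim \mu_t$. By concavity of $\log$ together with Fubini,
\[ \E_{y_t}\log\E_{f \sim \mu_t}\exp\bigl(-\Delta L_t(\proj(f,u_t))\bigr) \;\leq\; \log\E_{g \sim \nu_t}\E_{y_t \mid u_t,p_t}\exp\bigl(-\Delta L_t(g)\bigr), \]
and the preceding lemma upper-bounds the right-hand side by $-\tfrac{1}{4}\E_{g \sim \nu_t}\LS_t(g) + \lambda\,\E_{g \sim \nu_t}\FG_t(g) + \tfrac{3}{2}\lambda^2$.

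Finally, I take the outer expectation over $S_{t-1}, u_t, f_t$ and use that the independent samples $g_t$ and $\tilde{g}_t$ both have law $\nu_t$, so $\E\LS_t(\tilde{g}_t) = \E\bigl[\E_{g \sim \nu_t}\LS_t(g)\bigr]$ and $\E\FG_t(g_t) = \E\bigl[\E_{g \sim \nu_t}\FG_t(g)\bigr]$; this yields $Z_t - Z_{t-1} \leq -\tfrac{1}{4}\E\LS_t(\tilde{g}_t) + \lambda\E\FG_t(g_t) + \tfrac{3}{2}\lambda^2$, and rearranging plus dividing by $\lambda$ gives the claim. The main subtlety is the careful bookkeeping of conditioning in the Jensen step and verifying the $\Delta L_\tau$-weighted representation of $\mu_t$; the latter is precisely what makes the partition-function ratio collapse to a one-step expectation against the current posterior, cleanly converting the potential difference into a sum of the quantities we wish to control.
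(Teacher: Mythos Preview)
Your proof is correct and follows essentially the same approach as the paper: you telescope the potential using the exponential-weights form of $\mu_t$ to write $Z_t - Z_{t-1}$ as a one-step log-partition against the current posterior, apply Jensen to pull the expectation over $y_t$ inside the logarithm, invoke the preceding sub-Gaussianity lemma, and then identify the $\nu_t$-averages with $\E\LS_t(\tilde g_t)$ and $\E\FG_t(g_t)$ before rearranging. The paper carries out exactly these steps (its display (a)--(b) is your Jensen step followed by the preceding lemma), so there is no substantive difference.
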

This lemma also does not rely on how prices are chosen.
\begin{proof}
Recall that $\mu_1 = \Unif(\cF)$. Defining $W_t(f \mid S_t) \defeq \exp\bigl(-\sum_{\tau=1}^t \Delta L_\tau(\proj(f,u_\tau))\bigr)$, we have $Z_t =  \E_{S_t}\log \E_{f \sim \mu_1} W_t(f \mid S_t)$. Note that
\begin{equation*}
    \mu_t(f) = \frac{W_{t-1}(f \mid S_{t-1})}{\E_{f \sim \mu_1} W_{t-1}(f \mid S_{t-1})} \mu_1(f).
\end{equation*}
We then bound
\begin{align*}
Z_t &= Z_{t-1} + \E_{S_t}\log \frac{\E_{f \sim \mu_1} W_t(f \mid S_t)}{\E_{f \sim \mu_1} W_{t-1}(f \mid S_{t-1})}\\
&= Z_{t-1} + \E_{S_t} \log \frac{\E_{f \sim \mu_1} W_{t-1}(f \mid S_{t-1}) \exp \bigl(-\Delta L_t(\proj(f, u_t))\bigr)}{\E_{f \sim \mu_1} W_{t-1}(f \mid S_{t-1})}\\
&= Z_{t-1} + \E_{S_t} \log \E_{f \sim \mu_t} \exp\bigl(-\Delta L_t\left(\proj(f, u_t)\right)\bigr)\\
&\stackrel{(a)}{\leq} Z_{t-1} + \E_{S_{t-1},u_t,p_t} \log \E_{y_t \mid u_t, p_t} \E_{g \sim \nu_t} \exp \bigl(-\Delta L_t(g)\bigr)\\
& \stackrel{(b)}{\leq} Z_{t-1}-\frac{1}{4}\E\LS_t(\tilde{g}_t) + \lambda \E \FG_t(g_t) + \frac{3}{2}\lambda^2,
\end{align*}
where (a) uses Jensen's inequality and (b) uses Lemma~\ref{lem:moment-generating-fn-bd}. Rearranging gives the lemma.
\end{proof}

Now, we return to Lemma~\ref{lem:generalized-OPS-regret}, where we will finally incorporate our price selection rule and the decoupling lemma.

\begin{proof}[Proof of Lemma~\ref{lem:generalized-OPS-regret}]
For each round $t \in [T]$, we recall that $p_t = \br_{g_t}$ and decompose
\begin{align*}
    \REG_t &= \rev_{g^\star_t}(\br_{g^\star_t}) - \rev_{g^\star_t}(p_t)\\
    &= \bigl[\rev_{g_t}(p_t) - \rev_{g^\star_t}(p_t)\bigr] - \bigl[\rev_{g_t}(\br_{g_t}) - \rev_{g^\star_t}(\br_{g^\star_t})\bigr]\\
    &= \bigl[\rev_{g_t}(p_t) - \rev_{g^\star_t}(p_t)\bigr] - \FG_t(g_t).
\end{align*}
Conditioning on $S_{t-1}$ and $u_t$, we apply Lemma~\ref{lem: dc bound} with $\gamma=\frac{1}{4\lambda}$ to obtain
\begin{align*}
    \E\bigl[\rev_{g_t}(p_t) - \rev_{g^\star_t}(p_t) \,\big|\, S_{t-1}, u_t \bigr] &= \E_{g \sim \nu_t}\bigl[\rev_{g}(\br_g) - \rev_{g^\star_t}(\br_g) \bigr]\\
    &\leq \E_{g \sim \nu_t}\bigl[|g(\br_g) - g^\star_t(\br_g)|\bigr]\\
    &\leq 24\lambda\,\dis(\cF,f_\star)\log^2(4\lambda^{-1}\lor e)+\frac{1}{4\lambda}\E_{\tilde g, g\sim\nu}\bigl[(\tilde g(\br_g) - g^\star_t(\br_g))^2\bigr]\\
    &= 24\lambda\,\dis(\cF,f_\star)\log^2(4\lambda^{-1}\lor e)+\frac{1}{4\lambda}\E\bigl[\LS_t(\tilde{g}_t) \,\big|\, S_{t-1},u_t \bigr].
\end{align*}
Taking expectations over $S_{t-1}$ and $u_t$, we bound
\begin{align*}
    \E\REG_t &\leq 24\lambda\,\dis(\cF,f_\star)\log^2(4\lambda^{-1}\lor e) + \frac{1}{4\lambda} \E\LS_t(\tilde{g}_t) - \E\FG_t(g_t) \\
    &\leq 24\lambda\,\dis(\cF,f_\star)\log^2(4\lambda^{-1}\lor e) + \frac{3}{2}\lambda + \frac{1}{\lambda}(Z_{t-1} - Z_t) ,
\end{align*}
using Lemma~\ref{lem:potential-diff-comparison}. Summing over $t \in [T]$ and noting that $Z_0 = 0$, we bound
\begin{align*}
    \E[R(T)] \leq T \left(24\,\lambda\dis(\cF,f_\star)\log^2(4\lambda^{-1}\lor e) + \frac{3}{2}\lambda \right) - \frac{1}{\lambda}Z_T.
\end{align*}
Moreover, by realizability, we have
\begin{align*}
    Z_T &\geq \E_{S_t}\log(\mu_1(f^\star)W_T(f^\star \mid S_t)) = \log( \mu_1(f_\star)).
\end{align*}
Combining, we obtain
\begin{align*}
    \E[R(T)] &\leq T \left(24\lambda\,\dis(\cF,f_\star)\log^2(4\lambda^{-1}\lor e) + \frac{3}{2}\lambda \right) - \frac{\log(\mu_1(f_\star))}{\lambda}\\
    &\leq T \left(25\lambda(\dis(\cF,f_\star) \lor 1)\log^2(4\lambda^{-1}\lor e) \right) - \frac{\log(\mu_1(f_\star))}{\lambda},\\
    &\leq T \left(25\lambda(\dis(\cF,f_\star) \lor 1)\log^2 T \right) - \frac{\log(\mu_1(f_\star))}{\lambda},
\end{align*}
as desired.
\end{proof}

\subsection{Base Regret Bound for \OPS (Proof of Lemma~\ref{lem:OPS-regret-dis})}
\label{prf:OPS-regret-dis}

Under the general setup of \cref{prf:generalized-setup}, we take $\cF$ to be the class of demand functions induced by $\cD$, set $\cP = [0,1]$, and fix $\mu_1 = \Unif(\cF)$. By these choices, \GOPS coincides exactly with \OPS, as does our notion of regret. Thus, Lemma~\ref{lem:generalized-OPS-regret} gives the desired regret bound of
\begin{equation*}
    T \left(25\lambda(\dis(\cD,D_\star) \lor 1)\log^2 T \right) - \frac{\log(\mu_1(D_\star))}{\lambda} = \widetilde{O}\bigl(\lambda T \dis(\cD,D_\star) + \log(|\cD|)/\lambda\bigr).\tag*{\qed}
\end{equation*}

\subsection{Disagreement Coefficient Bound for Non-increasing Functions (Proof of Lemma~\ref{lem:dis-monotonic})}
\label{prf:dis-monotonic}
Fixing $f \in \cF$, $\nu \in \Delta([0,1])$, and $p \in [0,1]$, suppose that $\E_{q\sim\nu}[f(q)^2]\leq \eps^2$ and $|f(p)| > \delta$. If $f(p) > \delta$, then $f(q) > \delta$ for all $q \leq p$ by monotonicity. Thus, $\PP_{q\sim\nu}(q\leq p)\delta^2 \leq \E_{q\sim\nu}[f(q)^2]\leq \eps^2$. Otherwise, if $f(p)<-\delta$, we analogously have $\PP_{q\sim\nu}(q\geq p)\delta^2 \leq \E_{q\sim\nu}[f(q)^2]\leq \eps^2$. Thus, for $\nu\in\Delta(\cX)$, we have
\begin{align*}
    &\PP_{p \sim \nu} \bigl(\exists f\in\cF: \E_{q\sim\nu}[f(q)^2]\leq \eps^2 \land |f(p)| > \delta\bigr)\\
    \leq\, &\PP_{p \sim \nu} \left(\PP_{q\sim\nu}(q\leq p)\leq \frac{\eps^2}{\delta^2} \lor \PP_{q\sim\nu}(q\geq p)\leq \frac{\eps^2}{\delta^2}\right)\\
    \leq\, &\PP_{p \sim \nu} \left(\PP_{q\sim\nu}(q\leq p)\leq \frac{\eps^2}{\delta^2}\right)+\PP_{p\sim\nu}\left( \PP_{q\sim\nu}(q\geq p)\leq \frac{\eps^2}{\delta^2}\right)\\
    \leq\, &2\frac{\eps^2}{\delta^2}\,.
\end{align*}
Plugging this into the definition of $\dis$ finishes the proof.\qed

\subsection{Disagreement Coefficient Bound for Composite Classes (Proof of Lemma~\ref{lem:dis-composition})}
\label{prf:dis-composition}
For any distribution $\nu \in \Delta(\cZ)$, write $\nu_i = h_i\circ\nu|_{\cZ_i}$ for law of $h_i(p)$ when $p \sim \nu$, conditioned on $p \in \cZ_i$, and let $\mu_\nu(i) = \PP_{p\sim\nu}(p\in\cZ_i)$. We then bound
\begin{align*}
    \dis(\cG) &= \sup_{\substack{\eps,\delta > 0}} \sup_{\nu \in \cP(\cZ)}\E_{i\sim \mu_\nu} \frac{\delta^2}{\eps^2} \,\PP_{p \sim \nu|_{\cZ_i}} \Bigl(\exists g\in\cG: \E_{q\sim\nu}[g(q)^2]\leq \eps^2 \land |g(p)| > \delta\Bigr)\\
    &\leq\sup_{\substack{\eps,\delta > 0}} \sup_{\nu \in \cP(\cZ)}\E_{i\sim \mu_\nu} \frac{\delta^2}{\eps^2} \,\PP_{p \sim \nu_i} \Bigl(\exists f\in\cF: \mu_\nu(i)\E_{q\sim\nu_i}[f(q)^2]\leq \eps^2 \land |f(p)| > \delta\Bigr)\\
    &\leq\sup_{\substack{\eps,\delta > 0}} \sup_{\nu \in \cP(\cZ)}\frac{\delta^2}{\eps^2}\E_{i\sim \mu_\nu}\left[\frac{\eps^2}{\delta^2\mu_{\nu}(i)}\dis(\cF) \right]\\
    & = N\dis(\cF),
\end{align*}
as desired. Here, the first inequality uses that $\E_{q \sim \nu}[g(q)^2] \geq \mu_\nu(i) \E_{q \sim \nu_i}[f(h_i(x)]$ for some $f \in \cF$, and the second uses that $\nu_i \in \Delta(\cX)$ and the definition of $\dis$.\qed

\subsection{Base Regret Bound for \POPS (Proof of Lemma~\ref{lem:POPS-regret-finite})}
\label{prf:POPS-regret-finite}

\xhdr{POPS as generalized OPS.} We observe that \POPS is an instance of \GOPS (\cref{alg:generalized-OPS}), with discretized price set $\cP = \cP_\eps$ and smoothed demand function class $\cF = \{ \dem_D^\eps : D \in \cD\}$ (where each $f \in \cF$ is viewed as a function on $\cP_\eps \times \cU$ rather than $[0,1] \times \cU$). Here, we view each $\hat{p}_t$ as the posted price instead of $p_t$. Indeed, taking $f^\star = \dem_\star^\eps$, we have
\begin{equation*}
    \E[y_t|\hat{p}_t, u_t] = f^\star(\hat{p}_t, u_t)
\end{equation*}
and, for value distribution $Q \in \Delta([0,1])$ with smoothed demand function $g = \dem_Q^\eps$, we have
\begin{equation*}
    \ell_\lambda^\eps(Q, \hat{p}_t, u_t) = \ell_\lambda(g, \hat{p}_t, u_t),
\end{equation*}
where $\ell_\lambda$ on the right hand size is defined in \eqref{eq:generalized-loss}. We do note, however, that the regret benchmark with smoothed demands and discretized prices differs slightly from the original benchmark. 

\xhdr{Fixing the regret benchmark.} Applying Lemma~\ref{lem:generalized-OPS-regret} for this choice of $\cP$ and $\cF$, we have that
\begin{align*}
    \E\left[\sum_{t=1}^T \rev_\star^\eps(\br_\star^\eps(u_t),u_t) - \rev_\star^\eps(\hat{p}_t,u_t)\right] \leq T \left(25\lambda(\dis(\cF,f_\star) \lor 1)\log^2 T \right) - \frac{\log(\mu_1(D_\star))}{\lambda}.
\end{align*}
Note that
\begin{align*}
     \E\left[\sum_{t=1}^T \rev_\star^\eps(\hat{p}_t,u_t)\right] = \E\left[\sum_{t=1}^T \rev_\star(p_t,u_t)\right],
\end{align*}
so the regret bound above measures cumulative revenue in line with our original regret definition. Although the benchmark does not match that in the original definition, we have $|\rev_\star^\eps(\br_\star^\eps(u_t),u_t) - \rev_\star(\br_\star(u_t),u_t)| = O(\eps)$ for all $t \in [T]$ due to one-sided Lipschitzness (Lemma~\ref{lem:one-sided-lipschitzness}). Consequently, the regret of POPS is bounded by 
\begin{align*}
    \E[R(T)] = \widetilde{O}\left( T \lambda(\dis(\cF,f_\star) \lor 1) - \frac{\log(\mu_1(D_\star))}{\lambda} + \eps T\right).
\end{align*}

It remains to bound the disagreement coefficient by $O(K_\star)$, giving the lemma. Our argument below mirrors that in the proof of \cref{thm:OPS-regret-bd} but takes into account the smoothing and discretization.

\xhdr{Bounded disagreement coefficient.} 
For each $u \in \cU$, the function  $\dem_\star^\eps(\cdot,u)$ with domain $\cP_\eps$ is piecewise constant with $O(K_\star)$ sections. Indeed, the unsmoothed demand function $\dem_\star(\cdot,u)$ is piecewise constant with $O(K_\star)$ sections, and smoothing can only introduce new sections at the $O(K_\star)$ prices in $\cP_\eps$ that are within distance $\eps$ of a previous section boundary. Moreover, smoothing preserves monotonicity of demand functions. Hence, the function classes defining $\dis(\cF,f_\star)$ are $O(K_\star)$-composites of the nonincreasing function class. Applying Lemmas~\ref{lem:dis-monotonic} and \ref{lem:dis-composition} then implies that $\dis(\cF,f_\star) = O(K_\star)$, giving the lemma. \qed

\subsection{Trajectory Coupling (Proof of Lemma~\ref{lem:POPS-coupling})}
\label{prf:POPS-coupling}

Fix any round $t$ with context $u_t$ and best-response price $\hat{p}_t$. Since $\|f_D^\eps - f_{D_\star}^\eps\|_\infty \leq \eps$, feedback $y_t$ coincides with that which would have been obtained if $D_\star = D$ with probability at least $1-\eps$, conditioned on $u_t$ and $\hat{p}_t$. Since the update to $\mu_t$ is only a function of $u_t$, $\hat{p}_t$, and $y_t$ (notably, not the realized price $p_t$), we can iterate through all rounds and apply a union bound to obtain the lemma. \qed

\subsection{Metric Entropy Bound (Proof of Lemma~\ref{lem:metric-entropy})}
\label{prf:metric-entropy}

For part one, fix $D,D' \in \Delta(\Theta)$ with $\dL(D,D') \leq \eps^2/2$. Then, for all $u \in \cU$ and $\hat{p} \in [0,1]$, we have
\begin{align*}
    f_D^\eps(\hat{p},u) = \E_{\delta \sim \Unif([0,\eps])}\left[\dem_D(\max\{\hat{p} - \delta,0\},u)\right]
\end{align*}
Note that the maximum is unneeded since $\proj(D,u)$ is supported on $[0,1]$ and places no mass on negative values. Writing $Q = \proj(D,u)$ and $Q' = \proj(D',u)$, we then have
\begin{align*}
    |f_D^\eps(\hat{p},u) - f_{D'}^\eps(\hat{p},u)| &= \left|\E_{\delta \sim \Unif([0,\eps])}\left[\dem_Q(\hat{p} - \delta)-\dem_{Q'}(\hat{p} - \delta)\right]\right|\\
    &\leq \frac{1}{\eps} \,\left| \int_0^1 \dem_Q(\hat{p} - \delta)-\dem_{Q'}(\hat{p} - \delta) \, \dd \delta \right|\\
    &\leq \frac{1}{\eps} \,\left|\int_0^1 \dem_Q(t)-\dem_{Q'}(t) \,\dd t\right|\\
    &= \frac{1}{\eps} \,\left|\int_0^1 \dem_Q(t)-\dem_{Q'}(t) \,\dd t\right|.
\end{align*}
Writing $\tau = \eps^2/2$, we further have
\begin{align*}
    \int_0^1 \dem_Q(t)-\dem_{Q'}(t) \,\dd t &= \int_{\tau}^{1+\tau} \dem_Q(t-\tau)-\int_0^1\dem_{Q'}(t) \,\dd t\\
    &\leq \int_{\tau}^{1} \dem_Q(t-\tau)-\dem_{Q'}(t) \,\dd t + \tau\\
    &\leq 2\tau,
\end{align*}
where the last step uses the fact that $\dL(Q,Q') \leq \tau$. A symmetric argument gives the reverse bound. Consequently, we have $|f_D^\eps(\hat{p},u) - f_{D'}^\eps(\hat{p},u)| \leq \frac{1}{\eps} \cdot 2\tau = \eps$, as desired.\medskip

For part two, let $\cC = \{ C_1, C_2, \dots , C_n\}$ denote the intersection of the standard partition of $\R^d$ into cubes of side length $\eps/\sqrt{d}$ with $\Theta \subseteq [0,1]^d$. Denote the lexicographically smallest vertex of each $C_i$ by $c_i$, and note that $\log(n) = O(d\log(d/\eps))$. Given any $D \in \Delta_K(\Theta)$, we define the initial discretization $\hat{D}_0 = \sum_{i=1}^n D(C_i) \delta_{c_i}$. We obtain the final discretized measure $\hat{D}$ by rounding each weight to a neighboring multiple of $\eps/K$ (choice doesn't matter so long as we maintain unit mass, this is always possible). Then, for any context $u \in \unitsph$ and price $p \in [0,1]$, we have
\begin{align*}
\PP_{\theta \sim D}(\langle u,\theta \rangle \leq p) - \PP_{\theta \sim \hat{D}}(\langle u,\theta\rangle \leq p + \eps) &\leq \eps\\
\PP_{\theta \sim \hat{D}}(\langle u,\theta\rangle \leq p - \eps) - \PP_{\theta \sim D} (\langle u,\theta\rangle \leq p) &\leq \eps,
\end{align*}
using that each cube in $\cC$ has diameter $\eps$ and that the mass in each cube was perturbed by at most $\eps/K$ (so with $K$ cubes the probability of any event is shifted by at most $\eps$). Thus, $\dL(D,\hat{D}) \leq \eps$. Using balls and bins, we can thus bound
\begin{equation*}
N(\Delta_K(\Theta),\dL,\eps) \leq n^K \binom{K + K/\eps}{K} \leq n^{K} (2K/\eps)^K = \exp(O(Kd\log(d/\eps) + K\log(K/\eps)),
\end{equation*}
as desired.\qed

\subsection{Proof of \texorpdfstring{\cref{thm:pops}}{Theorem~\ref{thm:pops}}, Upper Bound}
\label{prf:pops-ub}

\xhdr{Upper bound.} Fix $\eps = T^{-2}$ and $\lambda = \sqrt{d/T}$. We now construct $\cD$ and $\mu_1$. Write $M = \lceil \log T \rceil$, and, for $i = 1, \dots, M$, take $\cD_i$ to be a minimal $(\eps^2/2)$-cover of $\Delta_{2^i}(\Theta)$ under the L\'evy metric. By Lemma~\ref{lem:metric-entropy}, we have $\log |\cD_i| = \widetilde{O}(2^i d\log 1/\eps)$. Now set $\cD = \cD_1 \cup \dots \cup \cD_M$ and take $\mu_1(D) \propto (2^i|\cD_i|)^{-1}$ for $D \in \cD_i$. This ensures that $\log(1/\mu_1(D)) = \widetilde{O}(2^i d \log T)$ for $D \in \cD_i$.

Assume without loss of generality that $M \geq \log K_\star$; otherwise, the regret bound is vacuous. Then, there exists $\hat{D} \in \cD_{\lceil \log K_\star \rceil} \subseteq \cD$ such that $\dL(\hat{D},D_\star) \leq \eps^2/2$ and $\|\dem_D^\eps - \dem_{D_\star}^\eps\|_\infty \leq \eps$, again using Lemma~\ref{lem:metric-entropy}. Thus, by Lemma~\ref{lem:POPS-coupling}, the realized trajectory of \POPS $\{u_t,\hat{p}_t,p_t,y_t\}_{t=1}^T$ can be coupled with an alternative trajectory $\{u_t',\hat{p}_t',p'_t,y_t'\}_{t=1}^T$ of \POPS with type distribution $\hat{D}$, such that $\{u_t,\hat{p}_t,y_t\}_{t=1}^T = \{u_t',\hat{p}_t',y_t'\}_{t=1}^T$ with probability at least $1-\eps T$.
By Lemma~\ref{lem:POPS-regret-finite}, we have
\begin{equation*}
    \E\left[\sum_{t \in [T]} \rev_{\hat{D}}(\br_{\hat{D}}(u_t'),u_t') - \rev_{\hat{D}}(p_t',u_t')\right] = \widetilde{O}\bigl(K\sqrt{dT} - \sqrt{T/d} \cdot \log \mu_1(\hat{D})\bigr) = \widetilde{O}(K_\star\sqrt{dT}).
\end{equation*}
At this point, we can use the coupling guarantee and the bound $\dL(\hat{D},D_\star) \leq \eps^2$ to show that the left hand side above and the true expected regret differ by $O(\eps^2 T) = O(1)$, giving the theorem.

Specifically, by the coupling guarantee, we have
\begin{align*}
    \E[R(T)] &\leq  \E\left[\sum_{t=1}^T \rev_\star(\br_\star(u_t),u_t) - \rev_\star(p_t,u_t) \,\Bigg|\, \cE\right] + \eps T^2\\
    &= \E\left[\sum_{t=1}^T \rev_\star(\br_\star(u'_t),u'_t) - \rev_\star(p_t,u'_t) \,\Bigg|\, \cE \right] + 1
\end{align*}
Since $\dL(\hat{D},D_\star) \leq \eps^2$, Lemma~\ref{lem:levy-metric-bd} implies that $\rev_\star(\br_\star(u'_t),u'_t) \leq \rev_{\hat{D}}(\br_{\hat{D}}(u_t'),u_t') + O(\eps^2)$ for all rounds $t$. We further have
\begin{align*}
    \E[\rev_\star(p_t,u'_t) \mid u'_t, \hat{p}_t] &= \rev_{D_\star}^\eps(\hat{p}_t,u'_t)\\
    &\leq \rev_{\hat{D}}^\eps(\hat{p}_t,u'_t) + \eps\\
    &= \E[\rev_{\hat{D}}(p'_t,u'_t) \mid u'_t,\hat{p}'_t = \hat{p}_t] + \eps\\
    &\leq \E[\rev_{\hat{D}}(p'_t,u'_t) \mid u'_t,\hat{p}'_t,\cE] + \eps + \eps T.
\end{align*}
Combining the above, we obtain
\begin{align*}
    \E[R(T)] &\leq \E\left[\sum_{t=1}^T \rev_\star(\br_\star(u'_t),u'_t) - \rev_\star(p_t,u'_t) \,\Bigg|\, \cE \right] + \eps T^2\\
    &\leq \E\left[\sum_{t=1}^T \rev_{\hat{D}}(\br_{\hat{D}}(u'_t),u'_t) - \rev_\star(p_t,u'_t) \,\Bigg|\, \cE \right] + O(\eps T^2)\\
    &= \E\left[\sum_{t=1}^T \rev_{\hat{D}}(\br_{\hat{D}}(u'_t),u'_t)\,\Bigg|\, \cE \right] - \E\left[\sum_{t=1}^T\E\left[\rev_\star(p_t,u'_t) \mid u_t', \hat{p}_t\right] \,\Bigg|\, \cE\right]  + O(1)\\
    &\leq \E\left[\sum_{t=1}^T \rev_{\hat{D}}(\br_{\hat{D}}(u'_t),u'_t)\,\Bigg|\, \cE \right] - \E\left[\sum_{t=1}^T\E\left[\rev_{\hat{D}}(p_t',u'_t) \mid u_t', \hat{p}_t', \cE\right]\,\Bigg|\, \cE\right]  + O(1)\\
    &\leq \E\left[\sum_{t=1}^T \rev_{\hat{D}}(\br_{\hat{D}}(u'_t),u'_t) - \sum_{t=1}^T \rev_{\hat{D}}(p_t',u'_t) \,\Bigg|\, \cE\right]  + O(1)\\
    &\leq \E\left[\sum_{t=1}^T \rev_{\hat{D}}(\br_{\hat{D}}(u'_t),u'_t) - \sum_{t=1}^T \rev_{\hat{D}}(p_t',u'_t) \right]  + O(1)\\
    &= \widetilde{O}(K_\star\sqrt{dT}),
\end{align*}
as desired.\qed

\subsection{Proof of \texorpdfstring{\cref{thm:pops}}{Theorem~\ref{thm:pops}}, Lower Bound}
\label{prf:pops-lb}

Previously, \cite{cesabianchi19pricing} gave a lower bound of $\Omega(\sqrt{K_\star T})$ for the non-contextual case. We now modify their one-dimensional construction so that it can be cleanly tensored into $d$ dimensions, when $K_\star \geq 4$. \medskip

\xhdr{One-dimensional construction ($K_\star \geq 4$).} Starting in 1D, we define
valuations $\frac{1}{2} = v_1 \leq \dots \leq v_{K_\star} = 1$ by
$v_i \defeq \frac{1}{2} + \frac{i-1}{4K_\star - 2i - 2}$.
Define the base distribution $Q_0$ on $\{v_1, \ldots, v_{K_\star} \}$ by
\begin{align*}
    Q_0(v_i) \defeq \begin{cases}
        \frac{1}{2K_\star - 2}, & i \in [K_\star - 1]\\
        \frac{1}{2}, & i = K_\star
    \end{cases}.
\end{align*}
Observe that each valuation $v_i$ has the same expected revenue of $1/2$. Indeed, we compute
\begin{align}
    \dem_{Q_0}(v_i) &= \sum_{j \geq i} Q_0(v_j)\nonumber\\
    &= \left(K_\star - i\right) \cdot \frac{1}{2K_\star - 2} + \frac{1}{2}\nonumber\\
    &= \frac{2K_\star - i - 1}{2K_\star - 2}\label{eq:lb-demand}
\end{align}
and 
\begin{align*}
    \rev_{Q_0}(v_i) &= v_i \dem_{Q_0}(v_i) = \left(\frac{2K_\star - 2}{4K_\star - 2i - 2}\right) \cdot \left(\frac{2K_\star - i - 1}{2K_\star - 2}\right) = \frac{1}{2}.
\end{align*}

Now, for $j \in \{2, \ldots, K_\star - 1\}$, we define distribution $Q_j$ by slightly lowering the probability of $v_{j-1}$ and increasing the probability of $v_j$ by the some small $\eps > 0$, to be determined. That is, we define
\begin{align*}
    Q_j(v_i) \defeq \begin{cases}
        \frac{1}{2K_\star - 2} - \eps, & i = j-1\\
        \frac{1}{2K_\star - 2} + \eps, & i = j\\
        \frac{1}{2}, & i = K_\star\\
        \frac{1}{2K_\star - 2}, &\text{o.w.}
    \end{cases},
\end{align*}
which is well-defined so long as $\eps \leq \frac{1}{2K_\star - 2}$. In contrast to the construction in \cite{cesabianchi19pricing}, our $Q_j$ distributions share the same multiset of probability weights, differing only in the locations of the perturbed valuations. This allows us to tensor these problem instances into a $d$-dimensional instance without leaking information between instances.

Moreover, the hardness result of \cite{cesabianchi19pricing} is maintained, which applies even if $K_\star$ is known to the learner. The proof is quite similar, so we defer it to \cref{prf:1d-lower}.

\begin{lemma}[Regret of one-dimensional family, $K_\star \geq 4$]
\label{lem:1d-lower}
Fix any number of types $K_\star \geq 4$ and $T \geq K_\star^3$. Let $\cA$ be any algorithm for non-contextual pricing. Then, for tuned $\eps$ and an instance $Q_\star$ drawn uniformly at random from $\{Q_j\}_{j \in \{2, \dots, K_\star - 1\}}$, $\cA$ suffers expected regret at least $\Omega(\sqrt{K_\star T})$.
\end{lemma}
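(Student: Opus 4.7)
\textbf{Proof proposal for Lemma~\ref{lem:1d-lower}.}

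The plan is a Le Cam / Pinsker--style argument that mirrors the lower bound of \cite{cesabianchi19pricing}, adapted to the modified valuation placement. The intuition is that under $Q_j$ the learner must concentrate plays in the narrow interval $(v_{j-1},v_j]$ to capture the extra revenue of $\eps v_j$, and without knowing $j$ the learner cannot do so for all $K_\star-2$ alternatives simultaneously.

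First, I would characterise the revenue landscape under each $Q_j$. Since $Q_j$ differs from $Q_0$ only by moving $\eps$ mass from $v_{j-1}$ up to $v_j$, one has the pointwise identity $\dem_{Q_j}(p)=\dem_{Q_0}(p)+\eps\,\1\{p\in(v_{j-1},v_j]\}$. Combined with the construction $\rev_{Q_0}(v_i)=1/2$ at every support point, this yields $\max_p \rev_{Q_j}(p)=1/2+\eps v_j$, attained at $p=v_j$, while $\rev_{Q_j}(p)\leq 1/2$ for every $p\notin(v_{j-1},v_j]$. Letting $N_j$ denote the number of rounds in which the algorithm plays a price in $(v_{j-1},v_j]$, the regret under $Q_j$ satisfies $\E_{Q_j}[R_{\cA}(T)]\geq (\eps/2)\bigl(T-\E_{Q_j}[N_j]\bigr)$.

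Next, I would bound how much the trajectory distribution can shift between hypotheses. Using the Bernoulli feedback model and the fact that $\dem_{Q_0}$ and $\dem_{Q_j}$ agree outside $(v_{j-1},v_j]$, the KL chain rule gives $\KL(P_{Q_0}^T\,\|\,P_{Q_j}^T)\leq C\eps^2\,\E_{Q_0}[N_j]/\bigl(q_j(1-q_j)\bigr)$, where $q_j\defeq\dem_{Q_0}(v_j)$. The key subtlety is that $1/(q_j(1-q_j))$ is as large as $\Theta(K_\star)$ when $j$ is close to $2$ (since then the mass at $v_j$ under $Q_0$ is nearly one), which would spoil a na\"ive Pinsker bound. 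I would sidestep this by restricting attention to the index subset $J=\{\lceil K_\star/2\rceil+1,\dots,K_\star-1\}$, on which $q_j\in[1/2,3/4]$ and hence $q_j(1-q_j)$ is bounded below by a constant. Since $|J|=\Theta(K_\star)$, averaging the lower bound over $j\in J$ rather than the full index set loses only a factor of two in the final bound.

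Finally, I would combine Pinsker with Cauchy--Schwarz and the disjointness bound $\sum_{j\in J}\E_{Q_0}[N_j]\leq T$, which gives $\sum_{j\in J}\TV(P_{Q_0}^T,P_{Q_j}^T)\leq O(\eps\sqrt{|J|\,T})=O(\eps\sqrt{K_\star T})$ and therefore $\sum_{j\in J}\E_{Q_j}[N_j]\leq T+O(\eps T\sqrt{K_\star T})$. Averaging the per-$j$ regret lower bound over $j\in J$ yields at least $\frac{\eps}{2|J|}\bigl((|J|-1)T-O(\eps T\sqrt{K_\star T})\bigr)$; tuning $\eps=c\sqrt{K_\star/T}$ for a small enough constant $c$ makes the leading term $\Omega(\sqrt{K_\star T})$ and absorbs the lower-order error. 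The hypothesis $T\geq K_\star^3$ ensures $\eps\leq 1/(2K_\star-2)$, so every $Q_j$ remains a valid probability distribution. The main obstacle is precisely the non-uniformity of the per-round KL contribution across the family (a factor of $K_\star$ spread); restricting to $J$ is the cleanest fix, and a weighted-averaging alternative would work but is not needed.
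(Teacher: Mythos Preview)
Your approach is essentially the same as the paper's: both reduce to per-round Bernoulli KL via the chain rule, apply Pinsker, and average over the index family using $\sum_j \E_{Q_0}[N_j]\le T$ (the paper phrases the last step as Jensen rather than Cauchy--Schwarz, which is equivalent). The paper does not restrict to a sub-index set; instead it asserts that $\dem_{Q_0}(v_i)\in[1/2,5/6]$ for all $i>1$, which would yield a uniform $O(\eps^2)$ per-round KL and let the full family $\{2,\dots,K_\star-1\}$ work directly. Your observation that $q_j$ can approach $1$ is in fact correct---$\dem_{Q_0}(v_2)=(2K_\star-3)/(2K_\star-2)$ exceeds $5/6$ once $K_\star\ge 5$---so your restriction to $J$ is a clean way to make the KL constant genuinely uniform; since regret is nonnegative, the uniform average over $\{2,\dots,K_\star-1\}$ is at least $|J|/(K_\star-2)$ times your average over $J$, and the stated lemma follows. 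One small gap in your write-up: with $J=\{\lceil K_\star/2\rceil+1,\dots,K_\star-1\}$ you have $|J|=1$ when $K_\star\in\{4,5\}$, so the term $(|J|-1)T$ is vacuous there. This is easy to patch (for bounded $K_\star$ every $q_j$ is bounded away from $1$ by an absolute constant, so you can run the argument on the full index set in that regime), but you should say so explicitly.
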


\xhdr{Tensoring one-dimensional instances.}
To extend these one-dimensional distributions into $d$ dimensions, define the base distribution $D_0 \in \Delta([0,1]^d)$ as follows. For $i \in [K_\star]$, let $\theta_i \defeq [v_i, \ldots, v_i] \in \mathbb{R}^d$, $w_i \defeq Q_0(v_i),$ and take $D_0 \defeq \sum_{i=1}^{K_\star} w_i \delta_{\theta_i}$.
That is, $\theta_i \in [0,1]^d$ has all entries equal to $v_i$ and $w_i$ is the probability $D_0$ places on $\theta_i$, taken to match $Q_0$ at $v_i$. By design, the marginal distribution of $D_0$ along each coordinate is $Q_0$. Now, for a selection $j = (j_1, \ldots, j_d) \in \{2, \dots, K_\star-1\}^d$, define the perturbed instance $D_j$ by starting from $D_0$ and modifying it as follows:
\squishlist
\item Adjust the probabilities $w_1$ and $w_2$ by $w_1 \gets w_1 - \eps$ and $w_2 \gets w_2 + \eps$.
\item For each dimension $\ell \in [d]$, permute the $\ell$th coordinates of the $\theta_i$ vectors so that the marginal distribution of their $\ell$th coordinates coincides with $Q_{j_\ell}$.
\squishend
Specifically, we define $D_j \defeq \sum_{i=1}^{K_\star} \tilde{w}_i \delta_{\tilde{\theta}_{j,i}}$ where
\begin{align*}
    \tilde{w}_i \defeq \begin{cases}
        \frac{1}{2K_\star - 2} - \eps, & i = 1\\
        \frac{1}{2K_\star - 2} + \eps, & i = 2\\
        \frac{1}{2}, & i = K_\star\\
        \frac{1}{2K_\star - 2}, &\text{o.w.}
    \end{cases}
\end{align*}
and $\tilde{\theta}_{j,i}[\ell] \defeq v_{\sigma(i)}$, for any permutation $\sigma$ of $[K_\star]$ such that $\sigma(j_\ell - 1) = 1$, $\sigma(j_\ell) = 2$, and $\sigma(K_\star) = K_\star$. Simply swapping $j_\ell - 1$ and $1$ and $j_\ell$ with $2$ works unless $j_\ell = 3$, in which case one can send $2 \to 1$, $3 \to 2$, and $1 \to 3$. As claimed above, this construction ensures that for each dimension $\ell \in [d]$, the marginal distribution of $D_j$ is $Q_{j_\ell}$. Indeed, for each $i \in [K_\star]$, we have
\begin{align*}
    \PP_{\theta \sim D_j}\left(\theta[\ell] = v_i\right) &= \begin{cases}
        \frac{1}{2K_\star - 2} - \eps, & i = j_\ell - 1\\
        \frac{1}{2K_\star - 2} + \eps, & i = j_\ell\\
        \frac{1}{2}, & i = K_\star\\
        \frac{1}{2K_\star - 2}, &\text{o.w.}
    \end{cases}\\
    &= Q_{j_\ell}(v_i).
\end{align*}

\xhdr{Lower bounding the regret.}
For the contextual setting, we sample a selection \( j = (j_1, \ldots, j_d) \in \{2, \dots, K_\star - 1\}^d \) uniformly at random and set $D_\star$ to the perturbed instance \( D_{j} \). We consider stochastic contexts, where each $u_t$ is the standard basis vector along coordinate $\ell_t$ sampled uniformly at random from $[d]$. Now, fix any contextual pricing policy $\cA$ for this this randomized environment. Our high-level intuition is that, for each coordinate $\ell \in [d]$, the sub-environment during the roughly $T/d$ rounds when $\ell_t = \ell$ mirrors that of \cref{lem:1d-lower}, and so we incur regret $\Omega(\sqrt{K_\star \cdot T/d})$ during such rounds. Summing over $\ell \in [d]$ then gives the lower bound.

To formalize this, note that each coordinate is sampled at least $T' = \lfloor T/2d \rfloor$ times under an event $\cE$ with probability at least $1-1/T$. This follows by a Chernoff bound a union bound over coordinates since $T \geq 8d\log(2d)$. Then, for each $\ell \in [d]$, there is a natural policy $\cA_\ell$ induced for the non-contextual setting of \cref{lem:1d-lower} with time horizon $T'$. To start, $\cA_\ell$ samples a valuation distribution $\tilde{Q}_{\ell'}$ uniformly at random from $\{Q_j\}_{j \in \{2, \dots, K_\star - 1\}}$ for each $\ell' \neq \ell$. Further, it instantiates a simulated copy of $\cA$ and a counter $\tau$, initialized at $1$, tracking the round of this simulation. Then, for round $t' = 1, \dots, T'$, $\cA_\ell$ performs the following:
\begin{enumerate}
    \item If $\tau > T$, play $p_{t'} = 1$ for remaining rounds and terminate.
    \item Otherwise, sample $\ell_\tau$ uniformly from $[d]$.
    \item Submit the associated context as $u_\tau$ to $\cA$ and receive suggested price $\tilde{p}_\tau$.
    \item If $\ell_\tau = \ell$, play price $p_{t'} = \tilde{p}_\tau$, submit the purchase feedback $\tilde{y}_\tau = y_{t'}$ to $\cA$, increment $\tau \gets \tau+1$, and continue to the next round.
    \item Otherwise, submit $\tilde{y}_\tau \sim \Ber(x)$ where $x$ is the demand of $\tilde{Q}_{\ell_\tau}$ at $\tilde{p}_\tau$, increment $\tau \gets \tau+1$, and return to Step 1.
\end{enumerate}
By design, if $\cA_\ell$ is run under the setting of \cref{lem:1d-lower}, its simulated copy of $\cA$ experiences feedback indistinguishable from that described in our setting above. Thus, writing $\cT_{\ell}$ for the rounds of our initial setting where $\ell_t = \ell$ and conditioning on $\cE$ (under which $|\cT_\ell| \geq T'$), we have
\begin{align*}
    \E\left[\,\sum_{t \in \cT_\ell} \gap_\star(p_t,u_t) \,\Biggm|\, \cE\right] &\geq \E[R_{\cA_\ell}(T') \mid \cE]\\
    &\geq \E[R_{\cA_\ell}(T')] - 1\\
    &= \Omega(\sqrt{K_\star T/d}),
\end{align*}
since $T' \geq K_\star^3$ and $T \geq 2d$. We then compute
\begin{align*}
    \E[R_\cA(T)] &\geq \E[R_\cA(T) \mid \cE] - 1\\
    &= \E\left[\,\sum_{\ell=1}^d \sum_{t \in \cT_\ell} \gap_\star(p_t,u_t) \,\Biggm|\, \cE\right] - 1\\
    &= \sum_{\ell=1}^d \E\left[\, \sum_{t \in \cT_\ell} \gap_\star(p_t,u_t) \,\Biggm|\, \cE\right] - 1\\
    &= \Omega\bigl(\sqrt{K_\star T d}\bigr),
\end{align*}
as desired.

\xhdr{Small $K_\star$.}
If $K_\star = 2$, a simpler one-dimensional construction suffices. We set $v_1 = 1/4$, $v_2 = 1/2$, $Q_0(v_1) = Q_0(v_2) = 1/2$, so that $\dem_{Q_0}(v_1) = 1$, $\dem_{Q_0}(v_1) = 1/2$, and $\rev_{Q_0}(v_1) = \rev_{Q_0}(v_2) = 1/4$. We then define $Q_\pm(v_1) = 1/2 \mp \eps$ and $Q_\pm(v_2) = 1/2 \pm \eps$, so that $\rev_{Q_\pm}(v_1) = 1/4$ and $\rev_{Q_\pm}(v_2) = 1/4 \pm \eps/2$. Moreover, taking $\eps = 1/\sqrt{T}$ (valid so long as $T \geq 4$, which we assumed), we can simply employ the standard $2$-armed bandits lower bound (e.g., using the same techniques as in \cref{prf:1d-lower}) to show that no algorithm can achieve regret $o(\sqrt{T})$ for both instances. Our argument above, tensoring one-dimensional instances and obtaining a contextual lower bound, still goes through, since $Q_+$ and $Q_-$ share the same set of probability weights, giving a lower bound of $\Omega(\sqrt{T d})$ for a worst-case instance. For $K_\star = 3$, we can easily tweak the $K_\star = 2$ construction to place negligible mass at $v_3 = 0$, and the lower bound still holds.
\qed

\subsection{Non-contextual Lower Bound (Proof of Lemma~\ref{lem:1d-lower})}
\label{prf:1d-lower}

We first recall some basic information theory. Write $\KL(p\,\|\,q) \defeq \E_p[\log(\dd p / \dd q)]$ for the Kullback-Leibler divergence between distributions $p$ and $q$ on the same domain $\cX$. When $p$ and $q$ are Bernoulli distributions with success probabilities $a,b \in [0,1]$, we write $\KL(a\,\|\,b) = \KL(p\,\|\,q)$.

\begin{fact}[Pinsker's inequality]
\label{fact:Pinsker}
For $p, q \in \Delta(\cX)$ and $M \geq 0$, we have
\begin{equation*}
    \sup_{f:\cX \to [0,M]} \E_{p}[f] - \E_{q}[f] \leq M\sqrt{\tfrac{1}{2}\KL(p\,\|\,q)}.
\end{equation*}
\end{fact}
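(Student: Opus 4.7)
The plan is to decompose the inequality into two standard pieces: a variational representation of the one-sided functional gap in terms of total-variation distance, followed by the classical Pinsker inequality bounding $\TV$ by $\KL$.

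First, I would observe that, for any $f:\cX\to[0,M]$, the quantity $\E_p[f]-\E_q[f]=\int f\,(p-q)\,\dd\mu$ is a linear functional of $f$ (with respect to a dominating measure $\mu$). Maximizing this linear functional over the box $[0,M]^{\cX}$ picks out the extreme point $f^\star(x)=M\cdot\mathds{1}\{p(x)>q(x)\}$, giving $\sup_f(\E_p[f]-\E_q[f])=M\int_{\{p>q\}}(p-q)\,\dd\mu=M\cdot\TV(p,q)$, where $\TV(p,q)=\tfrac{1}{2}\|p-q\|_1$.

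Next, I would invoke the classical Pinsker inequality $\TV(p,q)\le\sqrt{\tfrac{1}{2}\KL(p\|q)}$. A self-contained derivation reduces to the Bernoulli case by the data processing inequality applied to the two-cell partition $\{p>q\},\{p\le q\}$: both $\TV$ and $\KL$ are non-increasing under such coarsening, so it suffices to show $2(a-b)^2\le\KL(a\|b)$ for all $a,b\in[0,1]$. Setting $h(a)=\KL(a\|b)-2(a-b)^2$, one verifies $h(b)=0$, $h'(b)=0$, and $h''(a)=\tfrac{1}{a(1-a)}-4\ge 0$ (since $a(1-a)\le\tfrac{1}{4}$), so $h\ge 0$ everywhere on $[0,1]$.

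Chaining these two bounds yields the claimed inequality. There is no real obstacle here, since the argument is textbook; the main thing to double-check is that the one-sided range $[0,M]$ (as opposed to $[-M,M]$) does not introduce an extra factor of two, which is precisely what the variational step confirms because the supremum already retains only the positive part of $p-q$.
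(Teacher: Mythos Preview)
Your argument is correct. The paper states this as a \texttt{fact} without proof, so there is no ``paper's own proof'' to compare against; you have supplied a clean, self-contained derivation where the paper simply cites the result. The two steps you use --- the variational identity $\sup_{f:\cX\to[0,M]}\bigl(\E_p[f]-\E_q[f]\bigr)=M\cdot\TV(p,q)$ and the classical Pinsker bound $\TV(p,q)\le\sqrt{\tfrac{1}{2}\KL(p\|q)}$ via reduction to the Bernoulli case --- are exactly the standard route, and your check that the one-sided range $[0,M]$ yields $M\cdot\TV$ (not $2M\cdot\TV$) is the only point worth flagging and you handle it correctly.
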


\begin{fact}[Bernoulli KL bound]
\label{fact:bernoulli-KL-bd}
For $a,\delta \in [0,1]$ such that $a + \delta \in (0,1)$, we have
\begin{equation*}
    \KL(a\,\|\,a+\delta) \leq \frac{\delta^2}{(a + \delta)(1 - a - \delta)}.
\end{equation*}
\end{fact}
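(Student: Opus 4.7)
The cleanest route is via the chi-squared divergence: I will prove the well-known comparison $\KL(p\,\|\,q)\le \chi^2(p\,\|\,q)$ for arbitrary discrete distributions, then specialize to the two-point case and compute $\chi^2$ explicitly.

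First I would observe that for any two probability measures $p,q$ on a common finite alphabet (with $q$ strictly positive on the support of $p$), the elementary inequality $\log r \le r-1$ applied at $r=p(x)/q(x)$ yields $r\log r \le r^2-r$, and hence
\begin{equation*}
    \KL(p\,\|\,q) = \sum_x q(x)\,\frac{p(x)}{q(x)}\log\frac{p(x)}{q(x)} \;\le\; \sum_x q(x)\left(\frac{p(x)^2}{q(x)^2} - \frac{p(x)}{q(x)}\right) = \sum_x \frac{p(x)^2}{q(x)} - 1 = \chi^2(p\,\|\,q),
\end{equation*}
where in the last equality I use $\chi^2(p\,\|\,q) = \sum_x (p(x)-q(x))^2/q(x) = \sum_x p(x)^2/q(x) - 1$.

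Next, I would specialize to Bernoulli distributions with parameters $a$ and $b\defeq a+\delta$, both lying in $(0,1)$ so that the divergence is well-defined. A direct computation gives
\begin{equation*}
    \chi^2(a\,\|\,b) = \frac{(a-b)^2}{b} + \frac{((1-a)-(1-b))^2}{1-b} = (b-a)^2\left(\frac{1}{b}+\frac{1}{1-b}\right) = \frac{\delta^2}{b(1-b)} = \frac{\delta^2}{(a+\delta)(1-a-\delta)}.
\end{equation*}
Chaining this with the general bound established above yields $\KL(a\,\|\,a+\delta)\le \delta^2/((a+\delta)(1-a-\delta))$, as claimed. The edge cases $a\in\{0,1\}$ (where the KL may still be finite provided $a+\delta\in(0,1)$ and at least one of the $0\log 0 = 0$ terms is handled correctly) follow by the same computation with the convention $0\log 0 =0$.

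The main obstacle is essentially nonexistent: the only substantive step is recognizing the $r\log r \le r^2-r$ bound, which is just the first-order tangent inequality for $\log$. If one preferred a self-contained approach avoiding the $\chi^2$ comparison, an alternative is to define $f(b)=\KL(a\,\|\,b)$ and verify $f(a)=0$, $f'(a)=0$, with Taylor's theorem reducing the problem to bounding $f''(t)=a/t^2+(1-a)/(1-t)^2$ on the interval between $a$ and $a+\delta$; however the $\chi^2$ route is shorter and avoids case analysis on the sign of $\delta$.
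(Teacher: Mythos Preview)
The paper states this as a standard fact without proof, so there is no paper argument to compare against. Your proof via the comparison $\KL(p\,\|\,q)\le \chi^2(p\,\|\,q)$ is correct: the step $r\log r\le r^2-r$ follows from $\log r\le r-1$ for $r>0$ (and holds at $r=0$ under the convention $0\log 0=0$), and the Bernoulli $\chi^2$ computation is straightforward. One minor remark: under the stated hypotheses $a,\delta\in[0,1]$ with $a+\delta\in(0,1)$, the case $a=1$ cannot occur, so only the $a=0$ edge case is relevant, and your argument already covers it.
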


\begin{fact}[KL chain rule]
\label{fact:KL-chain-rule}
For distributions $p,q$ over sequences $X^n \!=\! (X_1, \dots, X_n) \!\in\! \cX^n$,
\begin{equation*}
    \KL(p\|q) = \sum_{i=1}^n \E_{X^n \sim p}\left[\KL\bigl(p(X_i \mid X_1, \dots, X_{i-1})\,\|\, q(X_i \mid X_1, \dots, X_{i-1})\bigr)\right],
\end{equation*}
where $p(X_i | X_1, \dots, X_{i-1})$ denotes the conditional distribution of $X_i$ under $p$ given $X_1, \dots, X_{i-1}$.
\end{fact}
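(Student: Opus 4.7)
The plan is to derive the KL chain rule by factoring the joint densities via the probability chain rule, pushing the resulting sum through the expectation, and then invoking the tower property. Assuming $p \ll q$ (otherwise both sides are $+\infty$ and the identity holds trivially in the extended sense), I would begin from the definition $\KL(p\|q) = \E_{X^n \sim p}\bigl[\log\bigl(p(X^n)/q(X^n)\bigr)\bigr]$.

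Next, I would apply the probability chain rule to write $p(X^n) = \prod_{i=1}^n p(X_i \mid X_{<i})$ and likewise $q(X^n) = \prod_{i=1}^n q(X_i \mid X_{<i})$, where $X_{<i} \defeq (X_1, \ldots, X_{i-1})$ (with the convention that $X_{<1}$ is empty, so $p(X_1 \mid X_{<1}) = p(X_1)$). Taking logarithms converts the quotient of products into a finite sum:
\begin{equation*}
\log \frac{p(X^n)}{q(X^n)} = \sum_{i=1}^n \log \frac{p(X_i \mid X_{<i})}{q(X_i \mid X_{<i})}.
\end{equation*}
Then linearity of expectation lets me take $\E_{X^n \sim p}$ inside the sum.

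For each index $i$, I would apply the tower property, conditioning first on $X_{<i}$: the inner conditional expectation becomes $\E_{X_i \sim p(\cdot \mid X_{<i})}\bigl[\log\bigl(p(X_i \mid X_{<i})/q(X_i \mid X_{<i})\bigr)\bigr]$, which is exactly $\KL\bigl(p(X_i \mid X_{<i}) \,\|\, q(X_i \mid X_{<i})\bigr)$ by the definition of KL divergence applied to the conditional distributions. Taking the outer expectation over $X_{<i} \sim p$ and summing over $i \in [n]$ reproduces the claimed identity.

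The main obstacle here is purely bookkeeping: one must carefully distinguish which variable each expectation ranges over and ensure that when conditional densities vanish the quotient is handled correctly (the standard convention $0 \log (0/0) = 0$ combined with $p \ll q$ on the relevant conditional slices). No creative idea is required, as this is a classical identity from information theory; the bulk of the work lies in notation management.
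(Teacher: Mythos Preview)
The paper states this as a \emph{Fact} without providing a proof; it is cited as a classical identity from information theory and used directly in the lower bound argument of \cref{prf:1d-lower}. Your proposal supplies the standard derivation (factor the joint densities via the probability chain rule, take logarithms, apply linearity of expectation, then the tower property to identify each summand as a conditional KL), which is correct and is exactly the textbook argument one would expect; there is nothing further to compare.
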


Next, we observe that, if the buyer valuations are drawn from $Q_j$, the price $v_{j}$ achieves expected revenue at least $1/2 + \Omega(\eps)$ whereas all other $v_i \neq v_j$ achieve revenue $1/2$. Indeed, the demand function of $Q_j$ is identical to $Q_0$ at all $v_i \neq v_{j}$, and at $v_j$ we have
\begin{align*}
    \rev_{Q_j}(v_j) = v_j (\dem_{Q_0}(v_j) + \eps) = 1/2 + \eps v_j \geq 1/2 + \eps/2.
\end{align*}
Thus, $\gap_{Q_j}(v_i) \geq \eps/2 \cdot \mathds{1}\{i = j\}$. Also, for $i > 1$, \cref{eq:lb-demand} implies that $1/2 \leq \dem_{Q_0}(v_i) \leq 5/6$. So, imposing that our perturbation size $\eps$ is less than $1/12$, we have that
\begin{equation}
\label{eq:lb-KL-bd}
    \KL\bigl(\dem_{Q_0}(v_i)\,\|\,\dem_{Q_j}(v_i)\bigr) \leq \frac{\eps^2}{\tfrac{1}{2} \cdot \tfrac{1}{12}} = 24\eps^2.
\end{equation}
for all $i,j \in \{2, \dots, K_\star - 1\}$.

Now, let $J$ be drawn uniformly at random from $\{2, \dots, K_\star-1\}$, so that $Q_J$ coincides with the random instance from the lemma statement. Conditioned on $J$, let the valuations $V^T = (V_1,\dots,V_T)$ be drawn i.i.d.\ from $Q_J$, i.e., $V^T \sim Q_J^T$. Under this set up, $v_J$ is the unique optimal price, and playing $v_i$ for $i \neq J$ incurs regret $\Omega(\eps)$. We will be comparing to the alternative world where $V^T \sim Q_0^T$ and all arms have equal expected revenue of $1/2$. All expectations under this alternative will be clearly denoted as such.

Without loss of generality, we assume that the fixed pricing algorithm $\cA$ is deterministic (since we may condition on any internal randomness of $\cA$). Applying $\cA$ to instance $Q_J$ induces a random sequence of prices $P_1,\ldots,P_T$, where $P_t$ each is a function of the previous purchase decisions $Y_\tau = \mathds{1}\{V_\tau \geq P_\tau\}$ for $\tau \in [t-1]$. Without loss of generality, we may assume that each $P_t$ belongs to $\{v_2, \dots, v_{K_\star -1}\}$, since rounding up to the nearest element of this set can only increase expected revenue. Thus, defining the empirical frequencies
\begin{equation*}
    N_j \defeq \sum_{t=1}^T \mathds{1}\{P_t = v_j\},
\end{equation*}
we have $\sum_{j=2}^{K_\star -1} N_j = T$. Conditioned on $J$, the regret of $\mathcal{A}$ over the $T$ rounds is
\begin{align*}
    R_\cA(T) &= \sum_{t=1}^T \gap_{Q_J}(P_t)\\
    &\geq \frac{\eps}{2} \sum_{t=1}^T \mathds{1}\{P_t \neq v_J\}\\
    &\geq \frac{\eps}{2} \left(T - N_J\right),
\end{align*}
Thus, we have in expectation over $J$ and $V^T$ that
\begin{align}
\label{eq:key-regret-lb}
    \E_{J,V^T}[R_\cA(T)] \geq \frac{\eps}{2} \left(T - \E_{J,V^T}[N_J]\right).
\end{align}
We next control $\E_{J,V^T}[N_J]$ using the KL chain rule and Pinsker's inequality. Fixing $j \in \{2, \dots, K_\star - 1\}$, write $q_j(\cdot)$ and $q_0(\cdot)$ for the induced distributions on the entire purchase sequence $Y^T = (Y_1,\dots,Y_T)$ under $Q_j$ and $Q_0$, respectively. By the chain rule, we compute
\begin{align*}
    \KL(q_0 \,\|\, q_j) &= \sum_{t=1}^T \E_{Y^T \sim q_0}\left[\KL\bigl(q_0(Y_t \mid Y_1, \dots, Y_{t-1}) \,\|\, q_j(Y_t \mid Y_1, \dots, Y_{t-1})\bigr)\right]\\
    &= \sum_{t=1}^T \E_{Y^T \sim q_0}\left[\KL\bigl(q_0(Y_t \mid P_t) \,\|\, q_j(Y_t \mid P_t)\bigr) \cdot \mathds{1}\{P_t = v_j\}\right]\\
    &= \sum_{t=1}^T \E_{Y^T \sim q_0}\left[\KL\bigl(\dem_{Q_0}(v_j) \,\|\, \dem_{Q_j}(v_j)\bigr) \cdot \mathds{1}\{P_t = v_j\}\right]\\
    &\leq 24 \eps^2 \sum_{t=1}^T \E_{Y^T \sim q_0}\left[\mathds{1}\{P_t = v_j\}\right] \tag{Eq.\ \ref{eq:lb-KL-bd}}\\
    &= 24 \eps^2 \E_{V^T \sim Q_0^T}[N_j]
\end{align*}
In short, $q_0$ and $q_j$ are only distinguishable for rounds in which $\cA$ selects price $v_j$, and even in these rounds their divergence is bounded. Now, since $N_j$ is a deterministic function of $Y^T$ for fixed $\cA$,
\[
\mathbb{E}_{Y^T \sim q_j}[N_j]
\;\le\;
\mathbb{E}_{Y^T\sim q_0}[N_j] + \epsilon\,T\,\sqrt{24\,\mathbb{E}_{V\sim Q_0}[N_j]}.
\]
Taking an expectation over $J$ then gives
\begin{align*}
\E_{J,V^T}[N_J] &\leq \mathbb{E}_{J}\Bigl[\mathbb{E}_{V^T\sim Q_0^T}[\,N_J]\Bigr]
\;+\;\epsilon\,T\,\sqrt{24\,\mathbb{E}_{J}\Bigl[\mathbb{E}_{V^T\sim Q_0^T}[\,N_J]\Bigr]}.
\end{align*}
Since $\sum_{j=2}^{K_\star - 1} N_j = T$ and $J$ is uniform over $\{2,\dots,K_\star-1\}$, $K_\star \geq 4$, it follows that 
\begin{equation*}
    \mathbb{E}_{J}\Bigl[\mathbb{E}_{V^T\sim Q_0^T}[\,N_J]\Bigr] = \frac{1}{K_\star -2}\sum_{j=2}^{K_\star - 1} \mathbb{E}_{V^T\sim Q_0^T}[N_j] = \frac{T}{K_\star - 2} \leq \frac{T}{2}.
\end{equation*}
Plugging this into \eqref{eq:key-regret-lb} gives
\begin{align*}
\E_{J,V^T}[R_\cA(T)]  &\geq \frac{\eps}{2} \cdot \bigl(T - \E_{J,V^T}[N_J]\bigr)\\
&\geq \frac{\eps}{2} T \left(\frac{1}{2} - \eps \sqrt{24 \cdot \frac{T}{K_\star - 2}}\right)
\end{align*}
We now set $\eps = \frac{1}{3\sqrt{24}}\sqrt{(K_\star - 2)/T}$. Our construction required that $\eps \leq 1/(2K_\star - 2)$ and $\eps \leq 1/12$, which are satisfied under our assumption that $T \geq K_\star^3$. This yields a final lower bound of
\[
\mathbb{E}_{J,V^T}[R_\mathcal{A}(T)]
\geq \frac{\eps}{2} \cdot T \left( \frac{1}{2} - \eps\sqrt{24\frac{T}{K_\star-2}} \right) = \Omega\Bigl(\!\sqrt{K_\star T}\Bigr),
\]
as desired.
\qed

\section{Proofs for Section~\ref{sec:zooming}}
\label{app:zooming-proofs}

To fully specify the algorithm, we introduce the following notation:

\squishlist
    \item Since our analysis exclusively reasons about the true type distribution $D_\star$, we abbreviate $\dem = \dem_{\star}$, $\rev = \rev_{\star}$, $\gap = \gap_{\star}$, and $\br = \br_\star$.
    \item For each price $p \in [0,1]$ and round $t \in [T]$, we write
    \begin{itemize}
        \item $\cT_t(p) \defeq \{ \tau \in [t-1] : p_\tau = p \}$ for the set of previous rounds where $p$ was played,
        \item $n_t(p) \defeq |\cT_t(p)|$ for the count of these rounds,
        \item $\mu_t(p) \defeq \frac{1}{n_t(p)}\sum_{\tau \in \cT_t(p)} p y_t$ for the average revenue during these rounds,
        \item $V_t(p) \defeq \frac{1}{n_t(p)-1}\sum_{\tau \in \cT_t(p)} (py_t - \mu_t(p))^2$ for the sample variance, and
        \item $\sigma^2(p) \defeq p^2 \dem(p)(1-\dem(p))$ for the population variance (unknown to the seller).
    \end{itemize}
    When $n_t(p) = 0$, we set $\mu_t(p) = 0 = V_t(p) = 0$. When $n_t(p) = 1$, take $V_t(p) = \infty$.
    \item Defining confidence radius $r_t(p) \defeq \sqrt{\frac{10V_t(p) \log T}{n_t(p)}} + \frac{12 \log(T)}{n_t(p)-1}$ (taken as $+\infty$ if $n_t(p) \leq 1$), a variant of Bernstein's inequality shows that $|\mu_t(p) - \mathsf{rev}(p)| \leq r_t(p)$ w.h.p.\ (see \cref{lem:zooming-concentration}).
    \item Write $\UCB_t(p) \defeq \mu_t(p) + r_t(p)$, so that $\rev(p) \leq \UCB_t(p)$ w.h.p.
    \item We say a price $p$ is \emph{covered} by $q \in S$ if $p \in [q, q + r_t(q)]$ and $q$ is the largest active price no greater than $p$, i.e., $q = \max\{q' \in S: q' \leq p\}$.
    One-sided Lipschitzness of the revenue function (Lemma~\ref{lem:one-sided-lipschitzness}) implies that $\rev(p) - \rev(q) \leq r_t(q)$ w.h.p.
    \item Define the index of a price $q \in S$ as $\idx_t(q) \defeq \UCB_t (q) + r_t(q)$. Each price $p$ covered by some $q \in S$ satisfies $\rev(p) \leq \idx_t(q)$.
\squishend

\begin{algorithm2e}[!t]
\SetAlgoNoEnd
    \caption{\Zooming: Variance-Aware Zooming for Non-Contextual Pricing}
    \label{alg:zooming}
    \textbf{Initialize}: active price set $S \leftarrow \{\frac{2^i}{T}: i = 0,1, \dots, \lfloor \log_2 T \rfloor \} \cup \{1\}$\;
    \For{each round $t \in [T]$}{
        play $p_t \in \argmax_{q \in S} \idx_t(q)$\;
        observe $y_t = \mathds{1} \{ \theta_t \leq p_t\}$ and update $n_{t+1}(p_t)$\;
        \If{a price $p > 1/T$ becomes uncovered}{
        $q \gets \min\{q' \in S: q' > p_t\}$\label{step:zooming-added-arm}\;
        $S \gets S \cup \{(p_t + q)/2\}$}
    }
\end{algorithm2e}

\subsection{Main Regret Bound for \Zooming (Proof of Theorem~\ref{thm:zooming})}

As mentioned, the lower bound follows by that in \cref{thm:pops} when $d=1$. For the upper bound, we prove a generic regret bound in \cref{prf:regret-zooming} depending on the variance-aware zooming dimension.

\begin{lemma}
\label{lem:regret-zooming}
For $c > 0$, \Zooming achieves regret $\widetilde{O}\left(c^{1/(2+z)}\,T^{1 - 1/(2+z)}\right)$, where $z = \ZoomDimV(c)$.
\end{lemma}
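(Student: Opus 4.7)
The plan is to adapt the classical zooming-bandit analysis to our variance-aware confidence intervals. First, via an empirical Bernstein inequality and a union bound over rounds and active arms, I would establish a high-probability clean event on which $|\mu_t(p) - \rev(p)| \leq r_t(p)$ for all $t \in [T]$ and $p \in S$; under this event, the empirical variance $V_t(p)$ is also controlled by $\sigma^2(p)$ up to lower-order terms. On the clean event, one-sided Lipschitzness (\cref{lem:one-sided-lipschitzness}) ensures that whenever $q$ covers $p$, $\rev(p) \leq \rev(q) + r_t(q) \leq \idx_t(q)$; by the covering invariant this gives $\rev(\br) \leq \idx_t(p_t)$, while from the confidence bound $\idx_t(p_t) \leq \rev(p_t) + 3r_t(p_t)$. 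Combining yields the fundamental relation $\gap(p_t) \leq 3r_t(p_t)$, which after solving for $n_t$ in the Bernstein form of $r_t$ translates into the per-arm count bound $n_T(p) \lesssim \sigma^2(p)\log T / \gap(p)^2 + \log T / \gap(p)$ and hence the per-arm regret contribution $n_T(p)\gap(p) \lesssim \sigma^2(p) \log T / \gap(p) + \log T$.

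Next, I would bucket active arms by dyadic gap scale, $A_i = \{p \in S_T : \gap(p) \in (2^{-i}, 2^{-i+1}]\}$, and separately absorb the regret from arms with $\gap(p) \leq \gap^\star$ into an overall additive $\gap^\star T$. The structural lemma underlying zooming is that any two arms in $A_i$ are $\Omega(2^{-i})$-separated: when an arm $p \in A_i$ is added at some round $s$, the triggering arm $p_s$ must have had $r_s(p_s) = \Omega(\gap(p_s)) = \Omega(2^{-i})$, which lower-bounds the width of the neighborhood in which $p$ is placed. Since $A_i \subseteq X_{2^{-i+1}}$, a packing-to-cover argument lets me charge $\sum_{p \in A_i} \sigma^2(p)$ against a near-optimal variance-weighted cover witnessing $N_\mathrm{v}(X_{2^{-i+1}}, 2^{-i}/20) \leq c \cdot 2^{iz}$, yielding $\sum_{p \in A_i} \sigma^2(p) \lesssim c \cdot 2^{iz}$ and similarly $|A_i| \lesssim c \cdot 2^{iz}$. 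Plugging these into the per-arm bound and summing geometrically,
\begin{equation*}
R(T) \lesssim \gap^\star T + c \log T \sum_{i : 2^{-i} > \gap^\star} \bigl(2^i \cdot 2^{iz} + 2^{iz}\bigr) \lesssim \gap^\star T + c\log T \cdot (\gap^\star)^{-(1+z)},
\end{equation*}
and tuning $\gap^\star \asymp (c/T)^{1/(2+z)}$ yields $R(T) = \widetilde{O}(c^{1/(2+z)} T^{1-1/(2+z)})$, as desired.

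The main obstacle is the packing-to-cover translation with variance weights in the last step: unlike the unweighted case where counting active arms against a cover is immediate, transferring $\sigma^2$ from an active arm to a nearby cover element is delicate because $\sigma^2(p) = p^2 \dem(p)(1-\dem(p))$ can in principle jump at the $K_\star$ type locations. I expect this is resolved by choosing a cover witnessing $N_\mathrm{v}$ whose points lie within the same piecewise-constant region as the nearby active arms (or by showing that any $\Omega(2^{-i})$-separated subset of $X_{2^{-i+1}}$ can itself be used as a cover at scale $O(2^{-i})$ with only constant-factor variance inflation). A secondary bookkeeping step will be to handle freshly activated arms for which $V_t(p) = \infty$ or $n_t(p) \leq O(\log T)$, whose total contribution is $O(|A_i| \log T)$ and is already absorbed by the bound above.
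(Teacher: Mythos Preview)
Your outline matches the paper's arc (empirical-Bernstein clean event, the gap bound $\gap(p)\lesssim r_t(p)$, per-arm count bound, dyadic bucketing, separation, summing), but two steps do not go through as sketched.

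\textbf{Separation.} You claim any two arms in $A_i$ are $\Omega(2^{-i})$-separated because the triggering arm $p_s$ has $r_s(p_s)=\Omega(\gap(p_s))=\Omega(2^{-i})$. The second equality is unjustified: $p_s$ is the \emph{played} arm, not the newly added midpoint $p$, and nothing forces $\gap(p_s)$ into the same bucket as $\gap(p)$. More fundamentally, in \Zooming covering is one-sided and nearest-neighbor (an arm covers only prices up to the next active arm to its right), and new arms are dyadic midpoints rather than the uncovered points themselves, so the standard zooming separation argument does not transfer. The paper does \emph{not} prove pairwise separation; it proves a weaker triple lemma (\cref{lem:arm-separation}): for three \emph{consecutive} active arms $x<y<z$ not in the initial set, $z-x>\tfrac{1}{10}\min\{\gap(x),\gap(y)\}$. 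It then groups $Y_\delta$ into consecutive triples and extracts six $(\delta/10)$-packings (two per triple-position), paying only a constant factor.

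\textbf{Cardinality.} The bound $|A_i|\lesssim c\cdot 2^{iz}$ does not follow from $\ZoomDimV(c)\le z$: $N_\mathrm{v}$ controls $\sum_p\sigma^2(p)$, not cardinality, and arms with near-zero variance contribute nothing to $N_\mathrm{v}$ yet still count toward $|A_i|$. The paper instead uses the trivial one-dimensional packing bound $|Y_\delta|=O(\delta^{-1}+\log T)$; this feeds the lower-order $\log(T)/\gap(p)$ term from Bernstein and yields an additive $\log^2(T)\,\alpha^{-1}$, which is dominated after tuning $\alpha$. Regarding the variance-weighted packing-to-cover obstacle you flag: the paper invokes exactly your second suggestion (extend the packing to a cover), with the same directional looseness you worry about; for the application here this is harmless because the argument in \cref{lem:zoom-dim-bound} in fact upper-bounds the variance sum over \emph{any} $(\delta/20)$-separated subset of $X_\delta$, not just the optimal cover.
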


Next, we prove the zooming dimension bounds claimed in \cref{sec:zooming}.

\begin{lemma}
\label{lem:zoom-dim-bound}
We have $\ZoomDimV(10K_\star) = 0$ and $\ZoomDimV(10) \leq \ZoomDim(10) \leq 1$.
\end{lemma}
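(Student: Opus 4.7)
The plan is to handle the three inequalities in order of difficulty. The middle inequality $\ZoomDimV(10) \le \ZoomDim(10)$ is immediate from $\sigma^2(p) \le 1$, which gives $N_{\mathrm v}(X,\tau) \le N(X,\tau)$ for any $X$ and $\tau$ and hence passes through the $\inf$ in the dimension definitions. For $\ZoomDim(10) \le 1$, the idea is to discard all pricing structure and use only $X_\delta \subseteq [0,1]$: a uniform grid of mesh $\delta/5$ inside $[0,1]$ yields a $(\delta/10)$-cover of $X_\delta$ of size $\lceil 5/\delta \rceil \le 10/\delta$ on the relevant range $\delta \in (0,1]$ (outside this range the constraint is trivial since $\gap_\star \le 1$).

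The substantive claim is $\ZoomDimV(10K_\star) = 0$, i.e., $N_{\mathrm v}(X_\delta, \delta/10) \le 10 K_\star$ uniformly in $\delta$. I would prove it by exhibiting, for each $\delta$, an explicit $(\delta/10)$-cover $X' \subseteq X_\delta$ whose total variance is $O(K_\star)$. Writing $\supp(D_\star) = \{\theta^{(1)} < \cdots < \theta^{(K_\star)}\}$, $\theta^{(0)} = 0$, and $d_i \defeq \dem_\star(\theta^{(i)})$, the first step is to record two pieces of structure. Because $\dem_\star$ is constant on $(\theta^{(i-1)}, \theta^{(i)}]$, the revenue $\rev_\star(p) = d_i\, p$ is affine with slope $d_i$ there, so $\br_\star \in \supp(D_\star)$ and, by one-sided Lipschitzness (Lemma~\ref{lem:one-sided-lipschitzness}), $X_\delta$ meets $(\theta^{(i-1)}, \theta^{(i)}]$ in a (possibly empty) interval $I_i$ ending at $\theta^{(i)}$ of length $|I_i| \le \delta/d_i$. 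At the same time, on $I_i$ the variance obeys $\sigma^2(p) = p^2 d_i (1-d_i) \le d_i$, and the tail $X_\delta \cap (\theta^{(K_\star)}, 1]$ lies where $\dem_\star \equiv 0$, so it has variance zero and can be covered ``for free.''

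Given these two facts, I would assemble the cover interval by interval. For each nonempty $I_i$, place $n_i = \lceil 5/d_i \rceil + O(1)$ cover points inside $I_i$ (e.g., equispaced along $I_i$ with a correction near the endpoint $a_i = \theta^{(i)} - |I_i|$ to keep them in $X_\delta$); since $n_i \cdot \delta/5 \ge \delta/d_i \ge |I_i|$, these points form a $\delta/10$-cover of $I_i$. The contribution of $I_i$ to the variance-weighted total is then at most
\begin{equation*}
n_i \cdot d_i \;\le\; \bigl(5/d_i + O(1)\bigr)\, d_i \;=\; 5 + O(d_i) \;\le\; 7.
\end{equation*}
Adding any (zero-variance) cover points needed for the tail and summing over the at most $K_\star$ nonempty intervals gives $N_{\mathrm v}(X_\delta, \delta/10) \le 7 K_\star \le 10 K_\star$, as required.

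I do not anticipate a genuine obstacle. The single conceptual point is the exact cancellation of two effects: the low-demand intervals (small $d_i$) force $|I_i|$ and thus the ordinary covering number of $I_i$ to blow up like $1/d_i$, but they simultaneously depress $\sigma^2$ by the matching factor $d_i$. This cancellation is precisely what $\ZoomDimV$ is designed to capture, and via Lemma~\ref{lem:regret-zooming} it converts the generic $T^{2/3}$ zooming rate implied by $\ZoomDim(10) \le 1$ into the $\sqrt{K_\star T}$ bound asserted in Theorem~\ref{thm:zooming}. The only mildly finicky bookkeeping is ensuring the cover points lie in $X_\delta$ rather than drifting past the left endpoint of $I_i$, which is handled by inserting a boundary correction; the slack in the constant $10$ in $10K_\star$ comfortably absorbs this.
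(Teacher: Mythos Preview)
Your proposal is correct and follows essentially the same route as the paper: decompose $X_\delta$ into the $K_\star$ subintervals $(\theta^{(i-1)},\theta^{(i)}]$, use linearity of $\rev_\star$ on each to bound its length by $\delta/d_i$, bound $\sigma^2 \le d_i$ there, and observe the cancellation; the other two inequalities are handled identically via $\sigma^2 \le 1$ and $X_\delta \subseteq [0,1]$. If anything, you are a bit more careful than the paper, explicitly treating the zero-demand tail $(\theta^{(K_\star)},1]$ and the issue of keeping cover points inside $X_\delta$.
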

\begin{proof}
For each $\delta > 0$ and type $i \in [K_\star]$, let $X_\delta^{\smash{(i)}}$ denote the set of activated arms $p$ with $\gap_\star(p) \leq \delta$ that lie in the interval $(\theta^{(i-1)},\theta^{(i)}]$, to the left of type $i$. Since revenue is linearly increasing within each such interval, with slope $d_i = \dem(\theta^{(i)})$, the gap condition requires that each $p \in X_\delta^{\smash{(i)}}$ also satisfies $p \geq \theta^{(i)} - \delta d_i^{-1}$. Moreover, for $p \in X_\delta^{\smash{(i)}}$, we have $\sigma^2(p) \leq d_i$. Thus, we obtain
\begin{equation*}
    N_\mathrm{v}(X_\delta,\delta/20) \leq \sum_{i\in [K_\star]} N_\mathrm{v}(X_\delta^{(i)},\delta/20) \leq \sum_{i \in [K_\star]} 10 d_i^{-1} \cdot d_i \leq 20{K_\star},
\end{equation*}
implying the first bound. For the second, we note that $N(X_\delta,\delta/20) \leq N([0,1],\delta/20) \leq 20\delta^{-1}$.
\end{proof}

Combining the two lemmas gives the theorem. Indeed, the $\sqrt{K_\star T}$ bound follows by Lemma~\ref{lem:regret-zooming} with $c = 10K_\star$ and $z = 0$, using Lemma~\ref{lem:zoom-dim-bound}. Similarly, the $T^{2/3}$ bound follows by taking $c=10$ and $z = 1$.
\qed

\subsection{Base Regret Bound for \Zooming (Proof of Lemma~\ref{lem:regret-zooming})}
\label{prf:regret-zooming}

We begin with a few helper lemmas. Throughout, we assume that $T \geq 3$ (otherwise the regret bound holds trivially). Our proofs mirror those of similar lemmas in \cite{slivkins2019introduction}, with small adjustments to handle the variance-adjusted confidence radii and the dyadic price selection rule.

\begin{lemma}[Concentration]
\label{lem:zooming-concentration}
Write $\cE_{\mathrm{clean}}$ for the event that 
\begin{equation*}
    |\mu_t(p) - \rev(p)| \leq r_t(p) \leq \sqrt{\frac{10\,\sigma^2(p) \log T}{n_t(p)}} + \frac{126\log(T)}{n_t(p) - 1}
\end{equation*}
for all $t \in [T]$ and for all $p \in [0,1]$. Then $\PP(\cE_{\mathrm{clean}}) \geq 1 - 8T^{-2}$.
\end{lemma}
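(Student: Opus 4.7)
The plan is a standard empirical-Bernstein-plus-union-bound argument, with care taken to decouple the adaptive sample sizes from the i.i.d.\ concentration statements. First, observe that the only prices ever played lie in the random active set $\cP \defeq \bigcup_{t \in [T]} S_t$, which by the update rule of \Zooming satisfies $|\cP| \leq T + \lceil \log_2 T \rceil + 2 \leq 2T$ (initial $O(\log T)$ prices plus at most one addition per round). For $p \notin \cP$, or whenever $n_t(p) \leq 1$, we have $r_t(p) = +\infty$ by definition and the right-hand side of the second inequality likewise diverges (or is vacuous), so it suffices to prove both bounds for $p \in \cP$ with $n_t(p) \geq 2$. To handle the randomness of $n_t(p)$, couple each $p \in \cP$ in advance with an i.i.d.\ sequence $(y_k(p))_{k \geq 1}$ of $\Ber(\dem(p))$ draws so that the $k$-th time the algorithm plays $p$ it observes $y_k(p)$; then $\mu_t(p)$ and $V_t(p)$ are deterministic functions of $y_1(p),\ldots,y_{n_t(p)}(p)$.

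Next, for each fixed pair $(p,n)$ with $p \in \cP$ and $n \in \{2,\ldots,T\}$, apply two off-the-shelf i.i.d.\ concentration bounds to $p\cdot y_1(p),\ldots,p\cdot y_n(p)$ (all bounded in $[0,1]$). The empirical Bernstein inequality of \citet{audibert2009exploration} gives $|\bar X_n - \rev(p)| \leq \sqrt{2 \hat V_n \log(3/\delta)/n} + 3 \log(3/\delta)/(n-1)$ with probability $\geq 1-\delta$, where $\bar X_n,\hat V_n$ are the empirical mean and variance. The Maurer--Pontil variance inequality gives $\sqrt{\hat V_n} \leq \sigma(p) + \sqrt{2\log(1/\delta)/(n-1)}$ with probability $\geq 1-\delta$. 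Choosing $\delta = T^{-4}$ so that $\log(1/\delta) \leq 4\log T$ for $T \geq 3$, and union-bounding over the at most $2T$ prices in $\cP$, the $T$ values of $n$, and the two event types, gives total failure probability at most $4\cdot 2T \cdot T \cdot T^{-4} = 8T^{-2}$, matching the lemma.

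On the resulting good event, the first conclusion $|\mu_t(p)-\rev(p)|\leq r_t(p)$ is immediate from empirical Bernstein after substituting $n=n_t(p)$ and absorbing $\log(3/\delta) \leq 5\log T$ into the constants of $r_t(p)$. For the second conclusion, the variance bound together with $\sqrt{a+b}\leq\sqrt{a}+\sqrt{b}$ and $n_t(p)(n_t(p)-1)\geq (n_t(p)-1)^2$ gives
\begin{equation*}
\sqrt{\tfrac{10\, V_t(p)\log T}{n_t(p)}} \;\leq\; \sqrt{\tfrac{10\,\sigma^2(p)\log T}{n_t(p)}} + \sqrt{\tfrac{10\log T}{n_t(p)}}\cdot\sqrt{\tfrac{8\log T}{n_t(p)-1}} \;\leq\; \sqrt{\tfrac{10\,\sigma^2(p)\log T}{n_t(p)}} + \tfrac{9\log T}{n_t(p)-1},
\end{equation*}
using $\sqrt{80} < 9$. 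Adding the $12\log T/(n_t(p)-1)$ tail of $r_t(p)$ then yields a leading constant of $21 \leq 126$, with plenty of slack.

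The only conceptual subtlety is justifying i.i.d.\ concentration despite $n_t(p)$ being random and history-dependent; this is the role of the coupling in the first step, which reduces the problem to uniform-over-$n$ concentration of i.i.d.\ sums. The rest is bookkeeping: translating $\log(c/\delta)$ into $\log T$, counting the union-bound cardinality, and choosing $\delta$ so that the loose $126$ constant in the lemma easily dominates.
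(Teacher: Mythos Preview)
Your approach is essentially the paper's: empirical Bernstein plus the Maurer--Pontil sample-variance inequality for each fixed $(p,n)$, combined via a reward-tape coupling and a union bound. The constant bookkeeping is fine and comfortably loose relative to the stated $126$.

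There is one subtlety you gloss over. You write ``couple each $p \in \cP$ in advance'' and then union-bound over ``the at most $2T$ prices in $\cP$,'' but $\cP$ is \emph{random} and depends on the very reward tapes whose concentration you are asserting; you cannot set up tapes only for $p \in \cP$ before the run, nor union-bound naively over a data-dependent index set. The standard fix---which the paper outsources to Claim~4.13 of \citet{slivkins2019introduction}---is to lay down tapes for \emph{all} possible arms (a countable set, thanks to the dyadic activation rule) and then observe that the activation event $\{p \in \cP\}$ is measurable with respect to the tapes of arms \emph{other} than $p$, since $p$'s tape is never read before $p$ is activated. This makes $\{p \in \cP\}$ independent of the concentration-failure event $C_p$ for $p$, whence
\[
\PP\bigl[\exists\, p \in \cP : C_p\bigr] \;\leq\; \sum_p \PP[p \in \cP]\,\PP[C_p] \;\leq\; \bigl(\sup_p \PP[C_p]\bigr)\cdot \E|\cP|,
\]
and your arithmetic $4T^{-3}\cdot 2T = 8T^{-2}$ goes through. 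Your coupling is the right tool; the missing step is this independence observation.
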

\begin{proof}
For fixed $p \in [0,1]$ and $t \in [T]$, Theorems 10 and 11 of \citet{maurer2009empirical} imply that
\begin{align*}
    |\mu_t(p) - \rev(p)| &\leq r_t(p)\\
    \bigl|\sigma_t(p) - \sqrt{V_t(p)}\bigr| &\leq \sqrt{\frac{11\log T}{n_t(p)-1}}
\end{align*}
with probability $1-8T^{-5}$. We note that similar bounds appear in \cite{audibert2009exploration}, which inspired our adjustments to the confidence intervals. Under this event, we further bound
\begin{align*}
    r_t(p) &= \sqrt{\frac{10V_n(p) \log T}{n_t(p)}} + \frac{12 \log T}{n_t(p)-1}\\
    &\leq \sqrt{\frac{10\sigma^2(p) \log T}{n_t(p)}} + \sqrt{\frac{10\log T}{n_t(p)}}\sqrt{\frac{11\log T}{n_t(p)-1}} + \frac{12 \log T}{n_t(p)-1}\\
    &\leq \sqrt{\frac{10\sigma^2(p) \log T}{n_t(p)}} +  \frac{126 \log T}{n_t(p)-1}.
\end{align*}
Taking a union bound over $t$, the above must hold for all $t \in [T]$ with probability at least $1-8T^{-4}$. Now, the same Chernoff bound argument used in Claim 4.13 of \citet{slivkins2019introduction} implies that $|\mu_t(p) - \rev(p)| \leq r_t(p)$ for all $p \in [0,1]$ and $t \in [T]$ with probability at least $1-8T^{-2}$. One technical observation is that Claim 4.13 requires that the set of arms every played by the algorithm is finite. This holds for \Zooming due to our dyadic arm activation rule.
\end{proof}

\begin{lemma}[Covering invariant]
\label{lem:zooming-invariant}
At the beginning of each round, every price $p \geq 1/T$ is covered by some active arm.
\end{lemma}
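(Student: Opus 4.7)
I would proceed by induction on the round $t$, with the invariant being that every price $p \in [1/T,1]$ is covered by some active arm at the start of round $t$.

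For the base case $t=1$, the initial active set $S = \{2^i/T : i = 0, \dots, \lfloor \log_2 T \rfloor\} \cup \{1\}$ contains both $1/T$ and $1$, so every $p \in [1/T,1]$ has a well-defined largest active price $q_* \leq p$. Since no arm has yet been played, $n_1(q_*)=0$, and the convention stated before the algorithm forces $r_1(q_*)=+\infty$. Hence $p \in [q_*, q_* + r_1(q_*)]$ and $p$ is covered by $q_*$.

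For the inductive step, assume the invariant at the start of round $t$ and examine what changes between the start of round $t$ and the start of round $t+1$. Only the count $n(p_t)$ is incremented, so the only confidence radius that can shrink is $r(p_t)$, and the only covering interval that can possibly be broken is the one anchored at $p_t$. Any newly uncovered price therefore lies in $[p_t, q_{\mathrm{next}})$, where $q_{\mathrm{next}}$ denotes the smallest active price strictly exceeding $p_t$ at the start of round $t$. Reading the algorithm's recovery branch as an iterative repair --- applied while any price in $[p_t, q_{\mathrm{next}})$ remains uncovered --- each iteration inserts the midpoint $m = (p_t+q)/2$ of the current gap $[p_t, q)$, where $q$ is the smallest active price currently exceeding $p_t$ (initially $q = q_{\mathrm{next}}$). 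The key observation is that a freshly inserted $m$ has $n_{t+1}(m)=0$, so by convention $r_{t+1}(m)=+\infty$, and every price in $[m,q)$ is immediately covered by $m$. Any remaining uncovered prices must lie in $[p_t,m)$ with $p > p_t + r_{t+1}(p_t)$, which halves the size of the gap still to be repaired. Because $r_{t+1}(p_t)$ is fixed throughout this loop (we do not re-update $n(p_t)$ within a round) and is either $+\infty$ (when $n_{t+1}(p_t)\leq 1$, in which case nothing was uncovered to begin with) or bounded below by $12\log(T)/(n_{t+1}(p_t)-1) > 0$, the halving terminates after at most $O(\log T)$ steps with every point of $[p_t, q_{\mathrm{next}})$ covered. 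All covers outside this interval are unchanged, so the invariant is restored at the start of round $t+1$.

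The main obstacle I anticipate is the algorithm's literal wording, which appears to insert only a single midpoint per round. A single insertion is not enough if $r_{t+1}(p_t)$ shrinks below $(q_{\mathrm{next}}-p_t)/2$ in one update. The cleanest resolution --- and the one implicit in the standard zooming construction --- is to interpret the conditional as iterating until no uncovered price remains; termination and the invariant then follow automatically because every freshly inserted midpoint carries an infinite confidence radius and strictly halves any remaining uncovered gap.
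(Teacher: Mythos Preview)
Your inductive framework and base case are correct, but the crux of the argument --- the step you flag as the ``main obstacle'' --- is precisely where the paper does the real work, and your proposed workaround changes the algorithm rather than proving the lemma for the algorithm as written. The paper's algorithm inserts a \emph{single} midpoint per round, and the lemma must be proved for that algorithm.

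The missing idea is that a single insertion always suffices because the confidence radius cannot shrink too fast in one update. Concretely, the paper shows that once $r_{t+1}(p_t)$ is finite one has $n_t(p_t) \geq 12$, and then a direct calculation comparing the sample variances (using $V_{t+1}(p_t) \geq \tfrac{n_t(p_t)-1}{n_t(p_t)+1} V_t(p_t) \geq \tfrac{11}{13} V_t(p_t)$) yields $r_{t+1}(p_t) \geq \tfrac{3}{4}\, r_t(p_t)$. Since the covering invariant at round $t$ forces $q_{\mathrm{next}} \leq p_t + r_t(p_t)$, the inserted midpoint $p' = (p_t + q_{\mathrm{next}})/2$ satisfies $p' \leq p_t + \tfrac{1}{2} r_t(p_t) < p_t + r_{t+1}(p_t)$, so $p'$ itself remains covered by $p_t$, while any newly uncovered $p$ satisfies $p > p_t + r_{t+1}(p_t) > p'$ and is therefore covered by $p'$ (whose radius is infinite). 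This quantitative bound on the one-step shrinkage of $r_t$ is the heart of the proof, and it is what your proposal is missing.
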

\begin{proof}
At round 1, $r_t(q) = \infty$ for all active arms $q \in S$, and so all arms larger than $1/T$ are covered by our choice of $S$. Now, suppose that the lemma holds up to round $t$, and that playing $p_t$ causes a price $p \in R$ to become uncovered. Then, by the definition of covering, we must have
\begin{equation*}
    p_t \leq p_t + r_{t+1}(p_t) < p < q \leq p_t + r_t(p_t),
\end{equation*}
where $q$ is the nearest active price to the right of $p_t$, selected at Step~\ref{step:zooming-added-arm}. First, verify that the added price, $p' \defeq (p_t + q)/2$, is less than $p$. Since $r_{t+1}(p_t)$ must be less than one, we must have $n_{t+1}(p) - 1 = n_t(p) \geq 12$. Thus, we can bound
\begin{align*}
    V_{t+1}(p) &= \frac{p^2}{n_{t+1}(p)(n_{t+1}(p)-1)}\sum_{\tau_1 \in \cT_{t+1}(p)}\sum_{\tau_2 \in \cT_{t+1}(p)}(y_{\tau_1} - y_{\tau_2})^2\\
    &\geq \frac{p^2}{(n_{t}(p)+1)n_{t}(p)}\sum_{\tau_1 \in \cT_t(p)}\sum_{\tau_2 \in \cT_t(p)}(y_{\tau_1} - y_{\tau_2})^2\\
    &= \frac{n_{t}(p)-1}{n_{t}(p)+1} V_t(p)\\
    &\geq \frac{11}{13} V_t(p).
\end{align*}
Consequently, one can show that
\begin{align*}
    r_{t+1}(p_t) = \sqrt{\frac{10V_{t+1}(p) \log T}{n_{t}(p)+1}} + \frac{12 \log(T)}{n_{t}(p)} \geq \frac{3}{4} \sqrt{\frac{10V_{t}(p) \log T}{n_{t}(p)}} + \frac{12 \log(T)}{n_{t}(p)-1} = \frac{3}{4}r_t(p_t).
\end{align*}
Combining, we find that $p' = (p_t + q)/2 \leq p_t + \frac{1}{2} r_t(p_t) < p_t + r_{t+1}(p_t) < p$, as desired. Moreover, $p'$ could not have already been active at round $t$; otherwise, $p_t$ would not have been covering $p$. Finally, once $p'$ is added, it covers $p$ since $r_{t+1}(p') = \infty$.
\end{proof}

\begin{lemma}[Gap bound]
\label{lem:gap-bound}
Condition on $\cE_\mathrm{clean}$. Then $\gap(p) \leq 5r_t(p)$ and $n_t(p) \leq \frac{252 \sigma^2(p) \log T}{\gap(p)^2}+ \frac{504 \log T}{\gap(p)}$ for all $p \in [0,1]$ and $t \in [T]$.
\end{lemma}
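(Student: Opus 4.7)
I would work under $\cE_\mathrm{clean}$ throughout and reduce to the nontrivial case $n_t(p) \geq 2$. If $n_t(p) \in \{0,1\}$, then $r_t(p) = \infty$ by convention so the first bound is vacuous, and the second bound is trivial because its right-hand side is at least $504 \log T \geq 1$ for $T \geq 3$ (using $\gap(p) \leq 1$). Let $s < t$ denote the most recent round with $p_s = p$, so that $n_t(p) = n_s(p) + 1$ and $V_t(p) = V_{s+1}(p)$.

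The first inequality follows from a standard UCB-with-covering argument at round $s$. By \cref{lem:zooming-invariant} there is some $q^\star \in S_s$ covering $\br_\star$ (the edge case $\br_\star < 1/T$ gives $\gap(p) \leq 1/T$ immediately, which is dominated by $5r_t(p)$). Combining one-sided Lipschitzness of revenue (\cref{lem:one-sided-lipschitzness}) with the clean-event inequalities $\rev_\star(q^\star) \leq \mu_s(q^\star) + r_s(q^\star)$ and $\mu_s(p) \leq \rev_\star(p) + r_s(p)$, and the fact that $p$ is the $\idx$-maximizer ($\idx_s(p) \geq \idx_s(q^\star)$), yields
\begin{equation*}
    \rev_\star(\br_\star) \leq \idx_s(q^\star) \leq \idx_s(p) = \mu_s(p) + 2 r_s(p) \leq \rev_\star(p) + 3 r_s(p),
\end{equation*}
i.e., $\gap(p) \leq 3 r_s(p)$. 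To pass from $r_s(p)$ to $r_t(p)$, I would use the one-step sample-variance recursion $V_{s+1}(p) = \tfrac{n_s - 1}{n_s} V_s(p) + \tfrac{1}{n_s + 1}(py_s - p\mu_s(p))^2$, which gives $V_s(p) \leq \tfrac{n_s}{n_s - 1} V_t(p)$. Combined with $n_s = n_t - 1$, this shows that for $n_t(p) \geq 4$ each term of $r_s(p)$ is at most $\tfrac{3}{2}$ times the corresponding term of $r_t(p)$, hence $\gap(p) \leq 3 r_s(p) \leq \tfrac{9}{2} r_t(p) \leq 5 r_t(p)$. For the residual $n_t(p) \in \{2,3\}$, the second term of $r_t(p)$ already satisfies $\tfrac{12 \log T}{n_t(p) - 1} \geq 6 \log T \geq 1 \geq \gap(p)$.

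For the count bound, I would not go through $r_t(p)$ but start directly from $\gap(p) \leq 3 r_s(p)$ and substitute the clean-event envelope $r_s(p) \leq \sqrt{10 \sigma^2(p) \log T / n_s(p)} + 126 \log T/(n_s(p) - 1)$, obtaining $\gap(p) \leq A + B$ with $A$ a variance-scaled term and $B$ a $1/(n_s - 1)$ term. Since $\max\{A,B\} \geq \gap(p)/2$, splitting into cases gives either $n_s(p) = O\bigl(\sigma^2(p) \log T / \gap(p)^2\bigr)$ or $n_s(p) = O\bigl(\log T / \gap(p)\bigr)$; summing the two and passing from $n_s(p)$ to $n_t(p) = n_s(p) + 1$ yields the claimed bound, with the specific constants $252$ and $504$ emerging from careful bookkeeping of the factors $3\sqrt{10}$ and $3 \cdot 126$ produced by the substitution.

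The step I expect to be the most delicate is the passage from $r_s(p)$ to $r_t(p)$. Adding a single observation can transiently inflate the empirical variance, so $r_t(p) \leq r_s(p)$ is not true in general; one must invoke the exact sample-variance recursion and the clean-event control of the added fluctuation to argue that the two radii differ by at most a constant factor. Keeping that factor near $\tfrac{3}{2}$ is exactly what pins the multiplier in the first bound at $5$ rather than a larger constant.
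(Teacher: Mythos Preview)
Your approach is essentially the paper's: both apply the covering invariant and optimism at the round $p$ was played to obtain $\gap(p)\le 3r + O(1/T)$ there, control the one–step change in the confidence radius to propagate this to all rounds (the paper uses the $r_{t+1}\ge \tfrac34 r_t$ estimate from the covering-invariant proof where you use the sample-variance recursion; these are equivalent), and then invert the clean-event envelope to get the count bound. The only quibble is that your max-split with factors $3\sqrt{10}$ and $3\cdot 126=378$ does not literally yield the constants $252$ and $504$---it produces larger ones---but the paper is equally non-explicit here, simply asserting that ``rearranging and solving the quadratic inequality gives the stated result,'' so this is shared looseness rather than a gap in your plan.
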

\begin{proof}
Write $\hat{p} = \max\{\br,\frac{1}{T}\}$, and fix any price $p \in [0,1]$. Consider some round $t$ at which $p$ is played. By Lemma~\ref{lem:zooming-invariant}, we know that $\hat{p}$ was covered at the beginning of round $t$ by some price $q \in S$, and that $p$ had a higher index than $q$. Hence,
\begin{align*}
    \idx_t(p) &\geq \idx_t(q) \tag{since $p$ was played}\\
    &= \UCB_t(q) + r_t(q) \tag{by definition of index}\\
    &\geq \mu(q) + r_t(q) \tag{concentration guarantee}\\
    &\geq \rev(\hat{p}). \tag{$\hat{p}$ covered by $q$}
\end{align*}
Moreover, $\idx_t(p) \leq \mu_t(p) + 2r_t(p) \leq \rev (p) + 3 r_t(p)$. Thus, 
\begin{equation*}
    \gap (p) = \rev (p_\star) - \rev(p) \leq \frac{1}{T} + \rev(\hat{p}) - \rev(p) \leq \frac{1}{T} + 3r_t(p).
\end{equation*}
Now, if $n_t(p) \leq 12$, $\gap(p) \leq 1 < r_{t+1}(p)$. Otherwise, the bound above implies that $\gap(p) \leq 5 r_{t+1}(p)$. Since $r_t(p)$ only changes when $p$ is played and $\gap(p) \leq 5r_t(p)$ when $t=1$, this guarantee holds for all $t$. For the other bound, we apply concentration to obtain
\begin{align*}
    \gap(p) \leq 5 \left(\sqrt{\frac{10\,\sigma^2(p) \log T}{n_t(p)}} + \frac{126\log(T)}{n_t(p) - 1}\right).
\end{align*}
Rearranging and solving the quadratic inequality in $n_t(p)$ gives the stated result.
\end{proof}

Compared to the standard gap bound lemma for zooming (see, e.g., Lemma 4.14 of \citealp{slivkins2019introduction}), Lemma~\ref{lem:gap-bound} is adapted to the variance of each price $p$. We can now prove the regret bound.

\begin{lemma}[Active arm separation]
\label{lem:arm-separation}
Conditioned on $\cE_\mathrm{clean}$, consider any three consecutive active arms $x < y < z$ which did not belong to $S$ at initialization. Then $z - x > \frac{1}{10}  \min\bigl\{\gap(x), \gap(y) \bigr\}$.
\end{lemma}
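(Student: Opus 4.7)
The plan is to trace $y$'s midpoint-insertion history and combine it with the confidence-radius and gap bounds from earlier lemmas. Conditioning on $\cE_{\mathrm{clean}}$, since $y$ is non-initial, it was inserted at some round $t_y$ as the midpoint of a played arm $p_{t_y}$ and its then-right neighbor $q_{t_y}$ in $S$. The triggering condition $q_{t_y} - p_{t_y} > r_{t_y+1}(p_{t_y})$, combined with \cref{lem:gap-bound} applied at round $t_y+1$ (giving $r_{t_y+1}(p_{t_y}) \geq \gap(p_{t_y})/5$), directly yields
\begin{equation*}
    y - p_{t_y} \;=\; q_{t_y} - y \;=\; (q_{t_y}-p_{t_y})/2 \;>\; \gap(p_{t_y})/10.
\end{equation*}

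Next I would localize $x$ and $z$. Since active arms are only added and never removed, $x \in [p_{t_y}, y)$ and $z \in (y, q_{t_y}]$. A short adjacency argument (using that the played arm and its right neighbor are adjacent at each insertion) shows that either $x = p_{t_y}$, or $x$ was inserted at some round $t_x > t_y$ as midpoint of $(p_{t_x}, y)$ for some played arm $p_{t_x}$ (in which case the analogue of the argument above gives $y - x > \gap(p_{t_x})/10$); symmetrically, either $z = q_{t_y}$, or $z$ was inserted at round $t_z > t_y$ as midpoint of $(y, q_{t_z})$ with $y$ itself the played arm, giving $z - y > \gap(y)/10$. I would then case analyze on which of $x = p_{t_y}$ and $z = q_{t_y}$ hold. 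In each case, $z - x = (y-x) + (z-y)$ is directly lower-bounded by one of $\gap(x)/10$ or $\gap(y)/10$ via one sub-interval contribution, which handles whichever of these equals $\min\{\gap(x),\gap(y)\}$. When the endpoint we need is covered only through a parent gap $\gap(p_{t_x})$ or $\gap(p_{t_y})$, one transfers via one-sided Lipschitzness (\cref{lem:one-sided-lipschitzness}) in the form $\gap(p) \leq \gap(p') + (p'-p)$ for $p \leq p'$, with the distance being itself bounded by $z - x$.

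The hard part is the residual sub-case where $x$ is inserted after $y$ and $z = q_{t_y}$ (or its mirror): one-sided Lipschitzness bounds the parent's gap above by the child's gap plus distance, but not vice versa, so when the revenue function drops across a buyer type inside the parent interval, a child gap can strictly exceed the parent's. The argument must then leverage the complementary sub-interval's contribution, splitting on whether $\gap(p_{t_x}) \geq \gap(x)$ (resp. $\gap(p_{t_y}) \geq \gap(y)$), and exploiting the exact factor of $5$ in \cref{lem:gap-bound} to preserve the $1/10$ constant with essentially no slack.
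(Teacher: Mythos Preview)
The paper's proof is structurally simpler than yours: it cases only on whether $y$ was activated before or after $z$. If $y$ precedes $z$, then (since $y,z$ are consecutive at the end and arms are never removed) $y$ is already $z$'s left neighbor at the moment $z$ is inserted, so $z$ is the midpoint of $y$ and $2z-y$; the uncovering condition together with \cref{lem:gap-bound} gives $2(z-y) > r_{\tau_z+1}(y) \geq \gap(y)/5$, whence $z-x > z-y > \gap(y)/10$ regardless of $x$. If $z$ precedes $y$, the paper asserts that $y$ is inserted as the midpoint of $x$ and $z$, yielding $z-x > \gap(x)/5$ directly. The key simplification over your plan is that, in the first case, tracing $z$'s insertion (with $y$ as the played arm) produces $\gap(y)$ immediately, so no parent-to-child gap transfer is ever attempted.

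Your ``hard part'' is not resolved. In your case (c) you only have $y-x > \gap(p_{t_x})/10$ and $z-y > \gap(p_{t_y})/10$, and you propose transferring via $\gap(p) \leq \gap(p') + (p'-p)$ for $p \leq p'$. But that inequality upper-bounds $\gap(p_{t_x})$ by $\gap(x)+(x-p_{t_x})$; it gives no lower bound on $\gap(p_{t_x})$ in terms of $\gap(x)$. If a buyer type sits in $(p_{t_x}, x]$ (or in $(p_{t_y}, y]$), the parent's gap can be arbitrarily smaller than the child's, and nothing in the confidence-radius arithmetic---including the exact factor~$5$---compensates: one can have $\gap(p_{t_x})$ and $\gap(p_{t_y})$ both near zero while $\gap(x)$ and $\gap(y)$ are bounded away from zero, so summing the two sub-interval contributions still falls short of $\min\{\gap(x),\gap(y)\}/10$. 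This activation order $(z,y,x)$ is precisely what the paper's second case elides by taking $y$'s left neighbor at insertion to be $x$; your instinct that something is delicate there is sound, but the Lipschitz route you outline will not close it.
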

\begin{proof}
If $y$ was activated before $z$, then $z$ must have been added as the midpoint of active arms $y$ and $y + 2(z-y) = 2z-y$ at round $\tau_z$, when $y$ must have not covered $2z-y$. Thus, by Lemma~\ref{lem:gap-bound}, we would have $2(z-x) > 2(z-y) = (2z-y)-y > \frac{1}{5}\gap(y)$. On the other hand, if $y$ was activated after $z$, then it must have been added as the midpoint of $x$ and $z$ at round $\tau_y$, when $x$ must have not covered $z$. Again, by Lemma~\ref{lem:gap-bound}, this would imply $z-x > \frac{1}{5}\Delta(y)$.
\end{proof}

\begin{altproof}{Proof of Lemma~\ref{lem:regret-zooming}}
We freely condition on $\cE_\mathrm{clean}$, since the complement has negligible probability $O(T^{-2})$. For each $\delta > 0$, let $Y_\delta \subseteq X_{2\delta}$ denote the set of activated prices $p$ with $\gap(p) \in [\delta,2\delta)$. In what follows, we say that two prices are adjacent if they are neighboring within $Y_\delta$. Note that at most $O(\log T)$ of the prices in $Y_\delta$ were activated at initialization. Consider the set $Y^0_\delta$ which, for each such price, contains this price and up to two neighboring prices, such that the remaining prices $Y_\delta \setminus Y_\delta^0$ can be split into triples of neighboring prices. We then decompose
\begin{equation*}
    Y_\delta \setminus Y_\delta^0 = \{ p_{1,1} < p_{1,2} < p_{1,3} < p_{2,1} < p_{2,2} < p_{2,3} < \dots < p_{n,1} < p_{n,2} < p_{n,3} \},
\end{equation*}
where $p_{i,1}$, $p_{i,2}$, $p_{i,3}$ are neighboring for each $i \in [n]$. By Lemma~\ref{lem:arm-separation}, we have $p_{i,3} - p_{i,1} > \delta/10$ for all $i \in [n]$, and so $p_{i,k} - p_{j,k} > \delta/10$ for all $k\in [3]$ whenever $i < j-1$. Thus, we can partition $Y_\delta \setminus Y_\delta^0$ into at most $6$ packings, each of which has separation at least $\delta/10$. Of course any $(\delta/10)$-packing of $Y_\delta \subseteq X_{2\delta}$ is contained within a $(\delta/5)$-cover of $X_{2\delta}$. Consequently, we have
\begin{equation*}
    \sum_{p \in Y_\delta} \sigma^2(p) \leq 6 N_\mathrm{var}(X_{2\delta},\delta/5) + O(\log T) \leq 6 c \delta^{-z} + O(\log T).
\end{equation*}
Noting that a $(\delta/10)$-packing within $[0,1]$ can have cardinality at most $10\delta^{-1}$, we further bound $|Y_\delta| = O(\log T + \delta^{-1})$. Thus, by Lemma~\ref{lem:gap-bound}, the regret incurred due to posting prices in $Y_\delta$ is at most
\begin{align*}
    2\delta \cdot \sum_{p \in Y_\delta} O\left(\sigma^2(p) \log(T) \delta^{-2} + \log(T) \delta^{-1}\right) &= O\left( \log(T) \delta^{-1} \sum_{p \in Y_\delta}\sigma^2(p)  + \log(T) |Y_\delta|\right)\\
    &= O\Bigl( \log(T) \left(c\delta^{-1-z} + \log(T)\delta^{-1} \right)  + \log(T) \left(\log T + \delta^{-1}\right)\Bigr)\\
    &= O\bigl( c\log(T)\delta^{-1-z} + \log^2(T) \delta^{-1}\bigr)
\end{align*}
Now we sum over $\delta = 1/2, 1/4, \dots, \alpha$, where $\alpha$ will be tuned later, giving a total regret bound of
\begin{align*}
    \sum_{j=1}^{\log(1/\alpha)} \left[c\log(T)2^{j(1+z)} + \log^2(T) 2^j \right] + \alpha T &= O\left(c \log(T) \alpha^{-(1+z)} + \log^2(T) \alpha^{-1} + \alpha T\right). 
\end{align*}
Taking $\alpha = (c\log(T)/T)^{1/(2+z)}$, we obtain the desired bound of $\widetilde{O}\left(c^{1/(2+z)} T^{1-1/(2+z)}\right)$.
\end{altproof}

\section{Proofs for Section~\ref{sec:extras}}\label{app:extras}

We first recall some results from Vapnik–Chervonenkis (VC) theory. Given distributions $D,D'$ over a finite domain $\cX$, define the total variation distance $\|D - D'\|_\TV \defeq \sup_{A \subseteq \cX}|D(A) - D'(A)|$.

\begin{lemma}[Section 28.1 of \citealp{shalev2014understanding}]
\label{lem:vc}
Fix a finite set $\cX$, a function family $\cF \subseteq \{0,1\}^\cX$, a distribution $D \!\in\! \Delta(\cX)$, and $\delta \!\in\! (0,1)$. Then, for $X_1, \dots, X_n$ sampled i.i.d.\ from $D$, we have
\begin{equation*}
    \sup_{f \in \cF} \,\left|\E_{x \sim D}[f(x)] - \frac{1}{n}\sum_{i=1}^n f(X_i)\right| = O\left(\sqrt{\frac{V \log(n/V) + \log(1/\delta)}{n}}\right)
\end{equation*}
with probability at least $1 - \delta$, where $V$ is the VC dimension of $\cF$.
\end{lemma}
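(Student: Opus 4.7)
My plan is to follow the classical uniform-convergence argument: symmetrization via a ghost sample, Rademacher symmetrization, the Sauer--Shelah lemma to bound the effective size of $\cF$ on a finite sample, and Hoeffding plus a union bound. The key observation is that, while $\cF$ may be infinite, its restriction $\cF|_S$ to any $m$-point set $S$ has cardinality at most the growth function $\Pi_\cF(m) \leq (em/V)^V$ once $m \geq V$, so we can reduce an uncountable supremum to a finite union bound.

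The first step is symmetrization by a ghost sample. Draw $X_1', \dots, X_n' \sim D$ i.i.d.\ independently of the original sample, and write $\hat{P}_n f = \tfrac{1}{n}\sum_{i=1}^n f(X_i)$ and $\hat{P}_n' f = \tfrac{1}{n}\sum_{i=1}^n f(X_i')$. On the event $\{\exists f^\star \in \cF: |\hat{P}_n f^\star - \E f^\star| > t\}$, one picks such an $f^\star$ measurably from the original sample; then Chebyshev applied to the ghost sample (using independence of the ghost sample from $f^\star$) yields $|\hat{P}_n' f^\star - \E f^\star| \leq t/2$ with probability at least $1/2$, provided $nt^2$ exceeds a universal constant. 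This gives
\[ \PP\bigl(\sup_{f \in \cF} |\hat{P}_n f - \E f| > t\bigr) \leq 2\,\PP\bigl(\sup_{f \in \cF} |\hat{P}_n f - \hat{P}_n' f| > t/2\bigr). \]
Exchangeability of the pairs $(X_i, X_i')$ then lets me insert i.i.d.\ Rademacher signs $\eps_i \in \{\pm 1\}$ without changing the joint law, so the right-hand side equals the same probability with $\hat{P}_n f - \hat{P}_n' f$ replaced by the symmetrized quantity $\tfrac{1}{n}\sum_i \eps_i(f(X_i) - f(X_i'))$.

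The second step conditions on the pooled $2n$-point sample $Z = (X_1, \dots, X_n, X_1', \dots, X_n')$. The trace $\cF|_Z$ has at most $\Pi_\cF(2n) \leq (2en/V)^V$ distinct elements by Sauer--Shelah. For each fixed $f$ in this finite trace, $\tfrac{1}{n}\sum_i \eps_i(f(X_i) - f(X_i'))$ is an average of independent mean-zero terms bounded in $[-1,1]$, so Hoeffding gives a conditional tail bound of $2\exp(-nt^2/8)$. A union bound over $\cF|_Z$ followed by integration over $Z$ then yields
\[ \PP\bigl(\sup_{f \in \cF} |\hat{P}_n f - \E f| > t\bigr) \leq 4\left(\tfrac{2en}{V}\right)^V \exp(-nt^2/8). \]
Setting this upper bound equal to $\delta$ and solving for $t$ recovers the stated rate $t = O\bigl(\sqrt{(V\log(n/V) + \log(1/\delta))/n}\bigr)$.

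The main delicate point is the symmetrization step: the ghost-sample reduction needs a mild lower bound on $nt^2$ for Chebyshev on the ghost sample to deliver $1/2$-probability control, and one has to confirm that the ``bad'' function $f^\star$ is picked measurably from the original sample before invoking independence of the ghost sample. Both the $V \geq n$ corner case and the small-$nt^2$ regime are vacuous because $|\hat{P}_n f - \E f| \leq 1$ always and the claimed bound is then $\Omega(1)$. Everything downstream of symmetrization---Sauer--Shelah and Hoeffding-plus-union-bound over the finite trace---is essentially mechanical and introduces no further obstacles.
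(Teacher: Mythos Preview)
Your proof sketch is correct and follows the classical Vapnik--Chervonenkis argument: ghost-sample symmetrization, Rademacher signs, Sauer--Shelah to bound the trace, then Hoeffding with a union bound. The minor caveats you flag (measurable selection of $f^\star$, the $nt^2$ threshold for the Chebyshev step, the vacuous regimes) are handled correctly.

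The paper, however, does not prove this statement at all: it is quoted as a known result from \cite{shalev2014understanding}, Section 28.1, with no argument given. So there is no ``paper's own proof'' to compare against. For what it is worth, the cited section of that textbook proceeds via Rademacher complexity (McDiarmid to concentrate the supremum around its expectation, then Massart's finite-class lemma combined with Sauer--Shelah to bound the expected Rademacher complexity), rather than the direct symmetrization-plus-union-bound route you take. Both arguments are standard and yield the same rate; yours is slightly more elementary in that it avoids the Rademacher-complexity machinery, while the textbook's route separates concentration from the complexity bound more cleanly and generalizes more readily to real-valued classes.
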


\begin{lemma}[Theorem 9.3 of \citealp{shalev2014understanding}]
\label{lem:vc-cor}
The family $\cF = \{0,1\}^\cX$ has VC dimension $|\cX|$, and the family of linear thresholds over $\R^d$ has VC dimension $d+1$. The former result implies that, under the setting above, we have
\begin{equation*}
    \|D - \hat{D}_n\|_\TV = O\left(\sqrt{\frac{|\cX| \log(n/|\cX|) + \log(1/\delta)}{n}}\right)
\end{equation*}
with probability at least $1 - \delta$.
\end{lemma}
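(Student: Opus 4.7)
The plan is to verify in turn the three classical facts the lemma bundles together.

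First, the VC-dimension claim for $\cF = \{0,1\}^\cX$ is essentially tautological. By definition $\cF$ contains every binary labeling of $\cX$, so $\cX$ itself is shattered, giving $\mathrm{VC}(\cF) \geq |\cX|$; the reverse bound is trivial since no shattered set can exceed $|\cX|$ in size.

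Second, for linear thresholds over $\R^d$ (functions $x \mapsto \1\{\langle w,x\rangle \geq b\}$ with $(w,b) \in \R^d \times \R$), I would prove $\mathrm{VC} = d+1$ by matching bounds. For the lower bound, I would exhibit $d+1$ affinely independent points, e.g.\ $\{0, e_1, \dots, e_d\}$, and argue that every one of the $2^{d+1}$ labelings is realizable: pick the sign of the $i$th coordinate of $w$ to match the desired label on $e_i$, then choose $b$ slightly positive or negative to handle the origin. For the upper bound, I would invoke Radon's theorem to partition any $d+2$ points into two nonempty subsets whose convex hulls intersect, and observe that no affine halfspace can strictly separate such a pair; hence no $d+2$-point set is shattered. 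The only mildly delicate step is Radon itself; an equivalent linear-algebra route notes that the map $(w,b) \mapsto (\langle w, x_i\rangle - b)_{i=1}^{d+2}$ from $\R^{d+1}$ to $\R^{d+2}$ is not surjective, so some sign pattern on its coordinates is unrealizable.

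Third, for the total-variation concentration bound, I would simply instantiate Lemma~\ref{lem:vc} with $\cF = \{0,1\}^\cX$, whose VC dimension is $|\cX|$ by the first part. The bijection $A \leftrightarrow \1_A$ between subsets of $\cX$ and elements of $\cF$ turns $\E_{x \sim D}[\1_A(x)]$ into $D(A)$ and $\frac{1}{n}\sum_i \1_A(X_i)$ into $\hat D_n(A)$, so the supremum over $\cF$ in Lemma~\ref{lem:vc} coincides with $\sup_{A \subseteq \cX}|D(A) - \hat D_n(A)| = \|D - \hat D_n\|_\TV$. Substituting $V = |\cX|$ into the rate of Lemma~\ref{lem:vc} yields the advertised bound; this step is just an unpacking of notation and presents no real obstacle.
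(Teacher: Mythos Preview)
Your proposal is correct. The paper does not actually supply a proof of this lemma: it is stated as a citation to Theorem~9.3 of \citet{shalev2014understanding} together with an immediate application of the preceding Lemma~\ref{lem:vc}, and the paper moves on without further argument. Your sketch---the tautological VC bound for $\{0,1\}^\cX$, the standard Radon-theorem argument for halfspaces, and the instantiation of Lemma~\ref{lem:vc} with $V=|\cX|$ via the identification $A \leftrightarrow \1_A$---is exactly the textbook route the citation points to, so there is nothing to compare beyond noting that you have filled in what the paper leaves implicit.
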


We also recall Bernstein's inequality for the case of i.i.d.\ Bernoulli random variables.

\begin{lemma}[Theorem 2.10 of \citealp{boucheron2013concentration}]
\label{lem:bernsteins}
For i.i.d.\ $X_1, \dots, X_n \sim \Ber(p)$ and $\delta > 0$,
\begin{equation*}
    \sum_{i=1}^n X_i \leq pn + \sqrt{2np\log(1/\delta)} + \log(1/\delta)/3
\end{equation*}
with probability at least $1-\delta$.
\end{lemma}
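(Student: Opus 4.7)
This is the standard Bernstein tail bound specialized to sums of i.i.d.\ Bernoulli variables, and I would prove it via the Cram\'er--Chernoff method. First, set $S_n = \sum_{i=1}^n X_i$ and apply Markov's inequality to $e^{\lambda(S_n - np)}$ for arbitrary $\lambda > 0$. Using independence and the Bernoulli moment generating function, this gives
\begin{equation*}
    \PP(S_n \geq np + t) \;\leq\; e^{-\lambda t} \bigl((1-p)\,e^{-\lambda p} + p\,e^{\lambda(1-p)}\bigr)^n
\end{equation*}
for every $t > 0$, so the task reduces to optimizing the right-hand side over $\lambda$.

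Second, I would bound the per-coordinate log-MGF using Bennett's pointwise inequality $e^{\lambda y} \leq 1 + \lambda y + y^2(e^\lambda - 1 - \lambda)$, which holds for every $y \leq 1$ and thus applies to the centered variables $Y_i = X_i - p \in [-p, 1-p]$. Taking expectations yields $\log \E[e^{\lambda Y_i}] \leq p(1-p)(e^\lambda - 1 - \lambda) \leq p(e^\lambda - 1 - \lambda)$, so after tensoring across the $n$ coordinates and combining with Step one,
\begin{equation*}
    \PP(S_n - np \geq t) \;\leq\; \exp\bigl(np(e^\lambda - 1 - \lambda) - \lambda t\bigr).
\end{equation*}
Optimizing over $\lambda$ at $\lambda^\star = \log(1 + t/(np))$ produces Bennett's tail bound $\exp\bigl(-np\,h(t/(np))\bigr)$ with $h(u) \defeq (1+u)\log(1+u) - u$, and the elementary relaxation $h(u) \geq u^2/(2 + 2u/3)$ yields the Bernstein bound $\exp\!\bigl(-t^2/(2np + 2t/3)\bigr)$.

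The final step is to invert this exponential tail bound and solve for the smallest $t$ making it at most $\delta$. Setting the exponent equal to $-\log(1/\delta)$ produces the quadratic $t^2 = 2\log(1/\delta)(np + t/3)$, whose positive root can be upper bounded by $\sqrt{2np\log(1/\delta)} + \log(1/\delta)/3$ via $\sqrt{a+b} \leq \sqrt{a} + \sqrt{b}$ combined with a careful accounting of the linear term. The main obstacle is this inversion: the $\log(1/\delta)/3$ constant stated in the lemma is slightly sharper than what a naive application of the Bernstein relaxation produces, so matching it precisely requires either going back to Bennett before inverting or handling the quadratic exactly, as is done in Theorem~2.10 of Boucheron--Lugosi--Massart. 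Since this lemma is only used as a black box later in the paper, I would cite that reference for the final bookkeeping rather than reprove it.
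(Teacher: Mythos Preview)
Your proof sketch is correct and follows the standard Cram\'er--Chernoff route to Bernstein's inequality. The paper itself does not prove this lemma at all; it simply states it with a citation to Theorem~2.10 of Boucheron--Lugosi--Massart and uses it as a black box, which is exactly what you conclude you would do after the inversion bookkeeping.
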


We now turn to the main proofs.

\subsection{Observed Type Identifiers (Proof of Theorem~\ref{thm:unknloc_unknmas})}
\label{prf:unknloc_unknmas}

To state our result for the first setting, we introduce an $\eps$-ball performance metric for contextual search. This is a slight strengthening of the standard $\eps$-ball metric $\sum_{t=1}^T \mathds{1}\{ |p_t - v_t| > \eps \}$.

\begin{definition}[Strong $\eps$-ball regret]
Let $\cA$ be a contextual pricing policy which, at round $t \in [T]$ with context $u_t$, outputs a price $p_t \in [0,1]$ along with a  confidence width $w_t \in [0,1]$. For $\eps \in (0,1]$, we say that $\cA$ achieves strong $\eps$-ball regret $R(T)$ for contextual search if, when $K_\star = 1$ and $D_\star = \delta_{\theta_\star}$, we have $|p_t - v_t| \leq w_t$ for each round $t$ and $\sum_{t=1}^T \mathds{1}\{w_t > \eps\} \leq R(T)$, where $v_t = \langle u_t, \theta_\star \rangle$.
\end{definition}

That is, $\cA$ produces $\eps$-accurate estimates for the true values, outside of up to $R(T)$ rounds for exploration, \emph{and} it can identify when these estimates are accurate. In practice, this tends to require that $\cA$ maintain a confidence set around $\theta_\star$ whose width, when projected onto the current context, is greater than $\eps$ for at most $R(T)$ rounds. Fortunately, there are existing efficient algorithms which achieve low $\eps$-ball regret.

\begin{lemma}[\citealp{lobel18contextual}]
\label{lem:eps-ball-regret}
For $\eps \in (0,1]$, there exists a contextual search algorithm $\ProjectedVolume(\eps)$, based on the ellipsoid method, with strong $\eps$-ball regret $O(d \log (d/\eps))$ and running time $\poly(d,1/\eps)$ per round.\footnote{Although \cite{lobel18contextual} state a slightly weaker guarantee, instead bounding $\sum_{t=1}^T \mathds{1}\{|p_t - v_t| > \eps\} = O(d\log(d/\eps))$, this strengthened result is immediate from their proof.}
\end{lemma}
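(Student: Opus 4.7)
The plan is to analyze Algorithm~\ref{alg:ident} as an adaptive explore/exploit scheme that lazily maintains a \ProjectedVolume$(\eps)$ instance $\mathcal{P}_i$ for each newly discovered type $i$, together with observation counts $n_i$ and empirical frequencies $\hat w_i = n_i/t$. At round $t$, after observing $u_t$, the algorithm flags any observed type with $\hat w_i$ above a threshold $w_{\min}$ whose current confidence width from $\mathcal{P}_i$ evaluated at $u_t$ exceeds $\eps$. If any such ``heavy but under-explored'' type exists, it plays the price suggested by that $\mathcal{P}_i$ (explore); otherwise, it plays the conservative best response $\max\{\br_{\hat D_t}(u_t) - O(\eps), 0\}$ of \cref{lem:levy-metric-bd} for the plug-in mixture $\hat D_t$ formed from $(\hat w_i)$ and the tight type estimates (exploit). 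After observing $(y_t, z_t)$, it updates $n_{z_t}$ and feeds the round's data to $\mathcal{P}_{z_t}$. Because new types are added lazily, no a priori knowledge of $K_\star$ is required.

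The regret decomposes into four pieces. Via \cref{lem:levy-metric-bd}, the per-round revenue shortfall during exploitation is at most three times the projected L\'evy distance between $D_\star$ and $\hat D_t$, which I bound by summing: (i) $\eps$, from the tight-CI widths guaranteed by \cref{lem:eps-ball-regret}; (ii) $\tilde O(\sqrt{K_\star/t})$, the TV error of the empirical weights on a size-$K_\star$ support via \cref{lem:vc-cor}; and (iii) $K_\star w_{\min}$, an omission error from light types whose embeddings we never pin down. Summed over $T$ rounds this gives $\tilde O(\eps T + \sqrt{K_\star T} + K_\star w_{\min} T)$, while each exploration round contributes only trivial $O(1)$ regret.

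The main technical obstacle is bounding the number of exploration rounds. When we explore for type $i$ by posting $\mathcal{P}_i$'s suggested price, the resulting query is only ``useful'' --- i.e., counts against the $\tilde O(d)$ wide-CI budget supplied by the strong $\eps$-ball guarantee of \cref{lem:eps-ball-regret} --- when $z_t = i$, which happens only with probability $w_i$. A Chernoff/martingale argument shows that after $\tilde O(d/w_i)$ exploration rounds targeted at $i$, $\mathcal{P}_i$ has absorbed enough useful queries to drive all subsequent widths below $\eps$ with high probability; summing over heavy types (those with $w_i \geq w_{\min}$) yields total exploration cost $\tilde O(K_\star d / w_{\min})$. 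Setting $w_{\min} = \sqrt{d/T}$ and $\eps = 1/\sqrt{T}$ then balances the two $w_{\min}$-dependent terms at $\tilde O(K_\star \sqrt{dT})$, which dominates the remaining contributions; a final union bound absorbs types not yet observed, which contribute only lower-order $\tilde O(K_\star \log T)$ terms (any type of mass $\gg (\log T)/t$ is observed with high probability by round $t$). Computational efficiency follows from the $\poly(d, 1/\eps) = \poly(d, T)$ cost of each \ProjectedVolume update guaranteed by \cref{lem:eps-ball-regret}, evaluating the conservative best response on a piecewise-constant revenue function with at most $K_\star + 1$ pieces, and maintaining at most $K_\star$ instances, yielding $\poly(K_\star, d, T)$ per-round time.
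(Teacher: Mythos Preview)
Your proposal does not address the stated lemma at all. \Cref{lem:eps-ball-regret} is a cited result from \cite{lobel18contextual}: it asserts the existence and guarantees of the \ProjectedVolume subroutine, and the paper does not prove it---it simply imports it (with the footnote noting that the ``strong'' variant is immediate from the original proof). A proof of this lemma would need to argue about the ellipsoid-style confidence-set shrinkage in \ProjectedVolume itself, showing that the projected width along any context exceeds $\eps$ on at most $O(d\log(d/\eps))$ rounds.

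What you have written is instead a proof sketch for \cref{thm:unknloc_unknmas}, the $\widetilde O(K_\star\sqrt{dT})$ regret bound for \cref{alg:ident}; indeed you repeatedly invoke \cref{lem:eps-ball-regret} as a black box. If that was the intended target, your approach is broadly in the same spirit as the paper's proof but differs in the exploration-control mechanism. The paper caps exploration per type by a hard counter $m_t(i) < \eps T$ and then argues that any type hitting this cap must have mass $\widetilde O(d/(\eps T))$ (via Bernstein, since $\eps T$ targeted rounds yielded only $O(d\log(d/\eps))$ useful hits); summing gives $D_\star(\cS') = \widetilde O(K_\star\sqrt{d/T})$, which feeds directly into the exploitation bound. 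You instead gate exploration on an empirical-frequency threshold $w_{\min}$ and bound exploration rounds by $\widetilde O(d/w_i)$ per heavy type. Both routes land at the same rate after tuning, though the paper's hard cap is slightly cleaner because the exploration budget is deterministic and does not require the ``after $\widetilde O(d/w_i)$ rounds all widths are below $\eps$'' step---which, as stated, is not quite right: the strong $\eps$-ball guarantee bounds the number of \emph{useful} rounds with wide width, not the number of rounds until all future widths are narrow, so you would need to be careful that new contexts cannot indefinitely resurrect wide widths after the budget is exhausted.
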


We now present our algorithm (Algorithm~\ref{alg:ident}), which uses \ProjectedVolume as a subroutine.

\paragraph{Overview of Algorithm~\ref{alg:ident}}
We maintain a set $\cI$ of observed types, initially empty, and an accuracy $\eps$ (tuned to minimize regret). We will initialize an independent copy $\cA_i$ of \ProjectedVolume for each $i$ added to $\cI$. Since these copies are simulated, we are free to query the price $\mathsf{price}(\cA_i,u_t)$ and confidence width $\mathsf{width}(\cA_i,u_t)$ for a context $u_t$ without updating $\cA_i$. Moreover, for each $i \in \cI$, the algorithm maintains a frequency count $n_t(i)$, recording the number of rounds which we have followed the recommended price of $\cA_i$, along with an exploration count $m_t(i)$, recording the number of rounds which we have played the price of $\cA_i$ due to its lack of confidence along the current context. At each round $t$, we perform the following:
\begin{itemize}
    \item If there is an observed type $i \in \cI$ such that $\mathsf{width}(\cA_i,u_t) > \eps$ and that its number of exploration plays $m_t(i)$ is below a threshold of $\eps T/K_\star$, the algorithm plays $\mathsf{price}(\cA_i,u_t)$, observes the outcome and the realized type $z_t$, and updates $\cA_i$ if $z_t = i$. In addition, we increment $n_t(i)$ and $m_t(i)$.
    \item Otherwise, it defines active set $\cS=\{i\in\cI: \mathsf{width}(\cA_i,u_t)\le \eps\}$, computes for each $i \in S$ the score
    \begin{equation*}
    F(i)=\sum_{j\in\cS} \frac{n_t(j)}{t-1}\,\mathds{1}\Bigl\{\mathsf{price}(\cA_j,u_t)\ge \mathsf{price}(\cA_i,u_t)\Bigr\},
    \end{equation*}
    and plays
    $p_t=\max\{\mathsf{price}(\cA_{i^*},u_t) - \eps,0\}$ where $i^*\in\argmax_{i\in\cS} \bigl\{F(i)\cdot \mathsf{price}(\cA_i,u_t)\bigr\}$. It then observes $y_t$ and $z_t$ and updates the frequency count $n_{t+1}(z_t)$. Here, $F$ is an estimate for the demand at the price suggested by $\cA_i$, and so $i^\star$ is an estimate for the revenue maximizing type. We pull back the price recommended by $\cA_{i^\star}$ by $\eps$ to avoid issues due to estimation error. 
\end{itemize}

\begin{algorithm2e}[t]
\SetAlgoNoEnd
    \caption{Contextual Pricing with Ex-Post Type Identification}
    \label{alg:ident}
    \textbf{initialize}: observed types $\mathcal{I} = \varnothing$, $\eps = \sqrt{d \log(T)/T}$\;
    \For{each round $t \in [T]$}{
        observe context $u_t$\;
        \If{exists $i \in \cI$ such that $\mathsf{width}(\cA_i, u_t) > \eps$ and $m_t(i) < \eps T$ \label{step:ident-check}}{
                play $p_t = \mathsf{price}(\cA_i, u_t)$ and observe $y_t$\;
                observe type $z_t \in [K_\star]$\;
                update algorithm $\cA_i$ with $y_t$ if $z_t = i$\;
                increment $m_t(i)$ by 1\;
        } 
        \Else{
            let $\cS = \{ i \in \mathcal{I}: \mathsf{width}(\cA_i, u_t) \leq \eps \}$\;
            define $F (i) = \sum_{j \in \cS} \frac{n_t(j)}{t-1} \cdot \mathds{1} \{ \mathsf{price}(\cA_j, u_t) \geq \mathsf{price}(\cA_i, u_t) \}$ for each $i \in \cS$\;
            set $i^* = \argmax_{i \in \cS} F(i) \cdot \mathsf{price}(\cA_i, u_t)$\;
            play $p_t = \max\{\mathsf{price}(\cA_{i^*},u_t) - \eps,0\}$ and observe $y_t$\;
            observe type $z_t \in [K_\star]$\;
        }
        increment $n_t(z_t)$ by 1\;
        \If{$z_t \not\in \cI$}{
            initialize copy $\cA_{z_t}$ of $\ProjectedVolume(\eps)$ and set $\cI \gets \cI \cup \{ z_t \}$\;
        }
    }
\end{algorithm2e}

\xhdr{Bounding exploration regret.} Write $\cT_1$ for the set of exploration rounds. By design, an exploration round is one in which some type \(i\) is used with \(\mathsf{width}(\cA_i,u_t)>\eps\) and exploration counter satisfying $m_t(i)<\eps T$. Trivially, $|\cT_1| \le \eps T$, so we can incur regret at most $\eps T = \widetilde{O}(\sqrt{dT})$ during exploration.\smallskip

\xhdr{Bounding mass of types which saturate exploration threshold.} Next, consider any type \(i\) that has been explored sufficiently so that \(m_T(i)=\eps T\) after time $T$; denote by \(\cS'\) the set of all such types. We will show that $D_\star$ plays small mass on $\cS'$. Fix $i \in \cS'$ and write $\cT_{1,i}$ for the exploration rounds where we follow $\cA_i$. Conditioned on $\cT_{1,i}$, we note that $X_t = \mathds{1}\{z_t = i\}$, $t \in \cT_{1,i}$, are i.i.d.\ Bernoulli random variables with \(\Pr(X_t=1)=D_\star(i)\). Defining
\[
S_i=\sum_{\tau \in \cT_{1,i}}X_\tau,
\]
our guarantee for \ProjectedVolume (Lemma~\ref{lem:eps-ball-regret}) and the width condition for exploration imply that $S_i = O(d\log(d/\eps))$. On the other hand, by Bernstein's inequality (Lemma~\ref{lem:bernsteins}), we have
\[
S_i\ge m_T(i)D_\star(i)-\sqrt{2\,m_T(i)D_\star(i)\,\log(T)}-\frac{1}{3}\log (T).
\]
with probability at least $1 - 1/T$. Since \(m_T(i)=\eps T = \sqrt{d T \log(T)}\), the dominant term is \(m_T(i)D_\star(i)\) for $T$ greater than a sufficiently large constant. Thus, we deduce that
\[
\eps T\,D_\star(i)\le O\left(d\log\frac{d}{\eps}\right),
\]
or equivalently,
\[
D_\star(i)\le O\left(\frac{ d\log(d/\eps)}{\eps T}\right).
\]
Summing over all types in \(\cS'\) (of which there are at most \(K_\star\)) and taking a union bound, the total mass in \(\cS'\) is at most
\begin{equation}
\label{eq:rare-type-bd}
D_\star(\cS') = \sum_{i\in \cS'}D_\star(i)\le O\left(\frac{K_\star\, d\log(d/\eps)}{\eps T}\right) = \widetilde{O}\left(K_\star \sqrt{d/T}\right)
\end{equation}
with probability at least $1 - K_\star/T$. We condition on this bound holding for the remainder of the proof, since doing so contributes a negligible $K_\star$ to the regret. We also condition on the event that, for each round $t \in [T]$ the empirical frequencies of (all) types deviate from their true masses by at most $O\bigl(\sqrt{K_\star\log(T)/t}\bigr)$ in total variation. This is permissible by Lemma~\ref{lem:vc-cor} and a union bound over rounds.

\xhdr{Bounding exploitation regret.} 
Fix an exploitation round $t$, and recall the set of accurately estimated types 
\[
\cS=\{i\in\cI:\mathsf{width}(\cA_i,u_t)\le \eps\}.
\]
Write $v_1, \dots, v_{K_\star} \in [0,1]$ for the true values at round $t$. By our construction and the $\eps$-ball guarantee for \ProjectedVolume, $\cA_i$ returns a price that is an \(\eps\)-accurate estimate of $v_i$, for each $i \in \cS$. Moreover, by our analysis above, the mass on types outside of $\cS$ is quite small. We thus bound
\begin{align*}
    \dem_\star(p_t,u_t) &= \sum_{i=1}^{K_\star} D_\star(i) \mathds{1}\{ v_i \geq p_t \}\\
    &\geq \sum_{i \in \cS} D_\star(i) \mathds{1}\{ v_i \geq p_t \} -\widetilde{O}\left(K_\star \sqrt{d/T}\right) \tag{Eq.\ \eqref{eq:rare-type-bd}}\\
    &\geq \sum_{i \in \cS} \frac{n_t(i)}{t-1} \mathds{1}\{ v_i \geq p_t \} - \widetilde{O}\left(K_\star \sqrt{d/T}\right) - O\bigl(\sqrt{K_\star\log(T)/t}\bigr) \tag{TV bound}\\
    &\geq \sum_{i \in \cS} \frac{n_t(i)}{t-1} \mathds{1}\{ v_i \geq \price(\cA_{i^\star},u_t) - \eps \} - \widetilde{O}\left(K_\star \sqrt{d\log(T)/t}\right) \tag{choice of $p_t$}\\
    &\geq \sum_{i \in \cS} \frac{n_t(i)}{t-1} \mathds{1}\{ \price(\cA_{i},u_t) \geq \price(\cA_{i^\star},u_t) \} - \widetilde{O}\left(K_\star \sqrt{d\log(T)/t}\right) \tag{$i \in \cS$}\\
    &= F(i_\star) - \widetilde{O}\left(K_\star \sqrt{d\log(T)/t}\right). \tag{choice of $F$}
\end{align*}
Consequently, we bound $\rev_\star(p_t,u_t) + \widetilde{O}\left(K_\star \sqrt{d\log(T)/t}\right)$ from below by
\begin{align*}
    p_t F(i^*) &\geq \price(\cA_{i*},u_t) F(i^*) - \eps\\
    &= \max_{j \in S} \price(\cA_{j},u_t) F(j) - \eps \tag{choice of $i^*$}\\
    &= \max_{p \in [0,1]} p \sum_{i \in \cS} \frac{n_t(i)}{t-1} \mathds{1}\{ \price(\cA_{i},u_t) \geq p \} - \eps \tag{rev.\ maximized at jump}\\
    &\geq \max_{p \in [0,1]} p \sum_{i \in \cS} D_\star(i) \mathds{1}\{ \price(\cA_{i},u_t) \geq p \} - \eps - \widetilde{O}\left(K_\star \sqrt{d\log(T)/t}\right) \tag{TV bound}\\
    &\geq \max_{p \in [0,1]} p \sum_{i \in \cS} D_\star(i) \mathds{1}\{ v_i \geq p + \eps \} - \eps - \widetilde{O}\left(K_\star \sqrt{d\log(T)/t}\right) \tag{$i \in S$}\\
    &\geq \max_{p \in [0,1]} p \sum_{i =1}^{K_\star} D_\star(i) \mathds{1}\{ v_i \geq p + \eps \} - \eps - \widetilde{O}\left(K_\star \sqrt{d\log(T)/t}\right) \tag{Eq.\ \eqref{eq:rare-type-bd}}\\
    &= \max_{p \in [0,1]} p \,\dem_\star(p+\eps,u_t) - \eps - \widetilde{O}\left(K_\star \sqrt{d\log(T)/t}\right) \\
    &= \max_{p \in [0,1]} \rev_\star(p+\eps,u_t) - 2\eps - \widetilde{O}\left(K_\star \sqrt{d\log(T)/t}\right) \\
    &= \max_{p \in [0,1]} \rev_\star(p,u_t) - 3\eps - \widetilde{O}\left(K_\star \sqrt{d\log(T)/t}\right).
\end{align*}
All together, we see that playing $p_t$ incurs regret at most $\widetilde{O}\left(K_\star \sqrt{d\log(T)/t}\right)$. Summing over exploitation rounds and adding the exploration regret gives a total bound of
\begin{equation*}
    R(T) = \sum_{t=1}^T\widetilde{O}\left(K_\star \sqrt{d\log(T)/t}\right) + \widetilde{O}(\sqrt{dT}) = \widetilde{O}(K_\star \sqrt{dT}),
\end{equation*}
as desired.\qed

\subsection{Observed Type Vectors (Proof of Theorem~\ref{thm:knloc_unknmas})}
\label{sec:knloc_unknmas}

We first show that $\rev_{\hat{D}_\tau}$ concentrates tightly around $\rev_\star$, using a simple VC bound.

\begin{lemma}
\label{lem:demand-concentration}
Fix $D \in \Delta_K(\Theta)$ and let $\hat{D}_t$ be the empirical measure of $t$ i.i.d.\ samples from $D$. We then have
\begin{equation*}
    \sup_{p \in [0,1], u \in \cU} |\rev_D(p,u) - \rev_{\hat{D}_t}(p,u)| = O\left(\sqrt{\frac{\min\{K,d\} \log(t) + \log(1/\delta)}{t}}\right)
\end{equation*}
with probability at least $1 - \delta$.
\end{lemma}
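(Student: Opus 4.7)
The plan is to reduce the revenue deviation to a demand deviation and then control the latter by two complementary uniform-convergence bounds, one scaling with $d$ and one scaling with $K$; taking whichever is smaller yields $\min\{K,d\}$.

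First, since $p \in [0,1]$ and $\rev_D(p,u) = p\cdot\dem_D(p,u)$, we have
\begin{equation*}
    |\rev_D(p,u) - \rev_{\hat{D}_t}(p,u)| \;\leq\; |\dem_D(p,u) - \dem_{\hat{D}_t}(p,u)| \qquad\forall\, p\in[0,1],\, u\in\cU,
\end{equation*}
so it suffices to uniformly bound the demand deviation. Recalling that $\dem_D(p,u) = \PP_{\theta \sim D}[\langle \theta,u\rangle \ge p]$, we view this as an empirical process over the class of indicators
\begin{equation*}
    \cH \;=\; \bigl\{\, \theta \mapsto \mathds{1}\{\langle \theta,u\rangle \ge p\} \,:\, u\in\cU,\, p\in[0,1] \,\bigr\}.
\end{equation*}

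For the $d$-dependent bound, $\cH$ is a subclass of affine half-spaces on $\R^d$, whose VC dimension is at most $d+1$ by \cref{lem:vc-cor}. Applying \cref{lem:vc} to $\cH$ with $D$ as the underlying distribution yields
\begin{equation*}
    \sup_{p,u} |\dem_D(p,u) - \dem_{\hat{D}_t}(p,u)| \;=\; O\!\left(\sqrt{\tfrac{d\log(t/d) + \log(1/\delta)}{t}}\right)
\end{equation*}
with probability at least $1-\delta$.

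For the $K$-dependent bound, I use the fact that $D$ is supported on at most $K$ points. Identifying $\supp(D)$ with $[K']$ for some $K' \le K$, the second part of \cref{lem:vc-cor} gives $\|D - \hat{D}_t\|_\TV = O(\sqrt{(K\log(t/K) + \log(1/\delta))/t})$ with probability at least $1-\delta$; here $\hat{D}_t$ is supported on the same at-most-$K$ atoms with high probability, so this total variation estimate applies verbatim. Since every demand value $\dem_D(p,u)$ is the $D$-expectation of the bounded indicator $\mathds{1}\{\langle \theta,u\rangle \ge p\}$, we get
\begin{equation*}
    \sup_{p,u} |\dem_D(p,u) - \dem_{\hat{D}_t}(p,u)| \;\leq\; \|D - \hat{D}_t\|_\TV.
\end{equation*}

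Taking the minimum of these two bounds gives the claimed rate. The only mild subtlety is that the $\cU$-index runs over an infinite set, so an ad hoc union bound on contexts does not work; this is exactly what VC theory buys, and the main step is recognizing $\cH$ as a halfspace class.
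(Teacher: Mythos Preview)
Your proof is correct and follows essentially the same approach as the paper: reduce the revenue deviation to a demand deviation, then control the latter via VC theory using both the halfspace VC bound (giving the $d$ dependence) and the finite-support/TV bound (giving the $K$ dependence). Two minor wording nits: the TV bound is the consequence of the \emph{first} part of \cref{lem:vc-cor} (the $\{0,1\}^{\cX}$ statement), not the second, and $\hat{D}_t$ is \emph{always} supported on $\supp(D)$ since it is an empirical measure of samples from $D$---no ``with high probability'' qualifier is needed.
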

\begin{proof}
We compute
\begin{align*}
    \sup_{p \in [0,1], u \in \cU} |\rev_D(p,u) - \rev_{\hat{D}_t}(p,u)| &\leq \sup_{p \in [0,1], u \in \cU} |\dem_D(p,u) - \dem_{\hat{D}_t}(p,u)|\\
    &= \sup_{f \in \cF} \left|\E_{\theta \sim D}[f(\theta)] - \E_{\theta \sim \hat{D}_t}[f(\theta)]\right|,
\end{align*}
where $\cF$ is the space of linear threshold functions $f_{p,u} : \supp(D) \to \{0,1\}$ given by $f_{p,u}(\theta) = \mathds{1}\{\langle u,\theta \rangle \geq p \}$. The result then follows by Lemma~\ref{lem:vc-cor}.
\end{proof}

Now, our best response policy ensures that, at each round $t > 1$, we have
\begin{align*}
    \rev_\star(p_t,u_t) &= \rev_{\hat{D}_{t-1}}(p_t, u_t) + \rev_{\star}(p_t,u_t) - \rev_{\hat{D}_{t-1}}(p_t, u_t)\\
    &\geq \rev_{\hat{D}_{t-1}}(\br_\star(u_t), u_t) - \sup_{p \in [0,1], u \in \cU} |\rev_{\star}(p,u) - \rev_{\hat{D}_{t-1}}(p, u)|\\
    &\geq \rev_{\star}(\br_\star(u_t), u_t) - 2\sup_{p \in [0,1], u \in \cU} |\rev_{\star}(p,u) - \rev_{\hat{D}_{t-1}}(p, u)|.
\end{align*}
Consequently, regret is at most
\begin{align*}
    R(T) &\leq 1 + 2\sum_{t=1}^{T-1} \sup_{p \in [0,1], u \in \cU} |\rev_{\star}(p,u) - \rev_{\hat{D}_{t}}(p, u)|\\
    &= O\left(\sqrt{\min\{K_\star,d\}}\sum_{t=1}^{T-1} \sqrt{\frac{ \log (t)}{t}}\right) \tag{Lemma~\ref{lem:demand-concentration} with $\delta = t^{-2}$}\\
    &= \widetilde{O}(\sqrt{\min\{K_\star,d\} T}),
\end{align*}
as desired.\qed

\end{document}